\let\proof\relax
\let\endproof\relax
\newtheorem{definition}{Definition}
\newtheorem{assumption}{Assumption}
\newtheorem{theorem}{Theorem}
\newtheorem{corollary}{Corollary}
\newtheorem{lemma}{Lemma}
\newtheorem{proposition}{Proposition}
\newtheorem{remark}{Remark}
\theoremstyle{remark}
\newcommand{\ra}{\rightarrow}
\newcommand{\mb}[1]{\mathbb{#1}}
\newcommand{\mc}[1]{\mathcal{#1}}
\newcommand{\rr}{\mathbb{R}}
\newcommand{\ee}{\mathbb{E}}
\newcommand{\pp}{\mathbb{P}}
\newcommand{\nn}{\mathbb{N}}
\renewcommand{\epsilon}{\varepsilon}
\newcommand{\abs}[1]{\left| #1 \right|}
\newcommand{\norm}[1]{\left\| #1 \right\|}
\newcommand{\e}[1]{\ee\left[#1\right]}
\newcommand{\inner}[1]{\left\langle #1 \right\rangle}
\newcommand{\floor}[1]{\left\lfloor #1 \right\rfloor}
\renewcommand{\l}{\left(}
\renewcommand{\r}{\right)}
\renewcommand{\L}{\left[}
\newcommand{\R}{\right]}
\newcommand{\eqa}{\stackrel{(a)}{=}}
\newcommand{\eqb}{\stackrel{(b)}{=}}
\newcommand{\eqc}{\stackrel{(c)}{=}}
\newcommand{\indicator}{\mathbbm{1}}
\newcommand{\grad}{\nabla}
\newcommand{\ol}{\overline}
\renewcommand{\ul}{\underline}
\newcommand{\ut}{\undertilde}
\newcommand{\ot}{\tilde}
\newcommand{\T}{^\intercal}
\newcommand\defeq{\stackrel{\mathclap{\normalfont\tiny\mbox{def}}}{=}}
\newcommand{\wmst}[1]{{{}}}
\DeclareMathOperator*{\argmin}{argmin} 
\newcommand{\amax}{A_{\max}}
\renewcommand{\texttt}[1]{{\fontfamily{lmtt}\selectfont#1}}
\title{On Improving Model-Free Algorithms for Decentralized \\Multi-Agent Reinforcement Learning\footnotetext{Research of W.M.\@ and T.B.\@ was supported in part by the ONR MURI Grant N00014-16-1-2710 and in part by the IBM-Illinois Discovery Accelerator Institute. We thank Zihan Zhang and Chen-Yu Wei for the helpful discussions and feedback. }
}
\def\@fnsymbol#1{\ensuremath{\ifcase#1\or  \natural \or \dagger\or * \or \ddagger\or
		\mathsection\or \mathparagraph\or \|\or **\or \dagger\dagger
		\or \ddagger\ddagger \else\@ctrerr\fi}}
\author{\normalsize Weichao Mao\thanks{Department of Electrical and Computer Engineering \& Coordinated Science Laboratory, University of Illinois Urbana-Champaign. Email addresses: \{weichao2, basar1\}@illinois.edu}  \qquad \quad Lin F. Yang\thanks{Department of Electrical and Computer Engineering, University of California, Los Angeles. Part of this work done while the author was visiting DeepMind. Email address:  linyang@ee.ucla.edu} \qquad \quad Kaiqing Zhang\thanks{Laboratory for Information \& Decision Systems, Massachusetts Institute of Technology. Email address: kaiqing@mit.edu} \qquad \quad Tamer Ba\c{s}ar$^\natural$ }
\date{\normalsize }
\begin{document}

\maketitle

\begin{abstract}
	Multi-agent reinforcement learning (MARL) algorithms often suffer from an exponential sample complexity dependence on the number of agents, a phenomenon known as \emph{the curse of multiagents}. In this paper, we address this challenge by investigating sample-efficient model-free algorithms in \emph{decentralized} MARL, and aim to improve existing algorithms along this line. For learning (coarse) correlated equilibria  in general-sum Markov games, we propose \emph{stage-based} V-learning algorithms that significantly simplify the algorithmic design and analysis of recent works, and circumvent a rather complicated no-\emph{weighted}-regret bandit subroutine. For learning Nash equilibria in Markov potential games, we propose an independent policy gradient algorithm with a decentralized momentum-based variance reduction technique. All our algorithms are decentralized in that each agent can make decisions based on only its local information. Neither communication nor centralized coordination is required during learning, leading to a natural generalization to a large number of agents. We also provide numerical simulations to corroborate our theoretical findings. 
\end{abstract}

\section{Introduction}\label{sec:intro}
Many real-world sequential decision-making problems involve the strategic interactions of multiple agents in a shared environment, which are commonly addressed with multi-agent reinforcement learning (MARL). Successful applications of MARL include playing the game of Go~\citep{silver2016mastering}, Poker~\citep{brown2018superhuman}, real-time strategy  games~\citep{vinyals2019grandmaster}, autonomous driving~\citep{shalev2016safe}, and robotics~\citep{kober2013reinforcement}. 

Despite the empirical successes, sample-efficient solutions are still relatively lacking for MARL with a large number of agents, mostly due to the well-known challenge named \emph{the curse of multiagents}~\citep{jin2021v}: The joint action space in a MARL problem is equal to the Cartesian product of the individual action spaces of all agents, which scales exponentially in the number of agents. A typical kind of algorithms that easily fail at this challenge are those using centralized/joint learning~\citep{boutilier1996planning,claus1998dynamics}. Specifically, centralized learning assumes the existence of a single coordinator who can access the local information of all the agents, and learns policies jointly for all of them. This centralized training (though possibly decentralized execution) approach has become a common practice in empirical MARL \citep{oliehoek2008optimal,foerster2016learning,lowe2017multi,rashid2018qmix,son2019qtran,mao2020information}. Centralized learning essentially reduces the multi-agent problem to a single-agent one, but unfortunately suffers from the exponential dependence as it usually needs to exhaustively search the joint action space. 

Such a computation bottleneck can be partially resolved by allowing communications among the agents and hence distributing the workload to each of them~\citep{kar13,zhang2018fully,dubey2021provably}. However, communication-based methods instead suffer from the additional communication overheads, which can be unrealistic in some real-world scenarios where communication may be expensive and/or unreliable, such as in unmanned aerial vehicle (UAV) field coverage~\citep{pham2018cooperative}. 

Given the aforementioned limitations, in this paper, we are interested in a more practical setting: \emph{decentralized} learning\footnote{This setting has been studied under various names in the literature, including individual learning \citep{leslie2005individual}, decentralized learning~\citep{arslan2016decentralized}, agnostic learning~\citep{tian2020provably,wei2021last}, and independent learning~\citep{claus1998dynamics,daskalakis2020independent}. It also belongs to a more general  category of teams/games with decentralized information structure~\citep{ho1980team,nayyar2013common,nayyar2013decentralized}.}. We focus on solutions where each agent can make decisions based on only its local information (e.g., local actions and rewards), and need not communicate with its opponents or be coordinated by any central controller during learning. In fact, in our algorithms, the agents can be completely oblivious to the presence of other agents. Under such weak assumptions, decentralized algorithms are suitable for many practical MARL scenarios~\citep{fudenberg1998theory}, and do not suffer from the exponential sample \& computation complexity. Such algorithms are naturally model-free, as they do not maintain explicit estimates of the transition functions. Compared with model-based algorithms, model-free ones typically enjoy higher time- and space-efficiency, and are more compatible with the modern deep RL architectures~\citep{jin2018q,zhang2020almost}. 

In this paper, we investigate the theoretical aspects of decentralized MARL in the non-asymptotic regime. We address the curse of multiagents by presenting sample-efficient model-free algorithms that scale to a large number of agents, and aim to improve the existing algorithms along this line. Our main contributions are summarized as follows.

\vspace{.8em}
\noindent\textbf{Contributions.} 1) For general-sum Markov games (Section~\ref{sec:cce_ce}), we present algorithms that learn an $\epsilon$-approximate coarse correlated equilibrium (CCE) in $\widetilde{O}(H^5SA_{\max}/\epsilon^2)$ episodes, and an  $\epsilon$-approximate correlated equilibrium (CE) in $\widetilde{O}(H^5SA_{\max}^2/\epsilon^2)$ episodes, where $S$ is the number of states, $A_{\max}$ is the size of the largest individual action space, and $H$ is the length of an episode. Our algorithms rely on a novel \emph{stage-based} V-learning method that significantly simplifies the algorithmic design and analysis of recent works. 2) In the important special case of Markov potential games (MPGs, Section~\ref{sec:mpg}), we propose an independent policy gradient algorithm  that learns an $\epsilon$-approximate Nash equilibrium (NE) in $\propto \widetilde{O}(1/\epsilon^{4.5})$ episodes. Our algorithm utilizes a momentum-based variance reduction technique that can be executed in a decentralized way. 3) We further provide numerical results that corroborate our theoretical findings (Section~\ref{sec:simulations}). All our algorithms are decentralized and model-free, and readily generalize to a large number of agents.

\vspace{.8em}
\noindent\textbf{Related Work.} A common mathematical framework of MARL is stochastic games \citep{shapley1953stochastic}, which are also referred to as Markov games. Early attempts to learn NE in Markov games include \citet{littman1994markov,littman2001friend,hu2003nash,hansen2013strategy}, but they either assume the transition kernel and rewards are known, or only yield asymptotic guarantees. Recently, various sample-efficient methods have been proposed \citep{wei2017online,bai2020provable,sidford2020solving,xie2020learning,bai2020near,liu2020sharp,zhao2021provably,guo2021decentralized}, mostly for learning in two-player zero-sum Markov games. Several works have investigated zero-sum games in a \emph{decentralized} setting as we consider here \citep{daskalakis2020independent,tian2020provably,wei2021last,sayin2021decentralized},  but these results do not carry over in any way to general-sum games or MPGs. We refer the reader to Appendix~\ref{app:related} for a more detailed discussion on these related works. 

For general-sum games, \citet{rubinstein2016settling} has shown a sample complexity lower bound for learning NE that is exponential in the number of agents. Recently, \citet{liu2020sharp} has presented a line of results on learning NE, CE, or CCE, but their algorithm is model-based, and suffers from such exponential dependence. \citet{song2021can,jin2021v,mao2022provably} have proposed V-learning based methods for learning CCE and/or CE, and our stage-based V-learning significantly simplifies the algorithmic design and analysis along this line. Learning CE and CCE has also been extensively studied in normal-form games with no state transitions~\citep{hart2000simple,cesa2006prediction,blum2007external}.

Another line of research~\citep{macua2018learning,mguni2021learning} has considered learning in Markov potential games. \citet{arslan2016decentralized} has shown that decentralized Q-learning can converge to NE in weakly acyclic games, which cover potential games as a special case. Their algorithm requires a coordinated exploration phase, and only yields asymptotic guarantees. Two recent works \citep{zhang2021gradient,leonardos2021global} have proposed independent policy gradient methods in MPGs, which are most relevant to ours. We improve their sample complexity dependence on $\epsilon$ by utilizing decentralized variance reduction, and we do not require the two-timescale framework to coordinate policy evaluation as in \citet{zhang2021gradient}. \citet{fox2021independent} has shown that independent natural policy gradient also converges to NE, though only  asymptotic convergence has been established. Finally, MPGs have also been studied in \citet{song2021can}, but their model-based method is not decentralized, and requires the agents to take turns to learn the policies.

\section{Preliminaries}\label{sec:preliminaries}
An $N$-player episodic Markov game is defined by a tuple $(\mc{N}, H, \mc{S},\{\mc{A}_i\}_{i=1}^N, \{r_i\}_{i=1}^N, P)$, where (1) $\mc{N} = \{1,2,\dots,N\}$ is the set of agents; (2) $H\in\mb{N}_+$ is the number of time steps in each episode; (3) $\mc{S}$ is the finite state space; (4) $\mc{A}_i$ is the finite action space for agent $i\in\mc{N}$; (5) $r_i:[H]\times \mc{S}\times \mc{A} \ra [0,1]$ is the reward function for agent $i$, where $\mc{A} = \times_{i=1}^N \mc{A}_i$ is the joint action (or action profile) space; and (6) $P: [H]\times \mc{S}\times \mc{A} \ra \Delta(\mc{S})$ is the transition kernel. We remark that both the reward function and the state transition function depend on the joint actions of all the agents. We assume for simplicity that the reward function is deterministic. Our results can be easily generalized to stochastic reward functions. Let $S = |\mc{S}|$, $A_i = |\mc{A}_i|,\forall i\in\mc{N}$, and $A_{\max} = \max_{i\in\mc{N}}A_i$.

The agents interact in an unknown environment for $K$ episodes. We assume that the initial state $s_1$ of the environment follows a fixed distribution $\rho\in\Delta(\mc{S})$. At each time step $h\in[H]$, the agents observe the state $s_h \in \mc{S}$, and take actions $a_{h,i} \in\mc{A}_i, i\in\mc{N}$ simultaneously.  Agent $i$ then receives its private reward $r_{h,i}(s_h,\bm{a}_h)$, where $\bm{a}_h =  (a_{h,1},\dots, a_{h,N}) $, and the environment transitions to the next state $s_{h+1}\sim P_h(\cdot | s_h,\bm{a}_h)$. Note that the state transition here is general and not restricted to be deterministic. This makes decentralized learning considerably more challenging, as the agents cannot implicitly coordinate by enumerating/rehearsing all possible states. We focus on the \emph{decentralized} setting, where each agent only observes the states and its own rewards and actions, but not the rewards or actions of the other agents. In fact, in our algorithms, each agent is completely oblivious of the existence of the others, and does not communicate with each other. This decentralized information structure requires each agent to learn to make decisions based on only its local information. 

\vspace{.8em}
\noindent\textbf{Policy and value function.} A (Markov) policy $\pi_i:[H]\times \mc{S}\ra \Delta(\mc{A}_i)$ for agent $i\in\mc{N}$ is a mapping from the time index and state space to a distribution over its own action space. We use $\Pi_i$ to denote the space of Markov policies for agent $i$, and let $\Pi = \times_{i=1}^N \Pi_i$. Each agent seeks to find a policy that maximizes its own cumulative reward.  A joint policy (or policy profile) $\pi = (\pi_1,\dots,\pi_N)$ induces a probability measure over the sequence of states and joint actions. For notational convenience, we use the subscript $-i$ to denote the set of agents excluding agent $i$, i.e., $\mc{N}\backslash \{i\}$. For example, we can rewrite $\pi = (\pi_i, \pi_{-i})$ using this convention. For a policy profile $\pi$, and for any $h\in[H]$, $s \in \mc{S}$, and $a\in \mc{A}$, we define the value function and the state-action value function (or $Q$-function) for agent $i$ as follows: 
\begin{align}
V_{h,i}^\pi (s) &\defeq \ee_\pi \bigg[ \sum_{h'=h}^{H}r_{h',i}(s_{h'},\bm{a}_{h'})\mid s_h = s \bigg],\label{eqn:value} \\
Q_{h,i}^\pi (s,\bm{a}) &\defeq \ee_\pi \bigg[ \sum_{h'=h}^{H}r_{h',i}(s_{h'},\bm{a}_{h'})\mid s_h = s, \bm{a}_h = \bm{a} \bigg]. \nonumber
\end{align}
For ease of notation, we also write $V_{h,i}^{(\pi_i,\pi_{-i})}(s)$ as $V_{h,i}^{\pi_i,\pi_{-i}}(s)$, and similarly for $Q_{h,i}^{(\pi_i,\pi_{-i})}(s,a)$.

\vspace{.8em}
\noindent\textbf{Best response and Nash equilibrium.}
 For agent $i$, a policy $\pi_i^{\star}$ is a \emph{best response} to $\pi_{-i}$ for a given initial state $s_1$ if $V_{1,i}^{\pi_i^{\star}, \pi_{-i}}(s_1) = \sup_{\pi_i} V_{1,i}^{\pi_{i }, \pi_{-i}}(s_1)$. A policy profile $\pi = (\pi_{i}, \pi_{-i })\in\Pi $ is a \emph{Nash equilibrium} (NE) if $\pi_{i }$ is a best response to $\pi_{-i }$ for all $i\in\mc{N}$.  We also have an approximate notion of Nash equilibrium as follows:

\begin{definition}\label{def:approxNE} ($\epsilon$-approximate Nash equilibrium).
	For any $\epsilon>0$, a policy profile $\pi = (\pi_{i}, \pi_{-i })\in\Pi$ is an $\epsilon$-approximate Nash equilibrium for an initial state $s_1$ if $V_{1,i}^{\pi_{i }, \pi_{-i}}(s_1) \geq \sup_{\pi_{i'}} V_{1,i}^{\pi_{i'}, \pi_{-i}}(s_1) - \epsilon$, $\forall i\in\mc{N}$. 
\end{definition}

\vspace{.8em}
\noindent\textbf{Markov potential game.} One particular subclass of games that we are interested in is the Markov potential game. Specifically, an episodic Markov game is an MPG if there exists a global potential function $\Phi_s: \Pi \ra [0, \Phi_{\max}]$ for every initial state $s\in\mc{S}$, such that for any $i\in\mc{N}$, any $\pi_i,\pi_{i'}\in\Pi_i$, and any $\pi_{-i}\in\Pi_{-i}$,
\begin{equation}\label{eqn:potential}
\Phi_s(\pi_i,\pi_{-i}) - \Phi_s(\pi_{i'},\pi_{-i}) = V_{1,i}^{\pi_i,\pi_{-i}}(s) - V_{1,i}^{\pi_{i'},\pi_{-i}}(s).
\end{equation}
Our definition of MPG follows \citet{song2021can}, which in turn is a variant of the definitions introduced in \citet{macua2018learning,leonardos2021global,zhang2021gradient}. It follows immediately that MPGs cover Markov teams~\citep{lauer2000algorithm} as a special case, a cooperative setting where all agents share the same reward function. 

\vspace{.8em}
\noindent\textbf{Correlated policy.}
More generally, we define $\pi =\{\pi_{h}: \rr \times (\mc{S}\times \mc{A})^{h-1}\times\mc{S}\ra \Delta(\mc{A})\}_{h\in[H]}$ as a (non-Markov) \emph{correlated policy}, where for each $h\in[H]$, $\pi_{h}$ maps from a random variable $z\in\rr$ and a history of length $h-1$ to a distribution over the joint action space. We assume that the agents following a correlated policy can access a common source of randomness (e.g., a common random seed) for the random variable $z$. We let $\pi_i$ and $\pi_{-i}$ be the proper marginal policies of $\pi$ whose outputs are restricted to $\Delta(\mc{A}_i)$ and $\Delta(\mc{A}_{-i})$, respectively. 

For non-Markov correlated policies, we can still define their value functions at step $h=1$ in a sense similar to~\eqref{eqn:value}. A best response $\pi_i^{\star}$ with respect to the non-Markov policies $\pi_{-i}$ is a policy (independent of the randomness of  $\pi_{-i}$) that maximizes agent $i$'s value at step 1, i.e., $V_{1,i}^{\pi_i^{ \star}, \pi_{-i}}(s_1) = \sup_{\pi_i} V_{1,i}^{\pi_{i }, \pi_{-i}}(s_1)$. The best response to the non-Markov policies of the opponents is not necessarily Markov. 

\vspace{.8em}
\noindent\textbf{(Coarse) correlated equilibrium.}
Given the PPAD-hardness of calculating Nash equilibria in general-sum games~\citep{daskalakis2009complexity}, we introduce two relaxed solution concepts, namely coarse correlated equilibrium (CCE) and correlated equilibrium (CE). A CCE states that no agent has the incentive to deviate from a correlated policy $\pi$ by playing a different independent policy.

\begin{algorithm*}[!tbp]
	\textbf{Initialize:} $\overline{V}_{h,i}(s) \gets H-h+1, \ot{V}_{h,i}(s) \gets H-h+1, N_h(s)\gets 0, \check{N}_h(s)\gets 0, \check{r}_{h,i}(s)\gets 0,\allowbreak \check{v}_{h,i}(s)\gets 0,\allowbreak \check{T}_h(s)\gets H, \mu_{h,i}(a\mid s)\gets 1/A_i$, and $L_{h,i}(s,a)\gets 0$, $\forall h\in[H],s\in\mc{S},a\in\mc{A}_i$. 
	
	\For{episode $k\gets 1$ to $K$}
	{
		Receive $s_1$\;
		\For{step $h\gets 1$ to $H$}
		{
			$ N_h(s_h) \gets N_h(s_h) + 1,  \check{n}\defeq \check{N}_h(s_h) \gets \check{N}_h(s_h) + 1$\;
			Take action $a_{h,i} \sim \mu_{h,i}(\cdot \mid s_h)$, and observe reward $r_{h,i}$ and next state $s_{h+1}$\;
			$\check{r}_{h,i}(s_h)\gets \check{r}_{h,i}(s_h) + r_{h,i}, \check{v}_{h,i}(s_h)\gets \check{v}_{h,i}(s_h) + \overline{V}_{h+1,i}(s_{h+1}) $\;
			$\eta_i \gets \sqrt{\iota/A_i \check{T}_h(s_h)}, \gamma_i \gets \eta_i / 2$\;
			$L_{h,i}(s_h,a_{h,i}) \gets L_{h,i}(s_h,a_{h,i})+ \frac{[H-h+1-(r_{h,i} + \overline{V}_{h+1,i}(s_{h+1}))]/H}{\mu_{h,i}(a_{h,i} \mid s_h)+\gamma_i}$\label{line:9}\;
			$\mu_{h,i}(a\mid s_h)\gets \frac{\exp(-\eta_i L_{h,i}(s_h,a))}{\sum_{a'\in\mc{A}_i} \exp(-\eta_i L_{h,i}(s_h,a')) },\forall a\in\mc{A}_i$\label{line:10}\;
			\If{$N_h(s_h)\in\mc{L}$\label{line:11}}
			{
				\texttt{//Entering a new stage}
				
				$\ot{V}_{h,i}(s_h)\gets  \frac{\check{r}_{h,i}(s_h)}{\check{n}} + \frac{\check{v}_{h,i}(s_h)}{\check{n}}+b_{\check{n}}$, where 
				$b_{\check{n}}\gets 6\sqrt{H^2 A_i\iota / \check{n}}$\label{line:1}\;
				$\ol{V}_{h,i}(s_h)\gets \min\{\ot{V}_{h,i}(s_h),H-h+1\}$\;
				$\check{N}_h(s_h)\gets 0, \check{r}_{h,i}(s_h)\gets 0, \check{v}_{h,i}(s_h)\gets 0, \check{T}_h(s_h)\gets \floor{(1+\frac{1}{H})\check{T}_h(s_h)}$\;
				$\mu_{h,i}(a\mid s_h)\gets 1/A_i, L_{h,i}(s_h,a)\gets 0, \forall a\in\mc{A}_i$\label{line:16}\;
			}
		}
	}
	\caption{Stage-Based V-Learning for CCE (agent $i$)}\label{alg:sbv}
\end{algorithm*}

\begin{definition}\label{def:CCE}
	(CCE). A correlated policy $\pi$ is an $\epsilon$-approximate coarse correlated equilibrium for an initial state $s_1$ if
	$
	V_{1,i}^{\pi_i^{ \star}, \pi_{-i}}(s_1) - V_{1,i}^{\pi}(s_1) \leq \epsilon,\forall i\in\mc{N}.
	$
\end{definition}

CCE relaxes NE by allowing possible correlations in the policies. Before introducing the definition of CE, we need to first specify the concept of a strategy modification. 

\begin{definition}
	(Strategy modification). For agent $i$, a strategy modification $\psi_i = \{\psi_{h,i}^s:h\in[H],s\in\mc{S}\}$ is a set of mappings from agent $i$'s action space to itself, i.e., $\psi_{h,i}^s:\mc{A}_i\ra \mc{A}_i$. 
\end{definition}

Given a strategy modification $\psi_i$, for any policy $\pi$, step $h$ and state $s$, if $\pi$ selects the joint action $\bm{a}_h = (a_{h,1},\dots,a_{h,N})$, then the modified policy $\psi_i \diamond \pi$ will select $(a_{h,1},\dots,a_{h,i-1}, \psi_{h,i}^s(a_{h,i}),a_{h,i+1},\dots,a_{h,N})$. 
Let $\Psi_i$ denote the set of all possible strategy modifications for agent $i$. A CE is a distribution where no agent has the incentive to deviate from a correlated policy $\pi$ by using any strategy modification. It is known that \{NE\}$\subset$\{CE\}$\subset$\{CCE\} in general-sum games~\citep{nisan2007algorithmic}. 

\begin{definition}
	(CE). A correlated policy $\pi$ is an $\epsilon$-approximate correlated equilibrium for initial state $s_1$ if 
	\[
	\sup_{\psi_i\in \Psi_i} V_{1,i}^{\psi_i\diamond \pi}(s_1) - V_{1,i}^{\pi}(s_1) \leq \epsilon,\forall i\in\mc{N}.
	\]
\end{definition}

\section{Stage-Based V-Learning for General-Sum Markov Games}\label{sec:cce_ce}

In this section, we introduce our stage-based V-learning algorithms for learning CCE and CE in general-sum Markov games, and establish their sample complexity guarantees.

\subsection{Learning CCE}\label{subsec:cce}
The Stage-Based V-Learning for CCE algorithm run by agent $i\in\mc{N}$ is presented in Algorithm~\ref{alg:sbv}. The agent maintains upper confidence bounds on the value functions to actively explore the unknown environment, and uses a stage-based rule to independently update the value estimates. 

For each step-state pair $(h,s)\in[H]\times\mc{S}$, we divide the visitations to this pair into multiple \emph{stages}, where the lengths of the stages increase exponentially at a rate of $(1+1/H)$ \citep{zhang2020almost}. Specifically, we let $e_1 = H$, and $e_{i+1} = \lfloor (1+1/H) e_i\rfloor, i\geq 1$ denote the lengths of the stages, and let the partial sums $\mc{L}\defeq \{ \sum_{i=1}^{j} e_i \mid j = 1,2,3,\dots \}$ denote the set of ending times of the stages.  For each $(h,s)$ pair, we update our optimistic estimates $\ol{V}_h(s_h)$ of the value function at the end of each stage (i.e., when the total number of visitations to $(s,h)$ lies in the set $\mc{L}$), using samples only from this single stage (Lines \ref{line:11}-\ref{line:16}). This way, our stage-based V-learning ensures that only the most recent $O(1/H)$ fraction of the collected samples are used to calculate $\ol{V}_h(s_h)$, while the first $1-O(1/H)$ fraction is forgotten. Such a stage-based update framework in some sense mimics the celebrated optimistic Q-learning algorithm with a learning rate of $\alpha_t = \frac{H+1}{H+t}$ \citep{jin2018q}, which also roughly uses the last $O(1/H)$ fraction of samples for value updates. Stage-based value updates also create a stage-wise
stationary environment for the agents, thereby partly alleviating the well-known challenge of \emph{non-stationarity} in MARL. As a side remark, stage-based Q-learning has also achieved near-optimal regret bounds in single-agent RL~\citep{zhang2020almost}.

At each time step $h$ and state $s_h$, agent $i$ selects its action $a_{h,i}$ by following a distribution $\mu_{h,i}(\cdot \mid s_h)$, where $\mu_{h,i}(\cdot \mid s_h)$ is updated using an adversarial bandit subroutine (Lines~\ref{line:9}-\ref{line:10}). This is consistent with the recent works under the V-learning framework~\citep{jin2021v,song2021can,mao2022provably}, but with a vital improvement: Existing works using the celebrated $\alpha_t = \frac{H+1}{H+t}$ learning rate for V-learning inevitably entail a no-\emph{weighted}-regret bandit problem, because such a time-varying learning rate assigns different weights to each step in the history. A few methods such as weighted follow-the-regularized-leader~\citep{jin2021v,song2021can} and stabilized online mirror descent~\citep{mao2022provably} have been recently proposed to address such a challenge, by simultaneously dealing with a changing step size, a weighted regret, and a high-probability guarantee, at the cost of less natural algorithms and more sophisticated analyses. In contrast, our stage-based V-learning assigns uniform weights to each step in the previous stage, and hence leads to a standard no(-average)-regret bandit problem. This allows us to directly plug in any off-the-shelf adversarial bandit algorithm and its analysis to our problem. For example, Algorithm~\ref{alg:sbv} utilizes a simple Exp3~\citep{auer2002nonstochastic} subroutine for policy updates, and a standard implicit exploration technique~\citep{neu2015explore} to achieve high-probability guarantees. We provide a more detailed discussion on such an improvement in Remark~\ref{rmk:learning_rate} of Appendix~\ref{app:cce}.

Based on the policy trajectories from Algorithm~\ref{alg:sbv}, we construct an output policy profile $\bar{\pi}$ that we will show is a CCE. For any step $h\in[H]$ of an episode $k\in[K]$ and any state $s\in\mc{S}$, we let $\mu_{h,i}^k(\cdot \mid s)\in\Delta(\mc{A}_i)$ be the distribution prescribed by Algorithm~\ref{alg:sbv} at this step.  Let $\check{N}_{h}^{k}(s)$ denote the value of $\check{N}_{h}(s)$ at the \emph{beginning} of the $k$-th episode. Our construction of the output policy is presented in Algorithm~\ref{alg:certify}, which follows the ``certified policies'' introduced in~\citet{bai2020near}. We further let the agents sample the episode indices using a common random seed\footnote{Such common randomness is also termed a correlation device, and is standard in decentralized learning \citep{bernstein2009policy,arabneydi2015reinforcement,zhang2019online}. Note that the correlation device is never used during the learning process to coordinate the exploration, but is simply used to synchronize the selection of the policies after they have been generated. A common random seed is generally considered as a mild assumption and does not break the decentralized paradigm. }, and hence the output policy is correlated by nature. Note that our stage-based update rule also simplifies the generating procedure of the output policy: In the original construction of \citet{bai2020near}, the certified policy plays a weighted mixture of $\{\mu_{h,i}^k(\cdot \mid s):k\in[K]\}$, while in Algorithm~\ref{alg:certify}, we only need to uniformly sample an episode index from the previous stage.

\begin{algorithm}[!tbp]
	\textbf{Input:} The distribution trajectory  specified by Algorithm~\ref{alg:sbv}: $\{\mu_{h,i}^k:i\in\mc{N},h\in[H],k\in[K]\}$;
	
	Uniformly sample $k$ from $[K]$\;
	\For{step $h\gets 1$ to $H$}
	{
		Receive $s_{h}$\;
		Take joint action $\bm{a}_{h}\sim \times_{i=1}^N\mu_{h,i}^{k}(\cdot \mid s_{h})$\;
		Uniformly sample $j$ from $\{1,2,\dots, \check{N}_{h}^{k}(s_{h})\}$\;
		Set $k\gets \check{l}_{h,j}^{k'}$, where  $\check{l}^{k}_{h,j}$ is the index of the episode such that state $s_{h}$ was visited the $j$-th time (among the total $\check{N}_{h}^{k}(s_{h})$ times) in the last stage\; 
	}
	\caption{Construction of the Output Policy $\bar{\pi}$}\label{alg:certify}
\end{algorithm}

The following theorem presents the sample complexity guarantee of Algorithm~\ref{alg:sbv} for learning CCE in general-sum Markov games. Our sample complexity bound improves over \citet{mao2022provably} and matches those established in \citet{song2021can,jin2021v}, while significantly simplifying their algorithmic design and analysis. The proof is deferred to Appendix~\ref{app:cce} due to space limitations.

\begin{theorem}\label{thm:cce}
	(Sample complexity of learning CCE). For any $p\in (0,1]$, set $\iota = \log(2NSA_{\max} KH/p)$, and let the agents run Algorithm~\ref{alg:sbv} for $K$ episodes with $K= O(S A_{\max}H^5 \iota/\epsilon^2)$. Then, with probability at least $1-p$, the output policy $\bar{\pi}$ of Algorithm~\ref{alg:certify} is an $\epsilon$-approximate CCE. 
\end{theorem}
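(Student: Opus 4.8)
The plan is to fix an agent $i\in\mc{N}$ and bound its CCE gap $V_{1,i}^{\pi_i^{\star},\bar\pi_{-i}}(s_1)-V_{1,i}^{\bar\pi}(s_1)$ by inserting the episode-averaged optimistic estimate $\tfrac1K\sum_k\ol V_{1,i}^k(s_1)$ as an intermediate quantity: I will show that this average upper-bounds the best-response value on the left, while its gap to the output value $V_{1,i}^{\bar\pi}$ on the right is controlled by the accumulated exploration bonus and bandit regret. The first ingredient is a per-$(h,s)$ regret bound for the Exp3-with-implicit-exploration subroutine of Lines~\ref{line:9}--\ref{line:10}. Here the stage-based design pays off: since the update within a stage weights every visit equally, the losses fed to the learner form a standard (unweighted) adversarial-bandit instance, so I can directly invoke the textbook high-probability Exp3 regret bound of order $\sqrt{A_i\,\check n\,\iota}$ over a stage of length $\check n$, avoiding the weighted-regret machinery of \citet{jin2021v,mao2022provably}. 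Because the losses $\ell_{h,i}(s,a)=[H-h+1-(r_{h,i}+\ol V_{h+1,i})]/H\in[0,1]$ are rescaled by $1/H$, translating this back to value scale costs a factor $H$ per stage.

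Next I would prove two matching bounds. (i) \emph{Optimism}: by downward induction on $h$, the estimate $\ol V_{h,i}$, averaged over the episodes in which it is active, upper-bounds the value of agent $i$'s best response against the opponents' realized play, yielding $\tfrac1K\sum_{k}\ol V_{1,i}^k(s_1)\ge \max_{\pi_i'}V_{1,i}^{\pi_i',\bar\pi_{-i}}(s_1)$. The inductive step needs the bonus $b_{\check n}=6\sqrt{H^2A_i\iota/\check n}$ to dominate both the deviation of the empirical next-value average $\check v_{h,i}/\check n$ from its conditional mean (a Freedman/Azuma martingale term, whence the $H^2$ inside the bonus, since each summand is $O(H)$) and the bandit regret incurred by playing $\mu_{h,i}$ instead of the best fixed action. (ii) \emph{Certified value}: unrolling the recursive definition of Algorithm~\ref{alg:certify} shows that $V_{h,i}^{\bar\pi}$ telescopes exactly into the same stage-averaged reward-plus-continuation quantities that define $\ot V_{h,i}$ in Line~\ref{line:1}, so that $\delta_h\defeq \ol V_{h,i}-V_{h,i}^{\bar\pi}$ obeys a one-step recursion $\delta_h\lesssim(1+1/H)\,\overline{\delta_{h+1}}+b_{\check n}+(\text{bandit regret})+(\text{martingale})$, where $\overline{\delta_{h+1}}$ averages $\delta_{h+1}$ over the next stage.

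Unrolling this recursion over $h=1,\dots,H$ produces only a harmless multiplicative factor $(1+1/H)^H=O(1)$, and it remains to sum the bonus, regret, and martingale terms over all episodes. The crux is the stage-counting estimate: with stage lengths $e_j\approx H(1+1/H)^{j-1}$ growing geometrically, the bonus computed at the end of stage $j$ is reused across the $\approx e_{j}$ episodes of stage $j+1$, so for a single $(h,s)$ pair the bonus and regret each accumulate to $\sum_j H\sqrt{A_i\iota\,e_j}\approx H^{3/2}\sqrt{A_i\iota\,N_h(s)}$, using the geometric-sum bound $\sum_j\sqrt{e_j}=O(\sqrt{H\,N_h(s)})$. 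A Cauchy--Schwarz step over the $SH$ step-state pairs together with $\sum_{s\in\mc S}N_h(s)=K$ then gives a total accumulated gap of order $H^{5/2}\sqrt{SA_{\max}\iota\,K}$; dividing by $K$ and forcing the average below $\epsilon$ yields precisely $K=O(SA_{\max}H^5\iota/\epsilon^2)$. A union bound over all $(h,s)$ pairs, agents, and martingale events---absorbed into $\iota=\log(2NSA_{\max}KH/p)$---delivers the high-probability guarantee.

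The step I expect to be the main obstacle is the certified-value characterization linking the correlated output policy's value $V_{1,i}^{\bar\pi}(s_1)$ to the trajectory-based estimates appearing in the recursion. Because Algorithm~\ref{alg:certify} defines $\bar\pi$ recursively backward in time---playing $\mu_{h,i}^k$ and then jumping to a uniformly sampled in-stage episode---verifying that its value telescopes exactly into the stage-averaged estimates requires an induction on $h$ that carefully tracks the induced distribution over sampled episode indices. The stage-based design simplifies this relative to \citet{bai2020near}: within a stage the relevant weights are uniform rather than an $\alpha_t$-weighted mixture, which is exactly what lets Algorithm~\ref{alg:certify} use a plain uniform draw and keeps the recursion clean. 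The second delicate point is calibrating the bonus so that it is simultaneously large enough for the martingale concentration and optimism arguments yet small enough to keep the accumulated gap at the claimed $H^{5/2}\sqrt{SA_{\max}\iota K}$ order.
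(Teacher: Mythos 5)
Your proposal is correct and follows essentially the same route as the paper's proof: the per-stage Exp3-IX regret and Azuma concentration absorbed into the bonus (Lemma~\ref{lemma:bonus}), optimism of $\ol{V}_{h,i}^k$ over the best-response value against the certified policy (Lemma~\ref{lemma:upperbound}), the recursive value characterization of the certified output policy, and the stage-counting geometric-sum plus Cauchy--Schwarz argument yielding $O(\sqrt{SA_{\max}H^5\iota K})$ total gap (Theorem~\ref{thm:main}). The only difference is organizational: the paper introduces explicit pessimistic estimates $\ul{V}_{h,i}^k \le V_{h,i}^{\bar{\pi}_h^k}$ as a separate lemma and then telescopes $\ol{V}-\ul{V}$ purely algebraically on realized samples, whereas you telescope $\ol{V}_{h,i}-V_{h,i}^{\bar{\pi}}$ directly and fold the same Azuma step into each level of the recursion---an equivalent reshuffling of the same ingredients.
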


\subsection{Learning CE}\label{subsec:ce}
In this subsection, we aim at learning a more strict solution concept named correlated equilibrium. Our algorithm for learning CE (a complete description presented in Algorithm~\ref{alg:ce} of Appendix~\ref{app:ce}) also relies on stage-based V-learning, but replaces the no-regret learning subroutine in Algorithm~\ref{alg:sbv} with a no-swap-regret learning algorithm. Our no-swap-regret algorithm follows the generic reduction introduced in~\citet{blum2007external}, and converts a follow-the-regularized-leader (FTRL) algorithm with sublinear external regret to a no-swap-regret algorithm \citep{jin2021v}. A detailed description of such a no-swap-regret FTRL subroutine as well as its regret analysis is presented in Appendix~\ref{app:ce}. Again, due to the stage-based update rule, we can avoid the additional complication of dealing with a weighted swap regret as faced by recent works~\citep{jin2021v,song2021can}. The construction of the output policy $\bar{\pi}$ is the same as Algorithm~\ref{alg:certify} and thus omitted. The following theorem shows that our sample complexity guarantee for learning CE improves over \citet{song2021can} and matches the best known result in the literature~\citep{jin2021v}. The proof of the theorem can also be found in Appendix~\ref{app:ce}.

\begin{theorem}\label{thm:ce}
	(Sample complexity of learning CE). For any $p\in (0,1]$, set $\iota = \log(2NSA_{\max} KH/p)$, and let the agents run Algorithm~\ref{alg:ce} for $K$ episodes with $K= O(S A_{\max}^2H^5 \iota/\epsilon^2)$. Then, with probability at least $1-p$, the output policy $\bar{\pi}$ is an $\epsilon$-approximate CE. 
\end{theorem}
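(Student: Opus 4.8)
The plan is to follow the same three-layer argument as in the proof of Theorem~\ref{thm:cce}, replacing the external-regret guarantee of the Exp3 subroutine with the swap-regret guarantee of the FTRL-based Blum--Mansour subroutine used in Algorithm~\ref{alg:ce}, and correspondingly upgrading each agent's deviation from a single best-response policy to an arbitrary strategy modification $\psi_i\in\Psi_i$. The three layers are: (i) an \emph{optimism} claim that the maintained estimates $\ol{V}_{h,i}(s)$ stay above the best value achievable under any strategy modification against the opponents' certified policies; (ii) a \emph{recursive} bound on the CE-gap $\sup_{\psi_i\in\Psi_i}V_{1,i}^{\psi_i\diamond\bar{\pi}}(s_1)-V_{1,i}^{\bar{\pi}}(s_1)$ in terms of accumulated bonuses and per-$(h,s)$ swap regrets; and (iii) a counting step that sums these quantities over all stages and step-state pairs and calibrates $K$.

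First I would establish the swap-regret guarantee of the bandit subroutine \emph{within a single stage}. Because the stage-based update resets both the loss estimates $L_{h,i}(s_h,\cdot)$ and the policy $\mu_{h,i}(\cdot\mid s_h)$ at the start of each stage (Lines~\ref{line:11}--\ref{line:16}) and weights the in-stage samples uniformly, the per-$(h,s)$ learning problem reduces to a \emph{standard} (unweighted) adversarial bandit instance, so I can invoke the generic reduction of~\citet{blum2007external} off the shelf. Running $A_i$ copies of an external-regret FTRL learner combined through a stationary-distribution master yields a swap regret of order $O(A_i\sqrt{\check{n}\,\iota})=O(\sqrt{A_i^2\,\check{n}\,\iota})$ over a stage of length $\check{n}$, larger than the external-regret bound $O(\sqrt{A_i\,\check{n}\,\iota})$ used for CCE by exactly a factor $\sqrt{A_i}$. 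Squared through the eventual $\epsilon^{-2}$ calibration, this single extra $\sqrt{A_{\max}}$ is the source of the extra $A_{\max}$ factor relative to Theorem~\ref{thm:cce}.

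Next I would prove optimism by backward induction on $h$, with base case $h=H+1$ immediate from the initialization. For the inductive step I would expand $\ol{V}_{h,i}(s)$ via the stage update of Line~\ref{line:1}, control the empirical average of $r_{h,i}+\ol{V}_{h+1,i}(s_{h+1})$ around its conditional expectation by an Azuma--Hoeffding bound on the associated martingale difference sequence (absorbing the deviation into $b_{\check{n}}=6\sqrt{H^2 A_i\iota/\check{n}}$), and then invoke the per-stage swap-regret bound to show that switching to any modification $\psi_{h,i}^s$ raises the value by at most the swap regret. A union bound over all $(h,s,i)$ and all stages, with $\iota=\log(2NSA_{\max}KH/p)$, makes these events hold simultaneously with probability at least $1-p$. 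For the CE-gap itself I would mirror the certified-policy analysis: unrolling the recursion of Algorithm~\ref{alg:certify} telescopes the value differences into a sum of per-$(h,s)$ swap regrets plus bonus terms, where the uniform sampling of both the episode index $k$ and the in-stage index $j$ makes $\bar{\pi}$'s value equal to the appropriate empirical average.

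Finally I would collect terms. Summing $b_{\check{n}}$ and the per-stage swap regrets over the stages visiting a fixed $(h,s)$ telescopes cleanly because the stage lengths grow at rate $(1+1/H)$, so the total is dominated by $O(H)$ times the last-stage bound; summing over the at most $SH$ step-state pairs via a pigeonhole inequality of the form $\sum_{h,s}\sqrt{\check{N}_h^K(s)}\le\sqrt{SHK}$ yields a CE-gap of order $\widetilde{O}(\sqrt{SA_{\max}^2 H^5/K})$, and setting $K=O(SA_{\max}^2 H^5\iota/\epsilon^2)$ drives this below $\epsilon$. The main obstacle I anticipate is the optimism/decomposition step for strategy modifications: unlike the CCE case, where a single scalar external-regret term suffices, here the deviation gain must be controlled uniformly over the combinatorially large set $\Psi_i$, and I expect the crux to be verifying that the \emph{swap} regret of the subroutine is precisely the quantity dominating $\sup_{\psi_i}$ at every level of the certified-policy recursion, while keeping the martingale and bonus bookkeeping consistent with the stage boundaries.
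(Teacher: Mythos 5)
Your proposal follows essentially the same route as the paper's proof: the per-stage reduction to a \emph{standard} (unweighted) swap-regret bandit problem solved by the Blum--Mansour/FTRL reduction (the paper's Lemma~\ref{lemma:swap_regret}, giving $O(\sqrt{A_i^2\check{n}\iota})$ per stage), an optimism claim for $\ol{V}_{h,i}$ against all strategy modifications proved by induction with Azuma--Hoeffding (the paper's Lemmas~\ref{lemma:bonus2} and~\ref{lemma:upperbound2}), and the certified-policy gap recursion with geometric-stage counting and Cauchy--Schwarz (the paper's Theorem~\ref{thm:ce_regret}), with the same final calibration $K = O(SA_{\max}^2H^5\iota/\epsilon^2)$. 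Your choice of backward induction on $h$ (quantified over all episodes) instead of the paper's induction on $k$ is immaterial, since the value at $(h,k)$ only depends on values at step $h+1$ in earlier episodes; either induction closes.

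There is, however, one concrete inconsistency that would make your optimism step fail as written: you state that the Azuma deviation is absorbed into $b_{\check{n}}=6\sqrt{H^2A_i\iota/\check{n}}$, which is the bonus of the \emph{CCE} algorithm (Line~\ref{line:1} of Algorithm~\ref{alg:sbv}), and you treat the swap regret as a separate term. But the optimism inequality $\ol{V}_{h,i}^k(s)\geq\max_{\psi_i}V_{h,i}^{\psi_i\diamond\bar{\pi}_h^k}(s)$ forces the bonus itself to dominate the \emph{sum} of the martingale deviation and the per-stage swap regret: unrolling one step gives $\max_{\psi_i}V_{h,i}^{\psi_i\diamond\bar{\pi}_h^k}(s)\leq \text{(empirical average)} + \text{(Azuma)} + \text{(swap regret)}$, while $\ol{V}_{h,i}^k(s) = \text{(empirical average)} + b_{\check{n}}$, so one needs $b_{\check{n}}\gtrsim \sqrt{H^2\iota/\check{n}} + 10\sqrt{H^2A_i^2\iota/\check{n}}$. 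With the CCE bonus $6\sqrt{H^2A_i\iota/\check{n}}$ this fails whenever $A_i\geq 2$. This is exactly why Algorithm~\ref{alg:ce} inflates the bonus to $b_{\check{n}}=11\sqrt{H^2A_i^2\iota/\check{n}}$ (its Line~\ref{line:2}), and why the paper's Lemma~\ref{lemma:bonus2} folds \emph{both} the Azuma term and the swap regret into a single bound of $11\sqrt{H^2A_i^2\iota/\check{n}}$; after that, the gap recursion in Theorem~\ref{thm:ce_regret} carries only indicator and bonus terms, with no separate swap-regret bookkeeping. Your layers (i) and (ii) should be reconciled accordingly: either put the swap regret inside the bonus (the paper's choice, which your layer (i) implicitly requires), or abandon the optimism formulation and unroll the modified values directly — but not a mixture of the two. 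Once the bonus is corrected, the rest of your counting argument delivers the claimed $\widetilde{O}(\sqrt{SA_{\max}^2H^5/K})$ gap and the theorem follows.
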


As a final remark, notice that both the V-learning and the no-regret learning components of our algorithms are decentralized, which can be implemented using only the states observed and the local action and reward information, without any communication or central coordination among the agents. In addition, the sample complexity of our algorithms only depend on $A_{\max}$ instead of $\prod_{i=1}^N A_i$. This allows our methods to easily generalize to a large number of agents.

\section{Learning NE in Markov Potential Games}\label{sec:mpg}
In this section, we present an algorithm for learning Nash equilibria in decentralized Markov potential games, an important subclass of Markov games. Motivated by~\citet{leonardos2021global,zhang2021gradient}, we utilize a policy gradient method, where each agent independently runs a projected gradient ascent (PGA) algorithm to update their policies. We start from the case where the policy gradients can be calculated exactly (using an infinite number of samples), and then move to the more practical case where the gradients are estimated using finite samples. 

We first introduce a few notations for ease of presentation. Let $d_{h,\rho}^\pi(s)$ be the probability of visiting state $s$ at step $h$ by following policy $\pi$ starting from the initial state distribution $\rho$, i.e., 
$d_{h,\rho}^\pi(s) \defeq \pp^\pi(s_h = s\mid s_1\sim \rho).$ 
We also overload the notations of the value function and the potential function, and write $V_{1,i}^\pi(\rho) \defeq \ee_{s_1\sim\rho}\L V_{1,i}^\pi(s_1) \R$ and $\Phi_\rho(\pi) = \ee_{s\sim\rho}\L \Phi_{s}(\pi) \R$. We further introduce the following variant of the distribution mismatch coefficient~\citep{agarwal2021theory} to characterize the difficulty of exploration.  
\begin{definition}
	(Finite-horizon distribution mismatch coefficient). Given two policies $\pi,\pi'\in\Pi$ and an initial state distribution $\rho\in\Delta(\mc{S})$, we define
	$$\norm{\frac{d_{\rho}^{\pi'}}{d_{\rho}^{\pi}}}_{\infty} \defeq \max_{h\in[H],s_h\in\mc{S}} \frac{d_{h,\rho}^{\pi'}(s_h)}{d_{h,\rho}^{\pi}(s_h)},\text{ and } D \defeq \max_{\pi,\pi'} \norm{\frac{d_{\rho}^{\pi'}}{d_{\rho}^{\pi}}}_{\infty}. $$
\end{definition}

\subsection{Exact Gradient Estimates}\label{subsec:exact}
The PGA algorithm updates the policy as follows:
\begin{equation}\label{eqn:pga}
\pi_i^{(t+1)} \gets \text{Proj}_{\Pi_i} \l \pi_i^{(t)} + \eta \grad_{\pi_i}V_{1,i}^{\pi^{(t)}}(\rho) \r, 
\end{equation}
where $\pi_i^{(t)}$ is the policy of agent $i$ at the $t$-th iteration, $\text{Proj}_{\Pi_i}$ denotes the Euclidean projection onto $\Pi_i$, and $\eta>0$ is the step size. Here, we use the direct parameterization of the policy~\citep{agarwal2021theory}, where $\pi_{h,i}(a\mid s) = \theta_{h,i}^{s,a}$ for some $\theta_{h,i}^{s,a}\geq 0$ and $\sum_{a\in\mc{A}_i}\theta_{h,i}^{s,a} = 1,\forall i\in\mc{N},h\in[H],s\in\mc{S},a\in\mc{A}_i$.  We assume for now that the policy gradients $\grad_{\pi_i}V_{1,i}^{\pi^{(t)}}(\rho)$ can be calculated exactly, and such an assumption will be relaxed in the next subsection. 

Before presenting the analysis of PGA, we first introduce the following definition of an approximate stationary point.

\begin{definition}\label{def:stationary_point}
	For any $\epsilon>0$, a policy profile $\pi = (\pi_1,\dots,\pi_N)$ is a (first-order) $\epsilon$-approximate stationary point of a function $\Phi_\rho:\Pi\ra[0,\Phi_{\max}]$ if for any $\delta_1\in \rr^{A_1},\dots,\delta_N\in\rr^{A_N}$, such that $\sum_{i\in\mc{N}}\norm{\delta_i}_2^2 \leq 1$ and $\pi_i + \delta_i \in \Delta(\mc{A}_i),\forall i\in\mc{N}$, it holds that
	\[
	\sum_{i\in\mc{N}} \delta_i\T \grad_{\pi_i} \Phi_\rho(\pi) \leq \epsilon.
	\]
\end{definition}

Intuitively, $\pi$ is an approximate stationary point if the function $\Phi_\rho$ cannot increase by more than $\epsilon$ along any direction that lies in the intersection of the policy space and the neighborhood of $\pi$. The following lemma establishes the equivalence between stationary points and NE. 

\begin{lemma}\label{lemma:stationarity_implies_nash}
	Let $\pi=(\pi_1,\dots,\pi_N)$ be an $\epsilon$-approximate stationary point of the potential function $\Phi_\rho$ of an MPG for some $\epsilon>0$. Then, $\pi$ is a $D\sqrt{SH}\epsilon$-approximate NE.
\end{lemma}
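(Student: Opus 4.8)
The plan is to bound, for each agent $i\in\mc{N}$, the Nash gap relative to the initial distribution $\rho$ (the natural extension of Definition~\ref{def:approxNE} obtained by integrating against $\rho$), namely $\sup_{\pi_{i'}} V_{1,i}^{\pi_{i'},\pi_{-i}}(\rho) - V_{1,i}^{\pi}(\rho)$, and show it is at most $D\sqrt{SH}\,\epsilon$. The first step passes from values to the potential. Taking expectations of the defining identity~\eqref{eqn:potential} over $s\sim\rho$ gives $V_{1,i}^{\pi_{i'},\pi_{-i}}(\rho) - V_{1,i}^{\pi}(\rho) = \Phi_\rho(\pi_{i'},\pi_{-i}) - \Phi_\rho(\pi)$ for every $\pi_{i'}$, so agent $i$'s Nash gap equals the largest increase of $\Phi_\rho$ achievable by a unilateral deviation of $i$. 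Differentiating the same identity in $\pi_i$ (the subtracted term is independent of $\pi_i$) also yields $\grad_{\pi_i}\Phi_\rho(\pi) = \grad_{\pi_i}V_{1,i}^{\pi}(\rho)$, so I may work interchangeably with the potential and with agent $i$'s own value.

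The second step is a finite-horizon gradient-domination argument. Fixing the Markov policies $\pi_{-i}$, agent $i$ faces a standard single-agent MDP, so a best response may be taken Markov and the performance difference lemma applies. Writing $\bar Q_{h,i}^{\pi}$ for the induced $Q$-function and using the direct-parameterization identity $\partial V_{1,i}^{\pi}(\rho)/\partial \pi_{h,i}(a\mid s) = d_{h,\rho}^{\pi}(s)\,\bar Q_{h,i}^{\pi}(s,a)$, the performance difference lemma expands the deviation gap as $\sum_{h,s} d_{h,\rho}^{(\pi_{i'},\pi_{-i})}(s)\inner{\pi_{i',h}(\cdot\mid s) - \pi_{i,h}(\cdot\mid s),\ \bar Q_{h,i}^{\pi}(s,\cdot)}$. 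Bounding the inner product at each $(h,s)$ by its per-state maximum over $\bar\pi_{i,h}(\cdot\mid s)$, which is nonnegative since it vanishes at $\pi_{i,h}$, I may replace the visitation measure $d^{(\pi_{i'},\pi_{-i})}_{h,\rho}$ by $D\,d^{\pi}_{h,\rho}$ termwise, and then recombine the independent per-state maximizations into a single policy-level maximum, obtaining
\[
\sup_{\pi_{i'}} V_{1,i}^{\pi_{i'},\pi_{-i}}(\rho) - V_{1,i}^{\pi}(\rho) \ \le\ D\,\max_{\bar\pi_i\in\Pi_i}\inner{\bar\pi_i - \pi_i,\ \grad_{\pi_i}\Phi_\rho(\pi)}.
\]

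The final step converts the right-hand side into the stationarity bound of Definition~\ref{def:stationary_point}. Let $\delta_i = \bar\pi_i - \pi_i$ be the maximizing deviation, extended by $\delta_j = 0$ for $j\ne i$. Its squared $\ell_2$ norm is $\sum_{h,s}\norm{\bar\pi_{i,h}(\cdot\mid s) - \pi_{i,h}(\cdot\mid s)}_2^2 \le 2SH$, since each block is a difference of two probability vectors and the $\ell_2$-diameter of $\Delta(\mc{A}_i)$ is $\sqrt2$. If $\norm{\delta_i}_2\le 1$ the stationarity condition gives $\inner{\delta_i,\grad_{\pi_i}\Phi_\rho(\pi)}\le\epsilon$ directly; otherwise I rescale to $t\delta_i$ with $t = 1/\norm{\delta_i}_2$, which remains feasible because $\pi_i + t\delta_i = (1-t)\pi_i + t\bar\pi_i$ is a convex combination of valid policies, and conclude $\inner{\delta_i,\grad_{\pi_i}\Phi_\rho(\pi)} \le \epsilon\norm{\delta_i}_2 \le \sqrt{2SH}\,\epsilon$. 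Combining with the previous display yields the advertised $D\sqrt{SH}\,\epsilon$ bound (up to the universal constant arising from the simplex diameter).

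The step I expect to be the main obstacle is the second one: correctly setting up the induced single-agent MDP for agent $i$ and establishing the finite-horizon gradient-domination inequality, in particular justifying the termwise replacement of $d^{(\pi_{i'},\pi_{-i})}_{h,\rho}$ by $D\,d^{\pi}_{h,\rho}$, which hinges on first passing to the nonnegative per-state advantage maximum before swapping measures. By contrast, the reduction through the potential identity and the rescaling argument invoking stationarity are comparatively routine, the only care needed being to remain inside the product simplex when normalizing the deviation and to correctly track the $\sqrt{SH}$ factor coming from its $\ell_2$-diameter.
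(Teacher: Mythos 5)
Your proposal is correct and follows essentially the same route as the paper's proof: the identity $\grad_{\pi_i}\Phi_\rho(\pi)=\grad_{\pi_i}V_{1,i}^{\pi}(\rho)$, the finite-horizon gradient-domination argument (performance difference lemma, passing to the nonnegative per-state maximum before swapping $d_{h,\rho}^{(\pi_{i'},\pi_{-i})}$ for $D\,d_{h,\rho}^{\pi}$, then recombining via the direct-parameterization policy gradient identity), and finally a rescaling of the deviation $\bar\pi_i-\pi_i$ to invoke the stationarity definition. The only difference is cosmetic: you track the simplex-diameter factor $\sqrt{2}$ explicitly, whereas the paper's proof asserts $\norm{(\pi_i^\star-\pi_i)/\sqrt{SH}}_2^2\le 1$ and silently absorbs that constant.
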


The proof of Lemma~\ref{lemma:stationarity_implies_nash} relies on a gradient domination property that has been shown in single-agent RL~\citep{agarwal2021theory}. Its multi-agent counterpart has been studied in \citet{zhang2021gradient,leonardos2021global}, though to the best of our knowledge, a gradient domination property in finite-horizon episodic MDPs/MPGs is still missing in the literature. For completeness, we derive such a result, together with the finite-horizon variants of the policy gradient theorem~\citep{sutton2000policy} and performance difference lemma~\citep{kakade2002approximately} in Appendix~\ref{app:mpg}. With the above results, we arrive at the convergence guarantee of PGA in the exact gradient case. The proof of Theorem~\ref{thm:mpg_exact_gradient} is also deferred to Appendix~\ref{app:mpg}. 

\begin{theorem}\label{thm:mpg_exact_gradient}
	For any initial state distribution $\rho\in\Delta(\mc{S})$, let the agents independently run the projected gradient ascent updates~\eqref{eqn:pga} with a step size $\eta = \frac{1}{4NA_{\max}H^3}$ for $T = \frac{32NSA_{\max}D^2H^4\Phi_{\max}}{\epsilon^2}$ iterations. Then, there exists $t\in[T]$, such that $\pi^{(t)}$ is an $\epsilon$-approximate Nash equilibrium policy profile for the MPG. 
\end{theorem}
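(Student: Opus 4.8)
The plan is to use the potential-game structure to collapse the $N$ independent updates \eqref{eqn:pga} into a single projected gradient ascent on the potential $\Phi_\rho$, run the standard projected-gradient stationarity analysis, and finish with Lemma~\ref{lemma:stationarity_implies_nash}. First I would differentiate the defining identity \eqref{eqn:potential} in the parameters of $\pi_i$ to get $\grad_{\pi_i}\Phi_\rho(\pi) = \grad_{\pi_i} V_{1,i}^{\pi}(\rho)$ for every $i\in\mc{N}$. Because $\Pi = \times_{i} \Pi_i$ is a product of simplices, the Euclidean projection $\text{Proj}_{\Pi}$ separates across agents, so the simultaneous updates \eqref{eqn:pga} are exactly the single iteration $\pi^{(t+1)} = \text{Proj}_{\Pi}\big(\pi^{(t)} + \eta\,\grad\Phi_\rho(\pi^{(t)})\big)$. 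This reduces the problem to maximizing one bounded objective $\Phi_\rho:\Pi\to[0,\Phi_{\max}]$ by projected gradient ascent.

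The step I expect to be the main obstacle is establishing smoothness of $\Phi_\rho$: I would prove $\|\grad\Phi_\rho(\pi)-\grad\Phi_\rho(\pi')\|_2 \le L\,\|\pi-\pi'\|_2$ with $L\le 4NA_{\max}H^3$, so that the prescribed step size $\eta = \tfrac{1}{4NA_{\max}H^3}$ satisfies $\eta\le 1/L$. Since the coordinates of $\grad\Phi_\rho$ are the per-agent value gradients $\grad_{\pi_i} V_{1,i}^{\pi}$, this requires a finite-horizon policy gradient theorem and an explicit Lipschitz bound on these value gradients; the $H^3$ factor is the finite-horizon counterpart of the $(1-\gamma)^{-3}$ smoothness known in single-agent RL, whereas the $NA_{\max}$ factor captures the coupling across the agents' action coordinates. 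I would develop these finite-horizon gradient and smoothness lemmas (the analogues of \citet{sutton2000policy,agarwal2021theory}) as auxiliary results and then invoke them here.

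Granting $L$-smoothness, the convergence analysis is routine. Combining the descent lemma with the variational characterization of the projection (testing its optimality inequality for $\pi^{(t+1)}$ against the point $y=\pi^{(t)}$) gives the ascent guarantee $\Phi_\rho(\pi^{(t+1)}) - \Phi_\rho(\pi^{(t)}) \ge \tfrac{1}{2\eta}\|\pi^{(t+1)}-\pi^{(t)}\|_2^2$ whenever $\eta\le 1/L$. Telescoping over $t=1,\dots,T$ and bounding the total increase by $\Phi_{\max}$ yields $\min_{t\in[T]}\|\pi^{(t+1)}-\pi^{(t)}\|_2^2 \le \tfrac{2\eta\Phi_{\max}}{T}$, so at least one iterate $\pi^{(t)}$ takes a short step.

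Finally I would convert a short step into the first-order stationarity of Definition~\ref{def:stationary_point}. Testing the projection optimality inequality for $\pi^{(t+1)}$ against the feasible point $y = \pi^{(t)}+\delta$, for any $\delta$ with $\sum_{i}\|\delta_i\|_2^2\le 1$ and $\pi^{(t)}_i+\delta_i\in\Delta(\mc{A}_i)$, upper-bounds $\sum_{i\in\mc{N}}\delta_i\T\grad_{\pi_i}\Phi_\rho(\pi^{(t)})$ by a quantity of order $\tfrac{1}{\eta}\|\pi^{(t+1)}-\pi^{(t)}\|_2$. Substituting $\eta=\tfrac{1}{4NA_{\max}H^3}$ and $T=\tfrac{32NSA_{\max}D^2H^4\Phi_{\max}}{\epsilon^2}$ into the telescoped bound drives this gap below $\epsilon/(D\sqrt{SH})$, so $\pi^{(t)}$ is an $\big(\epsilon/(D\sqrt{SH})\big)$-approximate stationary point of $\Phi_\rho$. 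Lemma~\ref{lemma:stationarity_implies_nash} then upgrades this to an $\epsilon$-approximate Nash equilibrium, which is the claim of Theorem~\ref{thm:mpg_exact_gradient}.
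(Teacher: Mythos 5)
Your proposal is correct and follows essentially the same route as the paper's proof: reduce the $N$ independent updates to a single projected gradient ascent on $\Phi_\rho$ via $\grad_{\pi_i}\Phi_\rho = \grad_{\pi_i}V_{1,i}^{\pi}$ and the product structure of $\Pi$ (the paper's Lemma~\ref{lemma:centralized}), establish $4NA_{\max}H^3$-smoothness of $\Phi_\rho$ (Lemma~\ref{lemma:smoothness}), telescope the sufficient-ascent inequality to find a short step, convert it into $\bigl(\epsilon/(D\sqrt{SH})\bigr)$-approximate stationarity, and finish with Lemma~\ref{lemma:stationarity_implies_nash}. The only minor wrinkle is in the last conversion: the paper's gradient-mapping lemma (Lemma~\ref{lemma:gradient_map}) certifies stationarity at the \emph{next} iterate $\pi^{(t+1)}$ using smoothness, whereas your projection test at $\pi^{(t)}$ leaves a cross term $\langle \pi^{(t+1)}-\pi^{(t)},\grad\Phi_\rho(\pi^{(t)})\rangle$ that additionally requires the (easily verified) bound $\|\grad\Phi_\rho\|_2\le \sqrt{NA_{\max}H^3}\le 1/\eta$; either patch gives the same factor $\eta\beta+1=2$ and hence the stated choice of $T$.
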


\subsection{Finite-Sample Gradient Estimates}\label{subsec:finite}

When the exact policy gradients are not given, we need to replace $\grad_{\pi_i}V_{1,i}^{\pi^{(t)}}(\rho)$ in \eqref{eqn:pga} with an estimate $\hat{\grad}_{\pi_i}^{(t)}(\pi^{(t)})$ that is calculated from a finite number of samples. For any policy $\pi$ used in the $t$-iteration of PGA, we use a standard REINFORCE~\citep{williams1992simple} gradient estimator
\begin{equation}\label{eqn:gradient_estimator}
\hat{\grad}^{(t)}_{\pi_i}(\pi) = R_i^{(t)}\sum_{h=1}^H\grad \log\pi_{h,i}(a_{h,i}^{(t)}\mid s_h^{(t)}),
\end{equation}
where $R_i^{(t)} = \sum_{h=1}^H r_{h,i}^{(t)}(s_h^{(t)}, \bm{a}_h^{(t)})$ is the sum of rewards obtained at iteration $t$, and $s_1^{(t)}\sim\rho$. 

To ensure that the variance of the gradient estimator is bounded, we let each agent use an epsilon-greedy variant of direct policy parameterization. Specifically, each agent $i$ selects its actions according to a policy $\pi_i$, such that $\pi_{h,i}(a\mid s) = (1-\tilde{\epsilon})\theta_{h,i}^{s,a} + \tilde{\epsilon}/A_i$, where $\theta_{h,i}^{s,a}\geq 0$, $\sum_{a\in\mc{A}_i}\theta_{h,i}^{s,a}=1$, and $\tilde{\epsilon}>0$ is the exploration parameter. In the following lemma, we show that the gradient estimator \eqref{eqn:gradient_estimator} under $\tilde{\epsilon}$-greedy exploration is unbiased, has a bounded variance, and is mean-squared smooth. The first two properties have appeared in~\citet{daskalakis2020independent,leonardos2021global}, while the third property is new and is used to derive an improved sample complexity result in our analysis. 

\begin{lemma}\label{lemma:unbiased_bounded}
	For any agent $i\in\mc{N}$ and any iteration $t\in[T]$, the REINFORCE gradient estimator \eqref{eqn:gradient_estimator} with $\tilde{\epsilon}$-greedy exploration is an unbiased estimator with a bounded variance:
	\[
	\begin{aligned}
	&\ee_{\pi^{(t)}}\L \hat{\grad}_{\pi_i}^{(t)}(\pi^{(t)}) \R = \grad_{\pi_i}V_{1,i}^{\pi^{(t)}}(\rho), \\
	&\ee_{\pi^{(t)}} \norm{\hat{\grad}_{\pi_i}^{(t)}(\pi^{(t)}) - \grad_{\pi_i}V_{1,i}^{\pi^{(t)}}(\rho)}_2^2  \leq \frac{A_{\max}^2 H^4}{\tilde{\epsilon}}.
	\end{aligned}
	\]
	Further, it is mean-squared smooth, i.e., for any $\pi^{\prime(t)}\in\Pi_i$,
	\[
	\ee_{\pi^{(t)}} \norm{\hat{\grad}_{\pi_i}^{(t)}(\pi^{(t)}) - \hat{\grad}_{\pi_i'}^{(t)}(\pi^{\prime(t)})}_2^2  \leq \frac{A^3_{\max}H^3}{\tilde{\epsilon}^3} \|\pi^{(t)} - \pi^{\prime(t)}\|_2^2.
	\]
\end{lemma}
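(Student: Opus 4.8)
The plan is to establish the three properties of the REINFORCE estimator \eqref{eqn:gradient_estimator} under $\tilde{\epsilon}$-greedy exploration in turn, treating each as a separate calculation that ultimately reduces to a finite-horizon version of the standard policy gradient algebra.

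\textbf{Unbiasedness.} First I would prove the finite-horizon policy gradient theorem, which I am assuming is available from Appendix~\ref{app:mpg}. The estimator is a single-trajectory version of the likelihood-ratio (score function) gradient. Writing the trajectory distribution induced by $\pi^{(t)}$ as a product over steps, the log-derivative trick gives $\grad_{\pi_i} \ee_{\pi}[R_i] = \ee_{\pi}\big[ R_i \sum_{h=1}^H \grad\log\pi_{h,i}(a_{h,i}\mid s_h)\big]$, because only agent $i$'s own policy factors depend on $\pi_i$ and the transition kernel and other agents' policies contribute no $\pi_i$-dependence. Since $V_{1,i}^{\pi}(\rho) = \ee_{\pi}[R_i]$, this is exactly the claimed identity. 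The $\tilde{\epsilon}$-greedy wrapper does not affect unbiasedness because the estimator differentiates through the actual sampling policy $\pi_{h,i}(a\mid s) = (1-\tilde{\epsilon})\theta_{h,i}^{s,a} + \tilde{\epsilon}/A_i$, so the score is taken with respect to the policy actually used.

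\textbf{Bounded variance.} I would bound the second moment of the estimator directly and drop the subtracted mean. Using $R_i^{(t)}\in[0,H]$ (rewards in $[0,1]$ over $H$ steps), it suffices to control $\ee_\pi\|\sum_{h}\grad\log\pi_{h,i}(a_{h,i}\mid s_h)\|_2^2$. The key device is that $\grad_{\theta_{h,i}^{s,a}}\log\pi_{h,i}(a_{h,i}\mid s_h)$ is nonzero only for the visited $(s_h,a_{h,i})$ coordinate and equals $\indicator[\cdot]/\pi_{h,i}(a_{h,i}\mid s_h)$; the $\tilde{\epsilon}$-greedy floor guarantees $\pi_{h,i}(a\mid s)\geq \tilde{\epsilon}/A_i$, so each per-step score has squared norm at most $(A_i/\tilde{\epsilon})^2$ but after taking expectation against the sampling distribution the $1/\pi$ collapses one power, yielding a per-step contribution of order $A_i/\tilde{\epsilon}$. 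Because scores at distinct steps are orthogonal in coordinate support (they live in disjoint blocks of $\rr^{A_i}$ indexed by $(h,s)$), summing over the $H$ steps and combining with $R_i^2\le H^2$ gives the $A_{\max}^2 H^4/\tilde{\epsilon}$ bound; I would be slightly careful that the cross terms across different $h$ with the same $(s,a)$ block average out so that no spurious $H^2$ blow-up appears.

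\textbf{Mean-squared smoothness.} This is the property I expect to be the main obstacle, since it is the new ingredient and requires bounding the difference of two estimators evaluated at two policies $\pi^{(t)},\pi^{\prime(t)}$ driven by the \emph{same} trajectory. I would split $\hat{\grad}^{(t)}_{\pi_i}(\pi)-\hat{\grad}^{(t)}_{\pi_i'}(\pi')$ into (i) the difference coming from $R_i^{(t)}$ evaluated along trajectories generated by the two policies and (ii) the difference of the score terms $\sum_h[\grad\log\pi_{h,i}-\grad\log\pi'_{h,i}]$. The score difference is controlled by the Lipschitzness of $x\mapsto \log x$ away from the $\tilde{\epsilon}/A_i$ floor: the derivative of the score in the parameter is $1/\pi^2$, so differences scale like $A_{\max}^2/\tilde{\epsilon}^2$ per step, and bounding $\|\pi-\pi'\|_2$ per-coordinate against the aggregate $\|\pi^{(t)}-\pi^{\prime(t)}\|_2$ together with the $R_i^2\le H^2$ and the summation over $H$ steps should assemble to $A_{\max}^3 H^3/\tilde{\epsilon}^3\,\|\pi^{(t)}-\pi^{\prime(t)}\|_2^2$. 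The delicate bookkeeping is tracking exactly where the three powers of $A_{\max}/\tilde{\epsilon}$ and the $H^3$ arise, and ensuring the coupling through the common randomness of the trajectory is handled so that the bound is in terms of the $\ell_2$ policy distance rather than a total-variation distance between trajectory laws; I would use the explicit linear dependence of $\pi$ on $\theta$ under $\tilde{\epsilon}$-greedy to make this Lipschitz argument rigorous rather than appealing to abstract smoothness.
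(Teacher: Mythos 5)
Your unbiasedness argument is the paper's argument, and your variance argument is essentially the paper's as well: bound the variance by the second moment, pull out $R_i^{(t)}\le H$, reduce to per-step score norms, and use the $\tilde{\epsilon}$-greedy floor together with the fact that taking expectation over the sampled action turns $1/\pi^2$ into $\sum_a 1/\pi(a)$. Your orthogonality observation is a genuine (small) improvement over the paper, which instead pays a factor $H$ via Cauchy--Schwarz: since the parameters $\theta_{h,i}^{s,a}$ are indexed by the step $h$, scores at distinct steps have disjoint supports and the cross terms vanish identically (not merely "average out"), so your route even gives $A_{\max}^2H^3/\tilde{\epsilon}$. One slip: the per-step contribution after the collapse is $\sum_a 1/\pi_{h,i}(a\mid s_h)\le A_i\cdot A_i/\tilde{\epsilon}=A_i^2/\tilde{\epsilon}$, not $A_i/\tilde{\epsilon}$; this is harmless here since either accounting stays below the claimed $A_{\max}^2H^4/\tilde{\epsilon}$.

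The genuine gap is in the mean-squared smoothness part, in two respects. First, your decomposition into "(i) the difference coming from $R_i^{(t)}$ evaluated along trajectories generated by the two policies" plus "(ii) the difference of the score terms" misreads the property being verified: in Assumption~\ref{assumption:1}, $\grad f(x,\xi)$ and $\grad f(y,\xi)$ share the \emph{same} sample $\xi$, i.e., the same trajectory and hence the same realized return. The paper's proof accordingly writes $\hat{\grad}_{\pi_i}(\pi)-\hat{\grad}_{\pi_i'}(\pi') = R_i\sum_{h}\bigl(\grad\log\pi_{h,i}(a_{h,i}\mid s_h)-\grad\log\pi'_{h,i}(a_{h,i}\mid s_h)\bigr)$ with $R_i\le H$; your term (i) does not exist, and the coupling/total-variation issue you flag as delicate dissolves by definition of the oracle rather than needing an argument. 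Second, and more damaging, the mechanism you propose for (ii) --- Lipschitzness of $x\mapsto 1/x$ above the floor, with derivative bounded by $A_{\max}^2/\tilde{\epsilon}^2$ --- yields a per-step squared score difference of order $(A_{\max}/\tilde{\epsilon})^4(\pi-\pi')^2$, hence $L^2\sim A_{\max}^4/\tilde{\epsilon}^4$, which is strictly weaker than the claimed $A_{\max}^3/\tilde{\epsilon}^3$ and so does not prove the lemma as stated. The missing idea is to reuse, before any crude bounding, the same expectation-collapse trick you used for the variance: $\ee_{a\sim\pi}\bigl[(1/\pi(a)-1/\pi'(a))^2\bigr]=\sum_a (\pi(a)-\pi'(a))^2/\bigl(\pi(a)\,\pi'(a)^2\bigr)\le (A_{\max}/\tilde{\epsilon})^3\sum_a(\pi(a)-\pi'(a))^2$, where the sampling probability cancels one of the four powers of $1/\pi$. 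That is exactly how the paper reaches $A_{\max}^3H^3/\tilde{\epsilon}^3\|\pi^{(t)}-\pi^{\prime(t)}\|_2^2$; as sketched, your bookkeeping cannot reach that exponent.
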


Each agent now runs (projected) stochastic gradient ascent (SGA) to update its policy, where the gradient estimator is given by \eqref{eqn:gradient_estimator}. In the following, we present the analysis of a generic stochastic gradient descent method that might be of independent interest, and the SGA policy update rule is simply an instantiation of such a generic method.

Consider a generic stochastic non-convex optimization problem as follows: We are given an objective function $F:\rr^n \ra \rr$, and our goal is to find a point $x\in\mc{X}\subseteq \rr^n$ such that $\grad F(x)$ is close to $0$, where $\mc{X}$ is the feasible region. We do not have accurate information about the function $F$, and can only access it through a stochastic sampling oracle $f(\cdot, \xi)$, where the random variable $\xi$ represents the ``randomness'' of the oracle. We introduce the following assumptions that are standard in smooth non-convex optimization~\citep{arjevani2019lower}. 

\begin{assumption}\label{assumption:1}
		1. We have access to a stream of random variables $\xi_1,\dots,\xi_T$, such that the gradient estimators are unbiased and have bounded variances: $\grad \ee_{\xi_t}[f(x,\xi_t)] =\grad F(x)$, and $\ee[\norm{\grad f(x,\xi_t) - \grad F(x)}^2_2]\leq \sigma^2$ for some $\sigma>0$ for all $t\in[T]$ and $x\in\mc{X}$. 
		
		2. The objective $F$ has bounded initial sub-optimality and is $L$-smooth: $F(x_0) - \inf_{x\in\mc{X}}F(x) < \infty$, and $\norm{\grad F(x)- \grad F(y)}_2\leq L \cdot \norm{x-y}_2, \forall x,y\in \rr^n$ for some $L>0$. The stochastic oracle is mean-squared smooth for the same constant $L$: $\ee[\norm{\grad f(x,\xi) - \grad f(y,\xi)}^2_2] \leq L^2 \cdot \norm{x-y}^2_2,\forall x,y\in \rr^n$.
\end{assumption}

\begin{algorithm}[!t]
	$d_1 \gets \grad f(x_1,\xi_1)$\;	
	\For{$t\gets 1$ to $T$}
	{
		$\eta_t \gets \frac{k}{(w + \sigma^2 t)^{1/3}}$\;
		$x_{t+1}\gets \text{Proj}_{\mc{X}}(x_t - \eta_t d_t)$\;
		$a_{t+1}\gets c\eta_t^2$\;
		$d_{t+1}\gets \grad f(x_{t+1},\xi_{t+1})+ (1-a_{t+1})(d_t -\grad f(x_t,\xi_{t+1}))$;
	}
	Output $x_\tau$ where $\tau$ is uniformly sampled from $[T]$\;
	\caption{Stochastic Recursive Momentum with Projections}\label{alg:storm}
\end{algorithm}

For an improved sample complexity bound, we utilize a momentum-based stochastic gradient descent (SGD) method with variance reduction~\citep{johnson2013accelerating,allen2016variance,reddi2016stochastic}. Our method is a variant of the non-adaptive STOchastic Recursive Momentum (STORM) algorithm proposed in~\citet{cutkosky2019momentum}, and is formally described in Algorithm~\ref{alg:storm}. It achieves an optimal convergence rate of $O(1/T^{1/3})$, which improves over the standard convergence rate $O(1/T^{1/4})$ of SGD with no variance reduction (e.g., \citet{ghadimi2013stochastic}). The key advantage of this method is to apply variance reduction in a \emph{decentralized} way: Compared with other SGD methods with variance reduction (e.g., \citet{allen2016variance,reddi2016stochastic,fang2018spider}), our momentum-based algorithm does not require a batch of samples to compute checkpoint gradients. The agents hence do not need to coordinate on when to stop updating policies and to collect a batch of samples for a fixed policy profile, a common behavior when using batch-based methods. 

The following result characterizes the convergence rate of Algorithm~\ref{alg:storm}, and is a variant of the analysis given in~\citet{cutkosky2019momentum}. The proofs of Proposition~\ref{thm:storm_app} and its supporting lemmas are given in Appendix~\ref{app:storm}. 

\begin{proposition}\label{thm:storm_app}
	Suppose  Assumption~\ref{assumption:1} holds, and let $x_{t+1}^+ = \text{Proj}_{\mc{X}}(x_t - \eta_t \grad F(x_t))$. For any $b>0$, let $k = \frac{b\sigma^{\frac{2}{3}}}{L}, c=L^{2}\left(32+1 /\left(7 b^{3}\right)\right), w=\sigma^{2} \max ((4 b)^{3}, 2,(32 b+\frac{1}{7 b^{2}})^{3} / 64)$, and $M = 16(F(x_1)-\inf_{x\in\mc{X}}F(x))\allowbreak + \frac{w^{1/3}\sigma^2}{2L^2 k} + \frac{k^3 c^2}{L^2}\ln(T+2)$. Then, the following result holds for Algorithm~\ref{alg:storm}:
	\[
	\ee\L\frac{1}{T}\sum_{t=1}^T  \norm{\frac{1}{\eta_t}(x_{t+1}^+-x_{t})}^2_2 \R \leq \frac{Mw^{1/3}}{Tk} + \frac{M\sigma^{2/3}}{T^{2/3}k}.
	\]
\end{proposition}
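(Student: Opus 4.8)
# Proof Proposal for Proposition~\ref{thm:storm_app}

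\textbf{Overall strategy.}The plan is to analyze Algorithm~\ref{alg:storm} through a Lyapunov (potential) function that couples the objective value with the error of the momentum estimator, following the template of \citet{cutkosky2019momentum} but adapted to the projected setting. Throughout, write $\epsilon_t \defeq d_t - \grad F(x_t)$ for the gradient-estimation error, and let $\mathcal{F}_t \defeq \sigma(\xi_1,\dots,\xi_t)$, so that $d_t$ and $x_{t+1}=\text{Proj}_{\mc{X}}(x_t-\eta_t d_t)$ are both $\mathcal{F}_t$-measurable. The three ingredients I would assemble are: (i) a descent inequality for the projected step, (ii) a contraction-type recursion for $\mathbb{E}\norm{\epsilon_t}_2^2$, and (iii) a conversion from the actual movement $\norm{x_{t+1}-x_t}_2$ to the gradient-mapping measure $\norm{x_{t+1}^+-x_t}_2$ that appears in the statement.

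First I would establish the descent inequality. Using the variational characterization of the projection with test point $x_t$, one gets $\inner{d_t,\, x_{t+1}-x_t} \leq -\frac{1}{\eta_t}\norm{x_{t+1}-x_t}_2^2$. Combining this with $L$-smoothness of $F$, writing $\grad F(x_t)=d_t-\epsilon_t$, and applying Young's inequality to the cross term $-\inner{\epsilon_t,\,x_{t+1}-x_t}$ yields, whenever $\eta_t \leq 1/L$,
\[
F(x_{t+1}) \leq F(x_t) - \frac{1}{4\eta_t}\norm{x_{t+1}-x_t}_2^2 + \frac{\eta_t}{2}\norm{\epsilon_t}_2^2.
\]
The condition $\eta_t\leq 1/L$ is guaranteed by the stated choices of $k$ and $w$ (indeed $\eta_1\leq 1/(4L)$). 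Second, I would expand the recursion $d_{t+1}=\grad f(x_{t+1},\xi_{t+1})+(1-a_{t+1})(d_t-\grad f(x_t,\xi_{t+1}))$ into $\epsilon_{t+1}=(1-a_{t+1})\epsilon_t + a_{t+1}S_{t+1}+(1-a_{t+1})D_{t+1}$, where $S_{t+1}=\grad f(x_{t+1},\xi_{t+1})-\grad F(x_{t+1})$ and $D_{t+1}=[\grad f(x_{t+1},\xi_{t+1})-\grad f(x_t,\xi_{t+1})]-[\grad F(x_{t+1})-\grad F(x_t)]$. Both $S_{t+1}$ and $D_{t+1}$ have zero conditional mean given $\mathcal{F}_t$, so the cross terms with $\epsilon_t$ vanish in expectation; bounding $\mathbb{E}\norm{S_{t+1}}_2^2\leq\sigma^2$ (bounded variance) and $\mathbb{E}\norm{D_{t+1}}_2^2\leq L^2\norm{x_{t+1}-x_t}_2^2$ (mean-squared smoothness) and using $(1-a_{t+1})^2\leq 1-a_{t+1}$ gives
\[
\mathbb{E}\big[\norm{\epsilon_{t+1}}_2^2 \mid \mathcal{F}_t\big] \leq (1-a_{t+1})\norm{\epsilon_t}_2^2 + 2c^2\eta_t^4\sigma^2 + 2L^2\norm{x_{t+1}-x_t}_2^2.
\]

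Next I would combine these via the potential $\Phi_t \defeq F(x_t) + \frac{1}{c'\eta_{t-1}}\norm{\epsilon_t}_2^2$ with $c'\asymp L^2$ (concretely $c'=32L^2$), and bound $\mathbb{E}[\Phi_{t+1}-\Phi_t]$. The $\norm{x_{t+1}-x_t}_2^2$ term from the error recursion is dominated by the $-\frac{1}{4\eta_t}\norm{x_{t+1}-x_t}_2^2$ descent term (the choice of $c'$ leaves a $-\frac{1}{8\eta_t}\norm{x_{t+1}-x_t}_2^2$ surplus), which is then turned into the gradient-mapping term using $\norm{x_{t+1}^+-x_t}_2^2\leq 2\norm{x_{t+1}-x_t}_2^2+2\eta_t^2\norm{\epsilon_t}_2^2$ (non-expansiveness of the projection together with $\norm{x_{t+1}-x_{t+1}^+}_2\leq\eta_t\norm{\epsilon_t}_2$). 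The crux is to verify that the net coefficient of $\norm{\epsilon_t}_2^2$ in $\mathbb{E}[\Phi_{t+1}-\Phi_t]$ is nonpositive: this coefficient is $\frac{\eta_t}{2} + \frac{1}{c'}\big(\frac{1-a_{t+1}}{\eta_t}-\frac{1}{\eta_{t-1}}\big)$, and controlling the increment $\frac{1}{\eta_t}-\frac{1}{\eta_{t-1}}$ by concavity of $x\mapsto x^{1/3}$ together with $a_{t+1}=c\eta_t^2$ is exactly where the precise value $c=L^2(32+1/(7b^3))$ is needed. Telescoping $\mathbb{E}[\Phi_1-\Phi_{T+1}]\leq\Phi_1-\inf_{x\in\mc{X}}F(x)$, using $\mathbb{E}\norm{\epsilon_1}_2^2\leq\sigma^2$ for the initial term, and summing the variance contributions via $\sum_{t}\eta_t^3 = k^3\sum_t (w+\sigma^2 t)^{-1}\lesssim\frac{k^3}{\sigma^2}\ln(T+2)$ (the source of the logarithmic term), and finally multiplying through by $16$ to clear the $\frac{1}{16}$ coefficient in front of the accumulated gradient-mapping sum, I would obtain the three terms of $M$ and hence $\sum_{t=1}^T\frac{1}{\eta_t}\mathbb{E}\norm{x_{t+1}^+-x_t}_2^2\leq M$.

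Finally I would convert this to the stated average. Writing $\frac{1}{\eta_t^2}=\frac{1}{\eta_t}\cdot\frac{1}{\eta_t}\leq\frac{1}{\eta_t}\cdot\frac{1}{\eta_T}$ and using $\frac{1}{\eta_T}=\frac{(w+\sigma^2T)^{1/3}}{k}\leq\frac{w^{1/3}+\sigma^{2/3}T^{1/3}}{k}$ gives
\[
\frac{1}{T}\sum_{t=1}^T\mathbb{E}\norm{\tfrac{1}{\eta_t}(x_{t+1}^+-x_t)}_2^2 \leq \frac{w^{1/3}+\sigma^{2/3}T^{1/3}}{Tk}\sum_{t=1}^T\frac{1}{\eta_t}\mathbb{E}\norm{x_{t+1}^+-x_t}_2^2 \leq \frac{Mw^{1/3}}{Tk}+\frac{M\sigma^{2/3}}{T^{2/3}k},
\]
which is the claimed bound. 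I expect the main obstacle to be the constant bookkeeping in the potential step—specifically verifying nonpositivity of the $\norm{\epsilon_t}_2^2$ coefficient under the decreasing schedule $\eta_t$, where the increasing weights $1/\eta_t$ fight against the contraction factor $(1-a_{t+1})$ and force the particular algebraic form of $c$, $w$, and $k$; everything else is routine telescoping together with the final Cauchy--Schwarz-type conversion.
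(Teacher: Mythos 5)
Your proposal is correct and follows essentially the same route as the paper's own proof: the same projected descent inequality, the same error recursion with vanishing cross terms, the identical Lyapunov function $F(x_t)+\frac{1}{32L^2\eta_{t-1}}\norm{\epsilon_t}_2^2$, the same non-expansiveness trick relating $\norm{x_{t+1}-x_t}_2$ to $\norm{x_{t+1}^+-x_t}_2$, and the same final conversion via $1/\eta_T$ and subadditivity of the cube root. The only differences are cosmetic (where in the argument the gradient-mapping conversion is performed, and a harmless slip stating $\eta_t\leq 1/L$ rather than $\eta_t\leq 1/(2L)$ as the sufficient condition for your descent constant, which is immaterial since the schedule guarantees $\eta_t\leq 1/(4L)$).
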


Since we have shown in Lemma~\ref{lemma:unbiased_bounded} that the conditions in Assumption~\ref{assumption:1} are satisfied by the potential function $\Phi_{\rho}$ and the REINFORCE policy gradient estimator $\hat{\grad}_{\pi_i}^{(t)}(\pi^{(t)})$, we can let each agent run an instance of Algorithm~\ref{alg:storm} and the convergence result in Proposition~\ref{thm:storm_app} directly applies. This leads us to the following sample complexity guarantee of learning Nash equilibria in MPGs. The proof of Theorem~\ref{thm:finite} can be found in Appendix~\ref{app:mpg_finite}.

\begin{theorem}\label{thm:finite}
	For any initial policies and any $\epsilon>0$, let the agents independently run SGA policy updates (Algorithm~\ref{alg:storm}) for $T$ iterations with $T = O(1/\epsilon^{4.5})\cdot \text{poly}(N, D, S, A_{\max}, H)$. Then, there exists $t\in[T]$, such that $\pi^{(t)}$ is an $\epsilon$-approximate NE in expectation. 
\end{theorem}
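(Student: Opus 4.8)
The plan is to reduce the decentralized multi-agent updates to a single stochastic nonconvex optimization of the common potential, and then invoke Proposition~\ref{thm:storm_app} together with Lemmas~\ref{lemma:stationarity_implies_nash} and~\ref{lemma:unbiased_bounded}. The key structural observation is the gradient identity implied by~\eqref{eqn:potential}: differentiating in $\pi_i$ while holding $\pi_{-i}$ fixed annihilates the $\pi_{i'}$ terms and yields $\grad_{\pi_i}\Phi_\rho(\pi) = \grad_{\pi_i}V_{1,i}^\pi(\rho)$ for every $i$. Hence each agent's local REINFORCE estimator $\hat{\grad}_{\pi_i}^{(t)}$, which uses only its own rewards and actions, is an unbiased estimate of the $i$-th block of $\grad\Phi_\rho$. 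Concatenating the $N$ per-agent STORM instances, the joint iterate $\pi^{(t)}=(\pi_1^{(t)},\dots,\pi_N^{(t)})$ evolves exactly as Algorithm~\ref{alg:storm} run on the single objective $F=-\Phi_\rho$ over the product region $\mc{X}=\Pi$ (the $\tilde{\epsilon}$-greedy simplices), with the shared randomness furnishing the common $\xi_{t+1}$ used in the momentum correction. This is precisely what lets a fully decentralized procedure be analyzed as one gradient method on a shared potential.

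First I would verify Assumption~\ref{assumption:1} for this reduction: unbiasedness, the variance bound $\sigma^2 = O(NA_{\max}^2H^4/\tilde{\epsilon})$, and the mean-squared smoothness $L^2 = O(NA_{\max}^3H^3/\tilde{\epsilon}^3)$ all follow by summing the per-agent bounds of Lemma~\ref{lemma:unbiased_bounded} over the $N$ blocks, while $F(x_1)-\inf_x F \le \Phi_{\max}$. Plugging these constants into Proposition~\ref{thm:storm_app} bounds the expected average squared gradient mapping,
\[
\frac1T\sum_{t=1}^T \ee\norm{\tfrac{1}{\eta_t}\l x_{t+1}^+-\pi^{(t)}\r}_2^2 \leq \frac{Mw^{1/3}}{Tk}+\frac{M\sigma^{2/3}}{T^{2/3}k} = \widetilde O\!\l\frac{\text{poly}(N,S,A_{\max},H)}{T^{2/3}}\r,
\]
where $x_{t+1}^+=\text{Proj}_{\Pi}(\pi^{(t)}+\eta_t\grad\Phi_\rho(\pi^{(t)}))$ is the exact projected-ascent step. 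Since the output index $\tau$ is uniform on $[T]$, the same $\widetilde O(1/T^{2/3})$ bound holds for $\ee\norm{G_\tau}_2^2$, where $G_t \defeq \tfrac{1}{\eta_t}(x_{t+1}^+-\pi^{(t)})$.

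Next I would convert a small gradient mapping into the stationarity notion of Definition~\ref{def:stationary_point}. The optimality condition for the projection defining $x_{t+1}^+$ gives, for every feasible unit direction $\delta$, $\inner{\grad\Phi_\rho(\pi^{(t)}),\,\delta}\le\inner{G_t,\delta}\le\norm{G_t}_2$; the $L$-smoothness of $\Phi_\rho$ then transfers this bound between $\pi^{(t)}$ and $x_{t+1}^+$ at an additive cost $\eta_t L\norm{G_t}_2$, which is $O(\norm{G_t}_2)$ because the STORM step sizes obey $\eta_t L = O(1)$ (from $w\ge 2\sigma^2$ and $k=b\sigma^{2/3}/L$). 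Hence a small gradient mapping certifies $\pi^{(t)}$ as an $O(\norm{G_t}_2)$-approximate stationary point. Taking expectations and applying Jensen, $\ee\norm{G_\tau}_2\le(\ee\norm{G_\tau}_2^2)^{1/2}=\widetilde O(1/T^{1/3})$, so $\pi^{(\tau)}$ is an $\widetilde O(1/T^{1/3})$-approximate stationary point in expectation; Lemma~\ref{lemma:stationarity_implies_nash} upgrades this to a $D\sqrt{SH}\cdot\widetilde O(1/T^{1/3})$-approximate NE in expectation, with best responses restricted to the $\tilde{\epsilon}$-greedy class.

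The last and most delicate step is removing the exploration restriction and balancing parameters. Replacing any unrestricted deviation by its $\tilde{\epsilon}$-greedy smoothing perturbs each agent's value by at most $O(\tilde{\epsilon}\cdot\text{poly}(H))$ (a per-step total-variation shift of $\tilde{\epsilon}$ accumulated over the horizon), so the $\tilde{\epsilon}$-greedy NE is an NE for the original game up to an additive exploration bias of this order. The total gap is therefore $D\sqrt{SH}\cdot\widetilde O(1/T^{1/3})+O(\tilde{\epsilon}\cdot\text{poly}(H))$, and the iteration count follows by choosing $\tilde{\epsilon}$ to equate the two error sources and solving for $T$. I expect this balancing to be the main obstacle: both $\sigma^2$ and $L$ in the Proposition~\ref{thm:storm_app} constants blow up polynomially as $\tilde{\epsilon}\to 0$, so shrinking $\tilde{\epsilon}$ to suppress the exploration bias simultaneously inflates the $T^{-1/3}$ optimization rate. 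Obtaining the stated $T = \widetilde O(\epsilon^{-4.5})\cdot\text{poly}(N,D,S,A_{\max},H)$ thus requires tracking precisely how the $1/\tilde{\epsilon}$ factors propagate through the rate and carefully optimizing this tradeoff; the remaining work is routine bookkeeping of the polynomial prefactors.
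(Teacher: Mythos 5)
Your proposal is correct and takes essentially the same route as the paper: verify Assumption~\ref{assumption:1} by aggregating Lemma~\ref{lemma:unbiased_bounded} over the $N$ agents (together with the smoothness of $\Phi_\rho$ from Lemma~\ref{lemma:smoothness}), apply Proposition~\ref{thm:storm_app}, convert the small gradient mapping into approximate stationarity while absorbing the $\tilde{\epsilon}$-greedy exploration bias (a step the paper delegates to Lemma~\ref{lemma:gradient_map_epsilon}), and conclude via Lemma~\ref{lemma:stationarity_implies_nash}. The parameter balancing you leave as bookkeeping is carried out explicitly in the paper, which sets $\tilde{\epsilon} = \frac{\sqrt{N}\epsilon}{2NDSH^3A_{\max}}$ and obtains $T = \widetilde{O}\l \frac{N^{9/4} D^{9/2}S^3A_{\max}^{9/2}H^{12}}{\epsilon^{9/2}} \r$.
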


\begin{figure*}[!h]
\centering
\subfigure[Matrix team]{\includegraphics[width=0.33\textwidth]{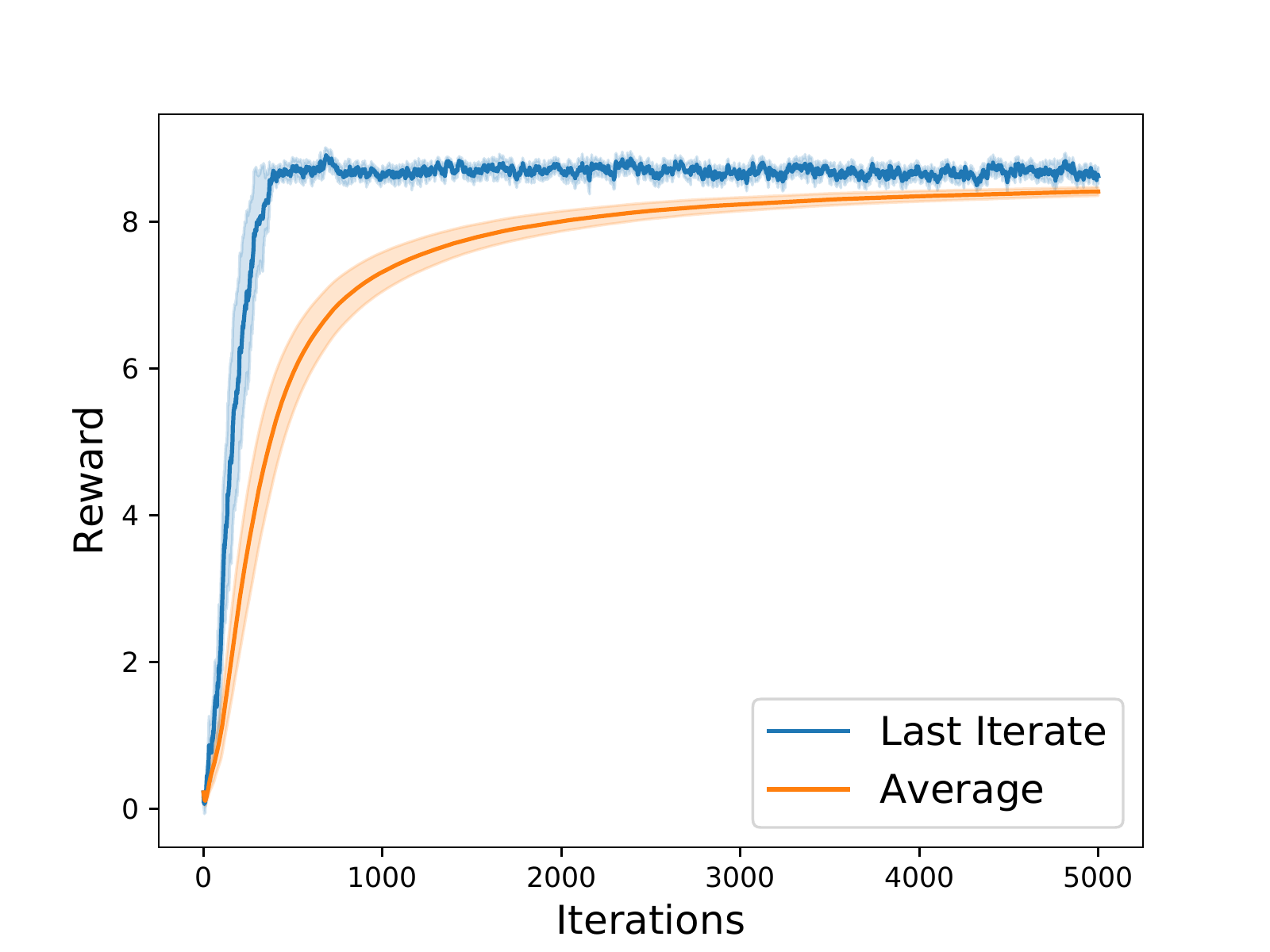}}
\hspace{-.46cm}\subfigure[GoodState]{\includegraphics[width=0.33\textwidth]{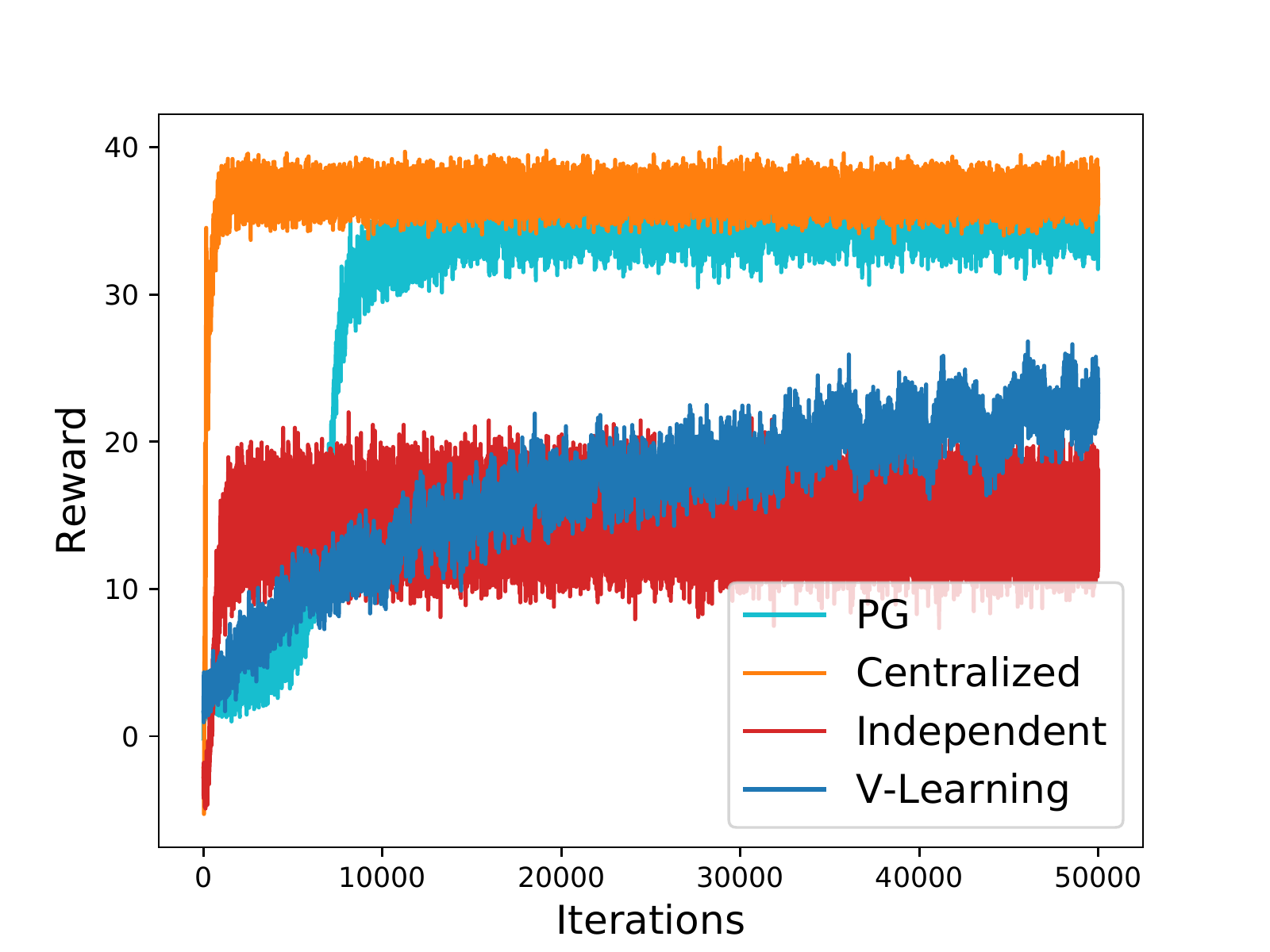}}
\hspace{-.46cm}\subfigure[BoxPushing]{\includegraphics[width=0.33\textwidth]{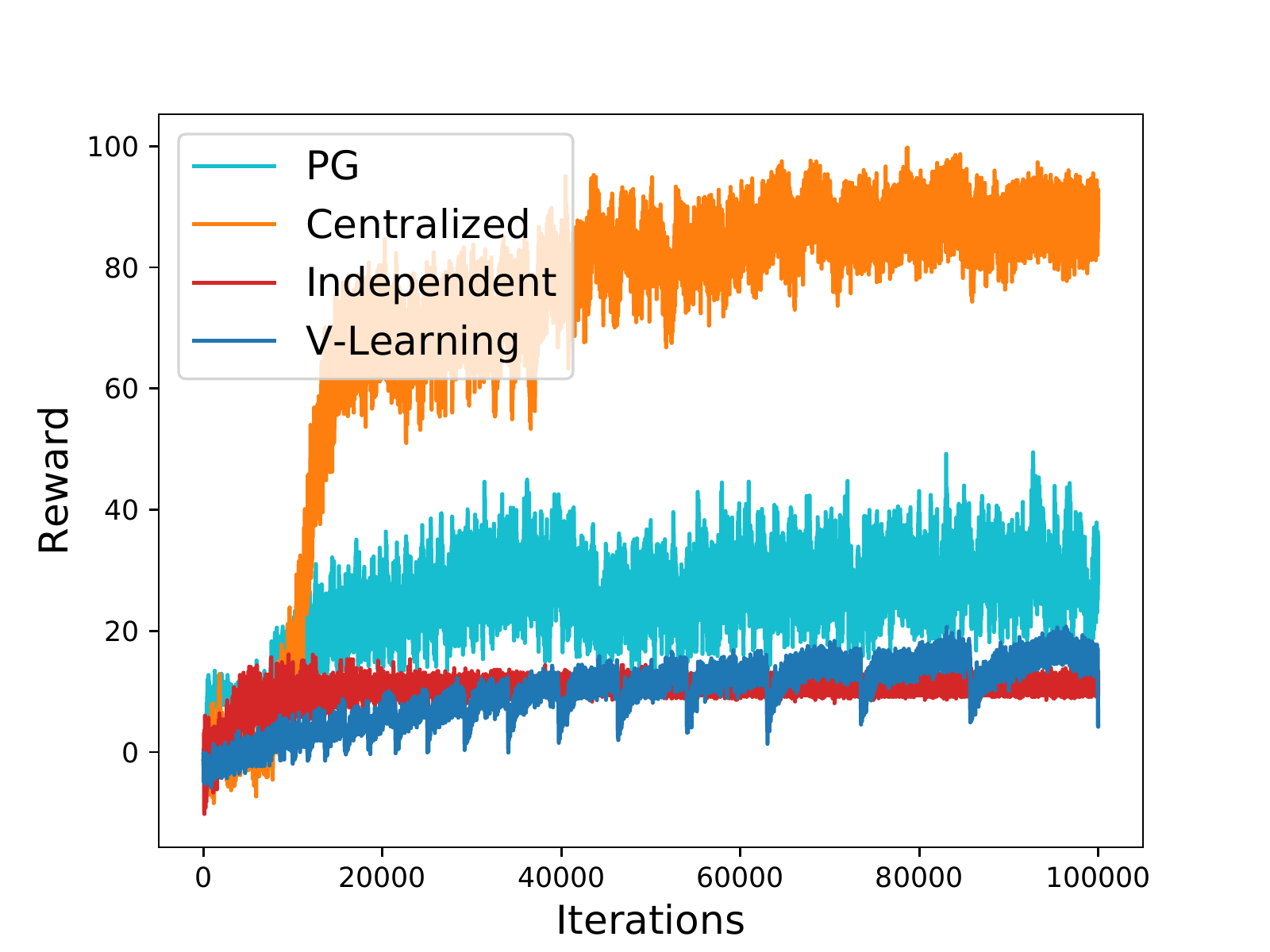}}
\caption{(a) Rewards of Algorithm~\ref{alg:storm} on the matrix team task, and rewards of Algorithms~\ref{alg:sbv} and~\ref{alg:storm} on the (b) GoodState and (c) BoxPushing tasks. ``Last Iterate'' denotes the policy of the current iterate $t$, while ``Average'' represents a uniformly sampled policy from the first $t$ iterates. ``V-Learning''  and ``PG'' denote the policies at the current iterate $t$ of Algorithms~\ref{alg:sbv} and~\ref{alg:storm}, respectively.  ``Centralized'' is an oracle that can control the actions of the agents in a centralized way. In ``Independent'', each agent runs a na\"{i}ve single-agent Q-learning algorithm independently, by taking greedy actions with respect to its local Q-function estimates. All results are averaged over $20$ runs. }
\label{fig:main}
\end{figure*}

The polynomial sample complexity dependence on $\amax$ is a natural benefit of decentralized learning, while centralized methods would typically have an exponential dependence $\prod_{i=1}^N A_i$. Such an improvement becomes more significant as the number of agents $N$ increases. Also note that our sample complexity bound in Theorem~\ref{thm:finite} holds in expectation. To obtain a standard high-probability result that holds with probability $1-p$, one could either apply Markov's inequality and tolerate an additional $O(1/p)$ factor of sample complexity, or replace our SGA method with one that has high-probability guarantees~\citep{li2020high}. For completeness, we present a sample complexity lower bound in the order of $\Omega(1/\epsilon^2)\cdot \text{poly}(H,S,\amax)$ in Appendix~\ref{app:lowerbound}, which is achieved by a reduction to  single-agent RL.

Finally, we show that our algorithm can nearly find the globally optimal NE (i.e., the NE that maximizes the potential function, which is guaranteed to exist~\citep{leonardos2021global}) in an important subclass of MPGs named \emph{smooth MPGs}. Our definition of a $(\lambda,\omega)$-smooth MPG, adapted from the definition of smooth games~\citep{roughgarden2009intrinsic,radanovic2019learning}, is formally introduced in Definition~\ref{def:smooth} of Appendix~\ref{app:smooth}. Let $\pi^\star$ be a policy that maximizes the potential function, i.e., $\Phi_{\rho}(\pi^\star) = \max_{\pi\in\Pi}\Phi_{\rho}(\pi)$, and let $V^\star_{1,i}$ denote the value function for agent $i$ under policy $\pi^\star$. The following theorem states that Algorithm~\ref{alg:storm} can nearly find the globally optimal NE in smooth MPGs. Its proof can be found in Appendix~\ref{app:smooth}.

\begin{theorem}\label{thm:smooth} 
	In a $(\lambda,\omega)$-smooth MPG, for any initial policies and any $\epsilon>0$, let the agents independently run SGA policy updates (Algorithm~\ref{alg:storm}) for $T$ iterations with $T = O(1/\epsilon^{4.5})\cdot \text{poly}(N, D, S, A_{\max}, H)$. Then, there exists $t\in[T]$, such that
	\[
	\begin{aligned}
	\ee \L V_{1,i}^{\pi^{(t)}}(\rho)\R \geq  \frac{\lambda}{1+\omega} V_{1,i}^{\star}(\rho) -\frac{\epsilon}{1+\omega},\forall i \in\mc{N}.
	\end{aligned}
	\] 
\end{theorem}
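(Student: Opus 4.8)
The plan is to combine the approximate-Nash guarantee of Theorem~\ref{thm:finite} with the $(\lambda,\omega)$-smoothness property of the MPG (Definition~\ref{def:smooth}), in the spirit of a price-of-anarchy argument. First I would instantiate Theorem~\ref{thm:finite} with accuracy parameter $\epsilon$, so that running Algorithm~\ref{alg:storm} for $T = O(1/\epsilon^{4.5})\cdot\text{poly}(N,D,S,A_{\max},H)$ iterations produces an index $t\in[T]$ for which $\pi^{(t)}$ is an $\epsilon$-approximate NE in expectation. Unpacking Definition~\ref{def:approxNE} at this index, the best-response gap for each agent $i\in\mc{N}$ is at most $\epsilon$; since the best response dominates the fixed deviation to agent $i$'s component $\pi_i^\star$ of the potential-maximizing profile $\pi^\star$, I obtain
\[
\ee\L V_{1,i}^{\pi^{(t)}}(\rho)\R \;\geq\; \ee\L V_{1,i}^{\pi_i^\star,\pi_{-i}^{(t)}}(\rho)\R - \epsilon.
\]

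Next I would invoke the agentwise $(\lambda,\omega)$-smoothness inequality of Definition~\ref{def:smooth}, which lower-bounds each deviation value in terms of the optimal value $V_{1,i}^\star(\rho)$ and the current value:
\[
V_{1,i}^{\pi_i^\star,\pi_{-i}^{(t)}}(\rho) \;\geq\; \lambda\, V_{1,i}^\star(\rho) - \omega\, V_{1,i}^{\pi^{(t)}}(\rho).
\]
This inequality holds for every realization of the random iterate $\pi^{(t)}$, so taking expectations preserves it; moreover $V_{1,i}^\star(\rho)$ is deterministic, so its expectation is itself. Substituting the expected smoothness bound into the approximate-NE inequality yields
\[
\ee\L V_{1,i}^{\pi^{(t)}}(\rho)\R \;\geq\; \lambda\, V_{1,i}^\star(\rho) - \omega\, \ee\L V_{1,i}^{\pi^{(t)}}(\rho)\R - \epsilon.
\]

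Finally I would collect the two $\ee[V_{1,i}^{\pi^{(t)}}(\rho)]$ terms, giving $(1+\omega)\,\ee[V_{1,i}^{\pi^{(t)}}(\rho)] \geq \lambda\, V_{1,i}^\star(\rho) - \epsilon$, and divide through by $1+\omega>0$ to reach the claimed bound. The main subtlety I expect is in the best-response step rather than in any heavy computation: Theorem~\ref{thm:finite} only certifies a bound on the \emph{best-response} value in expectation, so I must verify that the fixed deviation $\pi_i^\star$ is indeed dominated by the best response, and that the ``in expectation'' guarantee (over the algorithm's randomness) is applied at the same random index $t$ as the pointwise smoothness inequality so that the two expectations are consistent. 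Once these are aligned, the remaining rearrangement is the standard smooth-game manipulation, here producing a per-agent guarantee precisely because Definition~\ref{def:smooth} is stated agentwise rather than on the aggregate welfare.
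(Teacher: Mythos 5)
Your proposal is correct and follows essentially the same argument as the paper: invoke Theorem~\ref{thm:finite} to get an $\epsilon$-approximate NE in expectation, lower-bound the best-response value by the fixed deviation to $\pi_i^\star$, apply the $(\lambda,\omega)$-smoothness inequality, and rearrange to divide by $1+\omega$. The subtleties you flag (the best response dominating the fixed deviation, and consistency of the expectations at the random index) are handled implicitly in the paper's one-line chain of inequalities, so there is no gap.
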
 

We remark that our definition of smooth MPGs generalizes that of smooth teams in \citet{radanovic2019learning,mao2020near}, who assume an identical reward function of all the agents. Our approach also significantly improves the two works in that we design natural update rules for all the agents, who play symmetric roles in the self-play setting; the other two works only assign the algorithm to \emph{one} agent, and have to \emph{assume} that the policies of the other agent(s) change slowly. 

\section{Simulations}\label{sec:simulations}

We empirically evaluate Algorithm~\ref{alg:storm} (SGA) on a classic matrix team task~\citep{claus1998dynamics}, and both Algorithms~\ref{alg:sbv} and~\ref{alg:storm} on two Markov games, namely GoodState and BoxPushing~\citep{seuken2007improved}. Figure~\ref{fig:main} illustrates the performances of the algorithms in terms of the collected rewards. Detailed descriptions of the simulations are deferred to Appendix~\ref{app:simulations} due to space limitations. Overall, our simulations show more encouraging results than what our theory suggests: For Algorithm~\ref{alg:sbv}, the \emph{actual} policy trajectories converge and achieve high rewards, even though our theoretical guarantees only hold for a ``certified'' output policy. Further, Algorithm~\ref{alg:storm} achieves the globally-optimal NE frequently in our simulations, even though our theory does not guarantee so in general. Both algorithms outperform ``Independent'' learning and in certain cases approach the performance of a ``Centralized'' oracle. 

\section{Concluding Remarks}\label{sec:conclusions}

In this paper, we have studied sample-efficient MARL in decentralized scenarios. We have proposed stage-based V-learning algorithms that learn CCE and CE in general-sum Markov games, and policy gradient algorithms that learn NE in Markov potential games. Our algorithms have improved existing results either through a simplified algorithmic design or a sharper sample complexity bound. An interesting future direction would be to tighten the sample complexity upper and lower bounds established in this paper. The problem of efficiently finding the globally optimal NE in generic MPGs through decentralized learning is also left open.


\bibliography{ref}
\bibliographystyle{iclr2022_conference}

\appendix

\onecolumn

~\\
\centerline{{\fontsize{13.5}{13.5}\selectfont \textbf{Supplementary Materials for ``On Improving Model-Free Algorithms for }}}

\vspace{6pt}
\centerline{\fontsize{13.5}{13.5}\selectfont \textbf{
		Decentralized Multi-Agent Reinforcement Learning''}}
\vspace{10pt}

\section{Detailed Discussions on Related Work}\label{app:related}
A common mathematical framework of multi-agent RL is stochastic games \citep{shapley1953stochastic}, which are also referred to as Markov games. Early attempts to learn Nash equilibria in Markov games include \citet{littman1994markov,littman2001friend,hu2003nash,hansen2013strategy}, but they either assume the transition kernel and rewards are known, or only yield asymptotic guarantees. More recently, various sample efficient methods have been proposed \citep{wei2017online,bai2020provable,sidford2020solving,xie2020learning,bai2020near,liu2020sharp,zhao2021provably}, mostly for learning in two-player zero-sum Markov games. Most notably, several works have investigated two-player zero-sum games in a \emph{decentralized} environment: \citet{daskalakis2020independent} have shown non-asymptotic convergence guarantees for independent policy gradient methods when the learning rates of the two agents follow a two-timescale rule. \citet{tian2020provably} have studied online learning when the actions of the opponents are not observable, and have achieved the first sub-linear regret $\widetilde{O}(K^{\frac{3}{4}})$ in the decentralized setting for $K$ episodes. More recently, \citet{wei2021last} have proposed an Optimistic Gradient Descent Ascent algorithm with a slowly-learning critic, and have shown a strong finite-time last-iterate convergence result in the decentralized/agnostic environment. Overall, these works have mainly focused on two-player zero-sum games. These results do not carry over in any way to general-sum games or MPGs that we consider in this paper.  

In general-sum normal-form games, a folklore result is that when the agents independently run no-regret learning algorithms, their empirical frequency of plays converges to the set of coarse correlated equilibria (CCE) of the game \citep{hart2000simple}. However, a CCE may suggest that the agents play obviously non-rational strategies. For example, \citet{viossat2013no} have constructed an example where a CCE assigns positive probabilities only to strictly dominated strategies. On the other hand, given the PPAD completeness of finding a Nash equilibrium, convergence to NE seems hopeless in general. An impossibility result \citep{hart2003uncoupled} has shown that uncoupled no-regret learning does not converge to Nash equilibrium in general, due to the informational constraint that the adjustment in an agent's strategy does not depend on the reward functions of the others.  Hence, convergence to Nash equilibria is guaranteed mostly in games with special reward structures, such as two-player zero-sum games \citep{freund1999adaptive} and potential games \citep{kleinberg2009multiplicative,cohen2017learning}. 

For learning in general-sum Markov games, \citet{rubinstein2016settling} has shown a sample complexity lower bound for NE that is exponential in the number of agents. Recently, \citet{liu2020sharp} has presented a line of results on learning NE, CE, or CCE, but their algorithm is model-based, and suffers from such exponential dependence. \citet{song2021can,jin2021v,mao2022provably} have proposed V-learning based methods for learning CCE and/or CE, which are similar to the ones that we study here, and avoid the exponential dependence. Nevertheless, our methods significantly simplify their algorithmic design and analysis, by introducing a stage-based V-learning update rule that circumvents their rather complicated no-weighted-regret bandit subroutine.

Another line of research has considered RL in Markov potential games~\citep{macua2018learning,mguni2021learning}. \citet{arslan2016decentralized} has shown that decentralized Q-learning style algorithms can converge to NE in weakly acyclic games, which cover MPGs as an important special case. Their decentralized setting is similar to ours in that each agent is completely oblivious to the presence of the others. Later, such a method has been improved in \citet{yongacoglu2019learning} to achieve team-optimality. However, both of them require a coordinated exploration phase, and only yield asymptotic guarantees. Decentralized learning has also been studied in single-stage weakly acyclic games~\citep{marden2009payoff} or potential games \citep{marden2009cooperative,cohen2017learning}.  Two recent works \citep{zhang2021gradient,leonardos2021global} have proposed independent policy gradient methods in MPGs, which are most relevant to ours. We improve their sample complexity dependence on $\epsilon$ by utilizing a decentralized variance reduction technique, and do not require the two-timescale framework to coordinate policy evaluation as in \citet{zhang2021gradient}. \citet{fox2021independent} has shown that independent Natural Policy Gradient also converges to NE in MPGs, though only  asymptotic convergence has been established. Finally, MPGs have also been studied in \citet{song2021can}, but their model-based method is not decentralized, and requires the agents to take turns to learn the policies. 

MARL has also been studied in teams or cooperative games, which can be considered as a subclass of MPGs. Without enforcing a decentralized environment, \citet{boutilier1996planning} has proposed to coordinate the agents by letting them take actions in a lexicographic order. In a similar setting, \citet{wang2002reinforcement} have studied optimal adaptive learning that converges to the optimal NE in Markov teams. \citet{verbeeck2002learning} have presented an independent learning algorithm that achieves a Pareto optimal NE in common interest games with limited communication.  These methods critically relied on communications among the agents (beforehand) or observing the teammates' actions. In contrast, the distributed Q-learning algorithm in \cite{lauer2000algorithm} is decentralized and coordination-free, which, however, only works for deterministic tasks, and has no non-asymptotic guarantees.

Efficient exploration has also been widely studied in the literature of single-agent RL, see, e.g., \citet{brafman2002r,jaksch2010near,azar2017minimax,jin2018q}. For the tabular episodic setting, various methods \citep{azar2017minimax,zhang2020almost,menard2021ucb} have achieved the sample complexity of $\widetilde{O}(H^3 SA/\epsilon^2)$, which matches the information-theoretical lower bound. When reduced to the bandit case, decentralized MARL is also related to the cooperative multi-armed bandit (MAB) problem \citep{lai2008medium,avner2014concurrent}, originated from the literature of cognitive radio networks. The difference is that, in cooperative MAB, each agent is essentially interacting with an individual copy of the bandit, with an extra caution of action collisions; in the MARL formulation, the reward function is defined on the Cartesian product of the action spaces, which allows the agents to be coupled in more general forms. A concurrent work~\citep{chang2021online} has studied cooperative multi-player multi-armed bandits with information asymmetry. Nevertheless, \citep{chang2021online} requires stronger conditions than our decentralized setting as their algorithm relies on playing a predetermined sequence of actions.

\section{Technical Lemmas}\label{app:lemmas}
\begin{lemma}\label{lemma:36}
	\citep[Lemma 3.6]{bubeck2015convex}. Let $f$ be a $\beta$-smooth function with a convex domain $\mc{X}$. For any $x\in\mc{X}$, let $x^+ = \text{Proj}_{\mc{X}}\l x-\eta\grad f(x) \r$ be a projected gradient descent update with $\eta = \frac{1}{\beta}$, and let $G^\eta(x) = \frac{1}{\eta}(x-x^+)$. Then, the following holds true
	\[
	f(x^+) - f(x) \leq -\frac{1}{2\beta}\norm{G^{\eta}(x)}_2^2. 
	\]
\end{lemma}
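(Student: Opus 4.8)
The plan is to combine the descent inequality coming from $\beta$-smoothness with the variational (obtuse-angle) characterization of the Euclidean projection onto the convex set $\mc{X}$. First I would invoke $\beta$-smoothness of $f$ to obtain the standard quadratic upper bound $f(x^+)\leq f(x)+\inner{\grad f(x),\,x^+-x}+\frac{\beta}{2}\norm{x^+-x}_2^2$. Substituting $x^+-x=-\eta\,G^\eta(x)$ together with $\eta=1/\beta$ rewrites this entirely in terms of the gradient mapping: $f(x^+)\leq f(x)-\frac{1}{\beta}\inner{\grad f(x),\,G^\eta(x)}+\frac{1}{2\beta}\norm{G^\eta(x)}_2^2$.

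The crux is to show that $\inner{\grad f(x),\,G^\eta(x)}\geq \norm{G^\eta(x)}_2^2$, which I expect to be the only non-routine step. This follows from the first-order optimality condition of the projection: since $x^+=\text{Proj}_{\mc{X}}(x-\eta\grad f(x))$ and $\mc{X}$ is convex, for every $y\in\mc{X}$ we have $\inner{x-\eta\grad f(x)-x^+,\,y-x^+}\leq 0$. I would then choose $y=x\in\mc{X}$ and use $x-x^+=\eta\,G^\eta(x)$, so that $x-\eta\grad f(x)-x^+=\eta\bigl(G^\eta(x)-\grad f(x)\bigr)$. The inequality becomes $\eta^2\inner{G^\eta(x)-\grad f(x),\,G^\eta(x)}\leq 0$, which rearranges directly to $\norm{G^\eta(x)}_2^2\leq \inner{\grad f(x),\,G^\eta(x)}$.

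Finally I would plug this lower bound into the smoothness estimate, replacing $-\frac{1}{\beta}\inner{\grad f(x),\,G^\eta(x)}$ by the smaller quantity $-\frac{1}{\beta}\norm{G^\eta(x)}_2^2$, to obtain $f(x^+)-f(x)\leq -\frac{1}{\beta}\norm{G^\eta(x)}_2^2+\frac{1}{2\beta}\norm{G^\eta(x)}_2^2=-\frac{1}{2\beta}\norm{G^\eta(x)}_2^2$, which is exactly the claim. Everything apart from the projection inequality is elementary algebra, so the hard part is really just correctly applying the obtuse-angle property with the right test point $y=x$; since this is a restatement of a known result (Bubeck, Lemma 3.6), no new ideas are required beyond assembling these two standard ingredients.
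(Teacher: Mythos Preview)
Your proof is correct and is exactly the standard argument for this sufficient-decrease property. Note, however, that the paper does not supply its own proof of this lemma: it is simply stated in the appendix of technical lemmas with a citation to \citet[Lemma 3.6]{bubeck2015convex}, so there is no in-paper proof to compare against. Your write-up is the usual two-step derivation (quadratic upper bound from $\beta$-smoothness plus the obtuse-angle projection inequality with test point $y=x$) and would serve perfectly well as a self-contained proof of the cited result.
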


\begin{lemma}\label{lemma:gradient_map}
	\citep[Proposition B.1]{agarwal2021theory}. Let $f:\mc{X}\ra \rr$ be a $\beta$-smooth function. Define the gradient mapping as 
	\[
	G^\eta(x) = \frac{1}{\eta} \l \text{Proj}_{\mc{X}}\l x + \eta \grad f(x) \r - x \r. 
	\]
	The update rule for projected gradient ascent is $x^+ = x + \eta G^\eta(x)$. If $\norm{G^\eta(x)}_2 \leq \epsilon$, then
	\[
	\max _{x+\delta \in \mc{X},\|\delta\|_{2}^2 \leq 1} \delta\T \grad f(x^{+}) \leq \epsilon(\eta \beta+1). 
	\]
\end{lemma}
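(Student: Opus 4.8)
The plan is to derive the bound from two standard ingredients: the first-order optimality (variational inequality) characterization of the Euclidean projection, and the $\beta$-smoothness of $f$ to transfer the gradient control from $x$ to the updated point $x^+$. First I would set $z = x + \eta\grad f(x)$, so that $x^+ = \text{Proj}_{\mc{X}}(z)$ and $G^\eta(x) = \frac{1}{\eta}(x^+-x)$. Because $\mc{X}$ is convex, the projection obeys $\inner{z - x^+,\, u - x^+} \le 0$ for every $u \in \mc{X}$. Substituting the identity $z - x^+ = \eta\bigl(\grad f(x) - G^\eta(x)\bigr)$ and cancelling $\eta>0$ turns this into the key inequality $\inner{\grad f(x) - G^\eta(x),\, u - x^+} \le 0$ for all $u \in \mc{X}$, which controls $\grad f(x)$ along every feasible direction emanating from $x^+$.

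Next I would decompose the target as $\delta\T\grad f(x^+) = \delta\T\grad f(x) + \delta\T\bigl(\grad f(x^+) - \grad f(x)\bigr)$ and bound each piece. For the first piece, taking the feasible point $u = x^+ + \delta \in \mc{X}$ in the key inequality (so $u - x^+ = \delta$) gives $\delta\T\grad f(x) \le \inner{G^\eta(x),\,\delta} \le \norm{G^\eta(x)}_2\norm{\delta}_2 \le \epsilon$ by Cauchy--Schwarz and the hypotheses $\norm{G^\eta(x)}_2\le\epsilon$ and $\norm{\delta}_2\le 1$. For the second piece, I would invoke $\beta$-smoothness together with $\norm{x^+ - x}_2 = \eta\norm{G^\eta(x)}_2 \le \eta\epsilon$: Cauchy--Schwarz and $\norm{\grad f(x^+) - \grad f(x)}_2 \le \beta\norm{x^+-x}_2$ yield $\delta\T\bigl(\grad f(x^+)-\grad f(x)\bigr) \le \beta\eta\epsilon$. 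Summing the two bounds gives $\delta\T\grad f(x^+) \le \epsilon + \beta\eta\epsilon = \epsilon(\eta\beta+1)$, and taking the maximum over admissible $\delta$ finishes the argument.

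The one point that needs care is that the quantity being certified is first-order stationarity \emph{at the updated point} $x^+$ --- exactly the form entering Definition~\ref{def:stationary_point} and used in Lemma~\ref{lemma:stationarity_implies_nash} --- so the admissible perturbation direction $\delta$ must be read as a feasible displacement from $x^+$, i.e.\ $x^+ + \delta \in \mc{X}$, which is precisely the $u = x^+ + \delta$ substitution above. I do not expect a deeper obstacle: the whole argument is a short composition of the projection inequality and smoothness. The only genuinely nontrivial step is the transfer from $\grad f(x)$ to $\grad f(x^+)$, since the gradient-mapping hypothesis bounds $\grad f$ only at $x$; it is the smoothness estimate on $\norm{x^+-x}_2 = \eta\norm{G^\eta(x)}_2$ that absorbs this evaluation error into the additive $\beta\eta\epsilon$ term, and the projection inequality is what supplies the control in the constrained setting, where $G^\eta(x)$ and $\grad f(x)$ need not coincide.
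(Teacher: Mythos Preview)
Your proof is correct and is exactly the standard argument for this result: the projection variational inequality at $x^+$ controls $\delta^\top\grad f(x)$ along feasible directions, and $\beta$-smoothness transfers this to $\grad f(x^+)$ at the cost of the $\eta\beta\epsilon$ term. The paper does not supply its own proof of this lemma---it is quoted from \citet{agarwal2021theory} as a technical fact---so there is nothing to compare against beyond noting that your derivation matches the classical one. Your remark that the feasibility constraint should be read as $x^+ + \delta \in \mc{X}$ (rather than $x + \delta \in \mc{X}$ as literally written) is well taken: this is what the projection inequality actually delivers, and it is also what the downstream application to Definition~\ref{def:stationary_point} and Lemma~\ref{lemma:stationarity_implies_nash} requires, since the approximate stationary point being certified is $\pi^{(t+1)} = x^+$.
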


\begin{lemma}\label{lemma:gradient_map_epsilon}
	\citep[Lemma D.3]{leonardos2021global}. Let $\Phi_\rho:\Pi\ra \rr$ be the potential function (which is $\beta$-smooth), and assume that $\pi\in\Pi$ uses $\epsilon_i$-greedy parameterization. Define the gradient mapping as 
	\[
	G^\eta(\pi) = \frac{1}{\eta} \l \text{Proj}_{\Pi}\l \pi + \eta \grad \Phi_\rho(\pi) \r - \pi \r. 
	\]
	The update rule for projected gradient ascent is $\pi^+ = \pi + \eta G^\eta(\pi)$. If $\eta\beta\leq 1$ and $\norm{G^\eta(\pi)}_2 \leq \epsilon$, then
	\[
	\max _{\pi+\delta \in \Pi,\|\delta\|_{2}^2 \leq 1} \delta\T \grad \Phi_\rho(\pi^{+}) \leq 2\epsilon + \sqrt{NSA^2_{\max}H^5\epsilon_i^2}. 
	\]
\end{lemma}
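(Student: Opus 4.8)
The plan is to reduce the statement to Lemma~\ref{lemma:gradient_map} for the principal $2\epsilon$ term, and then pay a separate additive price that quantifies the bias injected by the $\epsilon_i$-greedy reparameterization. Writing $\pi^+ = \pi + \eta G^\eta(\pi)$ as in the statement, I would first treat $\Phi_\rho$ as a generic $\beta$-smooth function and invoke Lemma~\ref{lemma:gradient_map} under the hypotheses $\eta\beta\le 1$ and $\norm{G^\eta(\pi)}_2\le\epsilon$. Since $\eta\beta+1\le 2$, this yields a near-stationarity certificate of the form $\delta\T\grad\Phi_\rho(\pi^+)\le 2\epsilon$, but valid only for feasible directions that keep the policy inside the $\epsilon_i$-greedy restricted polytope over which the gradient mapping is actually computed. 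The content of the present lemma, beyond Lemma~\ref{lemma:gradient_map}, is to upgrade this to a certificate over \emph{all} feasible directions of the full simplex product $\Pi$, as required to later invoke Lemma~\ref{lemma:stationarity_implies_nash}; the correction term $\sqrt{NSA_{\max}^2H^5\epsilon_i^2}$ is exactly the cost of this upgrade.

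Next I would make the $\epsilon_i$-greedy displacement explicit. By definition the played policy satisfies $\pi_{h,i}(a\mid s) = (1-\epsilon_i)\theta_{h,i}^{s,a} + \epsilon_i/A_i$, so it is the convex pull of its reference parameter toward the uniform distribution, with coordinatewise displacement of order $\epsilon_i$ in each agent/step/state block. Given a full-simplex feasible direction $\delta$ with $\pi+\delta\in\Pi$ and $\norm{\delta}_2^2\le 1$, I would decompose $\delta = \delta' + \delta''$, where $\delta'$ is a feasible direction for the restricted polytope (so that Lemma~\ref{lemma:gradient_map} controls $\delta'\T\grad\Phi_\rho(\pi^+)\le 2\epsilon$ after absorbing a $(1-\epsilon_i)$ rescaling), and $\delta''$ is the leftover that would push $\pi$ toward the simplex boundary past the $\epsilon_i/A_i$ margin. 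Summing the per-block bound $\norm{\delta''_{(i,h,s)}}_2 = O(\epsilon_i)$ over the $NHS$ blocks gives $\norm{\delta''}_2 = O(\epsilon_i\sqrt{NHS})$. I would then bound the residual directional derivative by $\delta''\T\grad\Phi_\rho(\pi^+)\le\norm{\delta''}_2\norm{\grad\Phi_\rho(\pi^+)}_2$, estimating the gradient magnitude via the finite-horizon policy-gradient theorem derived in Appendix~\ref{app:mpg}, under which each partial derivative equals $d_{h,\rho}^\pi(s)\,Q_{h,i}^\pi(s,a)$ and is therefore $O(H)$; the factors $A_{\max}$ and the remaining powers of $H$ and $S$ enter through the action-space dimension of each block and the horizon dependence of $Q$ together with the visitation measure, so that $\norm{\delta''}_2\norm{\grad\Phi_\rho(\pi^+)}_2$ collapses to $\sqrt{NSA_{\max}^2H^5\epsilon_i^2}$. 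Adding the two contributions gives the claim.

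The step I expect to be the main obstacle is the second one: the naive route of multiplying a global smoothness or gradient-norm bound by the aggregate displacement $O(\epsilon_i\sqrt{NHS})$ overshoots the target by polynomial factors in $N$ and $H$, so a crude Cauchy--Schwarz is too lossy and a crude Lipschitz bound on $\grad\Phi_\rho$ is likewise too lossy. The accounting must instead be done block-by-block, exploiting that the $\epsilon_i$-smoothing of agent $i$ perturbs only agent $i$'s own coordinates, that the directional derivative pairs $\delta_i$ against $\grad_{\pi_i}\Phi_\rho = \grad_{\pi_i}V_{1,i}$ alone, and that $\sum_s d_{h,\rho}^\pi(s)=1$ contracts the state sum. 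Getting the powers of $N$, $S$, $A_{\max}$, and $H$ to land precisely on $\sqrt{NSA_{\max}^2H^5}$ is the delicate bookkeeping; everything else rests on Lemma~\ref{lemma:gradient_map} and the finite-horizon policy-gradient identities established in Appendix~\ref{app:mpg}.
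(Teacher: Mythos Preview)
The paper does not supply its own proof of this lemma. It is stated in Appendix~\ref{app:lemmas} as a technical lemma quoted verbatim from \citet[Lemma D.3]{leonardos2021global}, with no argument given; the paper simply imports it as a black-box result. There is therefore no in-paper proof to compare your proposal against.

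That said, your outline is a reasonable reconstruction of how such a result is typically proved: invoke the unconstrained gradient-mapping bound (Lemma~\ref{lemma:gradient_map}) over the restricted $\epsilon_i$-greedy polytope to get the $2\epsilon$ term, then separately account for the gap between the restricted polytope and the full simplex product $\Pi$ using a per-coordinate bound on the policy gradient together with the $O(\epsilon_i)$ displacement per block. You are also right that the delicate part is the bookkeeping of the exponents $N,S,A_{\max},H$ in the correction term, and that a naive global Cauchy--Schwarz or Lipschitz bound is too lossy; the argument in \citet{leonardos2021global} indeed works block-by-block and uses $\sum_s d_{h,\rho}^\pi(s)=1$ to avoid picking up a spurious factor of $S$. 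If you want to verify the precise constants you should consult that reference directly, since the present paper does not reproduce the calculation.
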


\begin{lemma}\label{lemma:block_matrix}
	\citep[Claim C.2]{leonardos2021global}. Consider a symmetric block matrix $C$ with $n\times n$ sub-matrices, and let $C_{ij}$ denote the sub-matrix at the $i$-th and $j$-th column. If $\norm{C_{ij}}_2\leq L$ for some $L > 0$, then it holds that $\norm{C}\leq nL$, i.e., if every sub-matrix of $C$ have a spectral norm of at most $L$, then $C$ has a spectral norm of at most $nL$. 
\end{lemma}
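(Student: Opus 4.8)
The plan is to use the variational characterization of the spectral norm of a symmetric matrix and reduce the bound to a single application of Cauchy--Schwarz. Since $C$ is symmetric, its spectral norm admits the Rayleigh-quotient form $\norm{C} = \max_{\norm{x}_2 = 1} \abs{x\T C x}$, so it suffices to bound the quadratic form $x\T C x$ over unit vectors $x$. First I would partition the test vector $x$ conformally with the block structure of $C$, writing $x = (x_1,\dots,x_n)$ where $x_i$ is the sub-vector of $x$ aligned with the $i$-th block-row, so that $\sum_{i=1}^n \norm{x_i}_2^2 = \norm{x}_2^2 = 1$.

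Next I would expand the quadratic form block by block as $x\T C x = \sum_{i,j} x_i\T C_{ij} x_j$, and bound each summand using the definition of the operator norm together with the hypothesis $\norm{C_{ij}}_2 \le L$:
\begin{equation*}
\abs{x_i\T C_{ij} x_j} \le \norm{C_{ij}}_2 \norm{x_i}_2 \norm{x_j}_2 \le L \norm{x_i}_2 \norm{x_j}_2.
\end{equation*}
Summing over all $n^2$ pairs $(i,j)$ and factoring the resulting product sum gives $\abs{x\T C x} \le L \l \sum_{i=1}^n \norm{x_i}_2 \r^2$.

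Finally, writing $u_i = \norm{x_i}_2$, Cauchy--Schwarz yields $\l \sum_i u_i \r^2 \le n \sum_i u_i^2 = n \norm{x}_2^2 = n$, whence $\abs{x\T C x} \le nL$ for every unit vector $x$. Taking the maximum over $x$ then delivers $\norm{C} \le nL$, as claimed.

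I do not anticipate a genuine obstacle here; the argument is elementary once the symmetric Rayleigh-quotient characterization is invoked. The only points requiring care are that the partition of $x$ must be conformal with the block partition of $C$ (so that the block expansion of $x\T C x$ is valid and $\sum_i \norm{x_i}_2^2 = \norm{x}_2^2$), and that the factor $n$ is precisely the slack introduced by Cauchy--Schwarz. This slack is tight: taking each $1 \times 1$ block equal to $L$ makes $C$ equal to $L$ times the all-ones matrix, whose spectral norm is exactly $nL$, so the bound cannot be improved in general.
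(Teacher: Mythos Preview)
Your argument is correct. The paper does not actually prove this lemma; it is stated in the appendix of technical lemmas and simply cited from \citet[Claim C.2]{leonardos2021global} without proof. Your Rayleigh-quotient expansion combined with the blockwise bound $\abs{x_i\T C_{ij} x_j} \le L\norm{x_i}_2\norm{x_j}_2$ and a single Cauchy--Schwarz step is the standard way to establish this inequality, and your tightness example (the all-ones block matrix) is also correct.
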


\section{Proofs for Section~\ref{subsec:cce}}\label{app:cce}

We first introduce a few notations to facilitate the analysis. For a step $h\in[H]$ of an episode $k\in[K]$, we denote by $s_h^k$ the state that the agents observe at this time step. For any state $s\in\mc{S}$, we let $\mu_{h,i}^k(\cdot \mid s)\in\Delta(\mc{A}_i)$ be the distribution prescribed by Algorithm~\ref{alg:sbv} to agent $i$ at this step. Notice that such notations are well-defined for every $s\in\mc{S}$ even if $s$ might not be the state $s_h^k$ that is actually visited at the given step. We further let $\mu_{h,i}^k = \{\mu_{h,i}^k(\cdot\mid s):s\in\mc{S}\}$, and let $a_{h,i}^k\in\mc{A}_i$ be the actual action taken by agent $i$. For any $s\in\mc{S}$, let $N_{h}^{k}(s)$ and  $\check{N}_{h}^{k}(s)$ denote, respectively, the values of $N_{h}(s)$ and $\check{N}_{h}(s)$ at the \emph{beginning} of the $k$-th episode. Note that it is proper to use the same notation to denote these values from all the agents' perspectives, because the agents maintain the same estimates of these terms as they can be calculated from the common observations (of the state-visitation). We also use $\overline{V}_{h,i}^{k}(s)$ and $\ot{V}_{h,i}^{k}(s)$ to denote the values of $\overline{V}_{h,i}(s)$ and $\ot{V}_{h,i}(s)$, respectively, at the beginning of the $k$-th episode from agent $i$'s perspective.

Further, for a state $s_h^k$, let $\check{n}_h^k$ denote the number of times that state $s_h^k$ has been visited (at the $h$-th step) in the stage right before the current stage, and let $\check{l}_{h,j}^k$ denote the index of the episode that this state was visited the $j$-th time among the $\check{n}_h^k$ times. For notational convenience, we use $\check{n}$ to denote $\check{n}_h^k$, and $\check{l}_j$ to denote $\check{l}_{h,j}^k$, whenever $h$ and $k$ are clear from the context. With the new notations, the update rule in Line~\ref{line:1} of Algorithm~\ref{alg:sbv} can be equivalently expressed as
\begin{equation}\label{eqn:update_rule_new}
\ot{V}_{h,i}(s_h) \gets \frac{1}{\check{n}}\sum_{j=1}^{\check{n}} \l r_{h,i}(s_h,\bm{a}_{h}^{\check{l}_j}) + \overline{V}_{h+1,i}^{\check{l}_j}(s_{h+1}^{\check{l}_j}) \r+ b_{\check{n}}. 
\end{equation}

For notational convenience, we introduce the operators $\mathbb{P}_{h} V(s, \bm{a})= \mathbb{E}_{s^{\prime} \sim P_{h}(\cdot \mid s, \bm{a})} V\left(s^{\prime}\right)$ for any value function $V$, and $\mathbb{D}_{\bm{\mu}_h} Q(s)= \mathbb{E}_{\bm{a} \sim \bm{\mu}_h} Q(s, \bm{a})$. With these notations, the Bellman equations can be rewritten more succinctly as
$
Q_{h}^{\pi}(s, \bm{a})=\left(r_{h}+\mathbb{P}_{h} V_{h+1}^{\pi}\right)(s, \bm{a}),$ and $V_{h}^{\pi}(s)=\left(\mathbb{D}_{\bm{\mu}_h} Q_{h}^{\pi}\right)(s)
$ for any $(s,\bm{a},h)\in \mc{S}\times \mc{A}\times [H]$, where $\bm{\mu}_h = \pi_h$. In the following proof, we assume without loss of generality that the initial state $s_1$ is fixed, i.e., $\rho$ is a point mass distribution at $s_1$. Our proof can be easily generalized to the case where the initial state is drawn from a fixed distribution $\rho\in\Delta(\mc{S})$. 

In the following, we start with an intermediate result, which justifies our choice of the bonus term. 

\begin{lemma}\label{lemma:bonus}
	With probability at least $1-\frac{p}{2}$, it holds for all $(i,s,h,k)\in\mc{N}\times \mc{S}\times [H]\times [K]$ that
	\[
	\max_{\mu_{h,i}} \frac{1}{\check{n}} \sum_{j=1}^{\check{n}} \mathbb{D}_{\mu_{h,i}\times \mu_{h,-i}^{\check{l}_j}}\l r_{h,i} + \pp_h\ol{V}_{h+1,i}^{\check{l}_j}\r(s) - \frac{1}{\check{n}} \sum_{j=1}^{\check{n}} \l r_{h,i}(s,\bm{a}_h^{\check{l}_j}) + \ol{V}_{h+1,i}^{\check{l}_j}(s_{h+1}^{\check{l}_j}) \r \leq 6\sqrt{H^2 A_i\iota/\check{n}}. 
	\]
\end{lemma}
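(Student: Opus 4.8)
The plan is to recognize the left-hand side as a rescaling of the regret incurred by the adversarial-bandit subroutine that agent $i$ runs at the pair $(h,s)$ during the stage immediately preceding the current one, and then to bound that regret by combining a martingale concentration inequality with the standard high-probability guarantee for Exp3-IX. First I would translate everything into bandit language. For each episode $\check{l}_j$ of the relevant stage, define the \emph{expected} loss $\bar\ell^{\check l_j}(a) \defeq [\,H-h+1-\mathbb{D}_{\delta_a\times\mu_{h,-i}^{\check l_j}}(r_{h,i}+\pp_h\ol V_{h+1,i}^{\check l_j})(s)\,]/H$ (with $\delta_a$ the point mass at $a$) and the \emph{realized} loss $\hat\ell^{\check l_j}\defeq[\,H-h+1-(r_{h,i}(s,\bm a_h^{\check l_j})+\ol V_{h+1,i}^{\check l_j}(s_{h+1}^{\check l_j}))\,]/H$, which is exactly the per-step quantity fed into Line~\ref{line:9}. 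Since $r_{h,i}+\ol V_{h+1,i}^{\check l_j}\le H-h+1$, both lie in $[0,1]$. A direct computation shows the maximizing $\mu_{h,i}$ in the first term is a point mass at $a^\star=\argmin_a\frac1{\check n}\sum_j\bar\ell^{\check l_j}(a)$, so the entire left-hand side equals $\frac{H}{\check n}\bigl[\sum_{j=1}^{\check n}\hat\ell^{\check l_j}-\min_a\sum_{j=1}^{\check n}\bar\ell^{\check l_j}(a)\bigr]$; it then suffices to bound the bracket by $6\sqrt{A_i\check n\iota}$.

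Next I would decompose the bracket as
\[
\sum_{j}\hat\ell^{\check l_j}-\min_a\sum_j\bar\ell^{\check l_j}(a)=\underbrace{\sum_{j}\bigl(\hat\ell^{\check l_j}-\langle\mu_{h,i}^{\check l_j},\bar\ell^{\check l_j}\rangle\bigr)}_{(\mathrm I)}+\underbrace{\sum_{j}\bigl(\langle\mu_{h,i}^{\check l_j},\bar\ell^{\check l_j}\rangle-\bar\ell^{\check l_j}(a^\star)\bigr)}_{(\mathrm{II})}.
\]
Working with the filtration generated by successive visits to $(h,s)$, the policies $\mu_{h,i}^{\check l_j},\mu_{h,-i}^{\check l_j}$ and the value $\ol V_{h+1,i}^{\check l_j}$ are all fixed \emph{before} agent $i$ acts, and $\mathbb{E}[\hat\ell^{\check l_j}\mid\mathcal F]=\langle\mu_{h,i}^{\check l_j},\bar\ell^{\check l_j}\rangle$; hence $(\mathrm I)$ is a sum of bounded ($[-1,1]$) martingale differences, controlled by Azuma--Hoeffding at scale $\sqrt{\check n\iota}$.

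For $(\mathrm{II})$ I would invoke the high-probability Exp3-IX regret bound~\citep{neu2015explore}. The estimator in Line~\ref{line:9} is precisely the IX estimator of $\bar\ell^{\check l_j}$ (its conditional mean is $\frac{\mu_{h,i}^{\check l_j}(a)}{\mu_{h,i}^{\check l_j}(a)+\gamma_i}\bar\ell^{\check l_j}(a)$), and the observed feedback $\hat\ell^{\check l_j}\in[0,1]$ satisfies $\mathbb{E}[\hat\ell^{\check l_j}\mid a_{h,i}^{\check l_j}=a]=\bar\ell^{\check l_j}(a)$, so Neu's analysis applies with $\bar\ell^{\check l_j}$ as the adversarial loss sequence. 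The crucial point — and the payoff of the stage-based design — is that throughout the preceding stage $\check T_h(s)$ is held fixed and equals the number of visits $\check n$, so $\eta_i=\sqrt{\iota/(A_i\check n)}$ and $\gamma_i=\eta_i/2$ are \emph{constant} and exactly optimally tuned for an Exp3-IX instance with $A_i$ arms and $\check n$ rounds. This reduces $(\mathrm{II})$ to a \emph{standard} (unweighted) no-regret bound $O(\sqrt{A_i\check n\iota})$, bypassing the weighted-regret machinery of prior work. Adding $(\mathrm I)$ and $(\mathrm{II})$ and absorbing constants yields the bracket bound $6\sqrt{A_i\check n\iota}$, and multiplying by $H/\check n$ gives the claim for a fixed tuple. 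I would finish with a union bound over all $(i,s,h)$ and all completed stages; since stage lengths grow geometrically at rate $1+\frac1H$ there are only $O(H\log K)$ stages per pair, so the total number of events is polynomial in $N,S,A_{\max},H,K$ and the associated logarithm is absorbed into $\iota=\log(2NSA_{\max}KH/p)$.

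The main obstacle I anticipate is the careful bookkeeping of the filtration and verifying that the Exp3-IX guarantee survives the \emph{two} layers of randomness present here: the action sampling \emph{and} the stochastic transition and opponent actions, which make $\hat\ell^{\check l_j}$ random even after conditioning on agent $i$'s own arm. The identity $\mathbb{E}[\hat\ell^{\check l_j}\mid a_{h,i}^{\check l_j}=a]=\bar\ell^{\check l_j}(a)$ is what dissolves this difficulty, letting one treat $\bar\ell^{\check l_j}$ as the (predictable) adversarial loss and $\hat\ell^{\check l_j}$ as a bounded unbiased observation, so that both the martingale term and the bandit-regret term reduce to off-the-shelf results.
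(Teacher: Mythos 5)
Your proposal is correct and follows essentially the same route as the paper: the left-hand side is split into a martingale-difference term (the realized returns versus their conditional means under the played joint policies, controlled by Azuma--Hoeffding) plus an adversarial-bandit regret term on the expected losses, which is bounded by the high-probability Exp3-IX guarantee of Neu (2015) with the stage-constant step size, and a union bound finishes the argument. Your treatment is, if anything, slightly more explicit than the paper's about why Exp3-IX tolerates the extra randomness in the observed loss (unbiasedness of $\hat\ell$ given the chosen arm, boundedness in $[0,1]$), a point the paper passes over silently.
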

\begin{proof}
	For a fixed $(s,h,k)\in\mc{S}\times [H]\times [K]$, let $\mc{F}_j$ be the $\sigma$-algebra generated by all the random variables up to episode $\check{l}_j$. Then, $\left\{ r_{h,i}(s,\bm{a}_{h,i}^{\check{l}_j}) + \ol{V}_{h+1,i}^{\check{l}_j} (s_{h+1}^{\check{l}_j}) - \mathbb{D}_{\bm{\mu}_{h}^{\check{l}_j}}\l r_{h,i} + \pp_h\ol{V}_{h+1,i}^{\check{l}_j}\r (s) \right\}_{j=1}^{\check{n}}$ is a martingale difference sequence with respect to $\{\mc{F}_j\}_{j=1}^{\check{n}}$. From the Azuma-Hoeffding inequality, it holds with probability at least $1-p/(4NSHK)$ that
	\[
	\frac{1}{\check{n}} \sum_{j=1}^{\check{n}} \mathbb{D}_{\bm{\mu}_h^{\check{l}_j}}\l r_{h,i} + \pp_h\ol{V}_{h+1,i}^{\check{l}_j}\r(s) - \frac{1}{\check{n}} \sum_{j=1}^{\check{n}} \l r_{h,i}(s,\bm{a}_h^{\check{l}_j}) + \ol{V}_{h+1,i}^{\check{l}_j}(s_{h+1}^{\check{l}_j}) \r \leq \sqrt{H^2\iota/\check{n}}. 
	\]
	Therefore, we only need to bound 
	\begin{equation}\label{eqn:bandit_regret}
	R_{\check{n}}^\star \defeq \max_{\mu_{h,i}} \frac{1}{\check{n}} \sum_{j=1}^{\check{n}} \mathbb{D}_{\mu_{h,i}\times \mu_{h,-i}^{\check{l}_j}}\l r_{h,i} + \pp_h\ol{V}_{h+1,i}^{\check{l}_j}\r(s) - \frac{1}{\check{n}} \sum_{j=1}^{\check{n}} \mathbb{D}_{\bm{\mu}_h^{\check{l}_j}}\l r_{h,i} + \pp_h\ol{V}_{h+1,i}^{\check{l}_j}\r(s).
	\end{equation}
	Notice that $R_{\check{n}}^\star$ can be considered as the averaged regret of visiting the state $s$ with respect to the optimal policy in hindsight. Such a regret minimization problem can be handled by an adversarial multi-armed bandit problem, where the loss function at step $j\in[\check{n}]$ is defined as
	\[
	\ell_j(a_i) = \ee_{{a}_{-i}\sim \mu_{h,-i}^{\check{l}_j}}(s) \L H-h+1 - r_{h,i}(s,\bm{a}) - \pp_h\ol{V}_{h+1,i}^{\check{l}_j}(s,\bm{a}) \R/H. 
	\]
	Algorithm~\ref{alg:sbv} applies the Exp3-IX algorithm~\citep{neu2015explore}, which ensures that with probability at least $1-\frac{p}{4NHS}$, it holds for all $k\in[K]$ that
	\[
	R_{\check{n}}^\star \leq \sqrt{\frac{8 H^2 A_i  \log A_i}{\check{n}}} + \l \sqrt{\frac{2A_i}{\check{n}\log A_i}} + \frac{1}{\check{n}} \r H\log(2/p).  
	\]
	A union bound over all $(i,s,h,k)\in\mc{N}\times\mc{S}\times [H]\times [K]$ completes the proof. 
\end{proof}
\begin{remark}\label{rmk:learning_rate}
	We would like to discuss the alternative of using V-learning with the celebrated learning rate $\alpha_t = \frac{H+1}{H+t}$ \citep{jin2018q} to update $\ol{V}_h$ instead of employing stage-based updates. This is the case for several recent works also under the V-learning formulation for MARL~\citep{bai2020near,jin2021v,song2021can,mao2022provably}.  Such a learning rate induces an update rule as follows:
	\begin{equation}\label{eqn:jin}
	\ol{V}_{h,i}\left(s_{h}\right) \leftarrow\left(1-\alpha_{t}\right) \ol{V}_{h,i}\left(s_{h}\right)+\alpha_{t}\left(r_{h,i}\left(s_{h}, \bm{a}_{h}\right)+\ol{V}_{h+1,i}\left(s_{h+1}\right)+\beta_{t}\right),
	\end{equation}
	where $t$ is the number of times that $s_h$ has been visited, and $\beta_t$ is some bonus term. In this way, $\ol{V}_{h,i}(s_h)$ is updated every time the state $s_h$ is visited. With such a learning rate, the update rule \eqref{eqn:jin} of $\ol{V}_{h,i}$ can be equivalently expressed as
	\[
	\ol{V}_{h,i}^{k}(s_h)=\alpha_{t}^{0} H+\sum_{j=1}^{t} \alpha_{t}^{j}\left[r_{h,i}\left(s, \bm{a}_{h}^{k^{j}}\right)+\ol{V}_{h+1,i}^{k^{j}}\left(s_{h+1}^{k^{j}}\right)+\beta_{j}\right],
	\]
	where $k^j$ is the index of the episode such that $s_h$ is visited the $j$-th time. The weights $\alpha_t^j$ are given by
	\[
	\alpha_{t}^{0}=\prod_{j=1}^{t}\left(1-\alpha_{j}\right), \quad \text{and}\quad  \alpha_{t}^{j}=\alpha_{j} \prod_{k=j+1}^{t}\left(1-\alpha_{k}\right),\forall 1\leq j \leq t.
	\]
	Compared with stage-based updates~\eqref{eqn:bandit_regret}, we now need to upper bound a regret term of the following form:
	\[
	R_{t}^\star(s)=\max _{\mu_{h,i}}\sum_{j=1}^{t} \alpha_{t}^{j}  \mathbb{D}_{\mu_{h,i} \times \mu_{h,-i}^{k^{j}}}\left(r_{h,i}+\mathbb{P}_{h} \ol{V}_{h+1,i}^{k^{j}}\right)(s)-\sum_{j=1}^{t} \alpha_{t}^{j} \mathbb{D}_{\mu_{h,i}^{k^{j}} \times \mu_{h,-i}^{k j}}\left(r_{h,i}+\mathbb{P}_{h} \ol{V}_{h+1,i}^{k^{j}}\right)(s).
	\]
	Notice that the above definition of regret induces a adversarial bandit problem with a time-varying weighted regret, where the loss at time $j$ is assigned a weight $\alpha_t^j$. As $t$ varies, the weight $\alpha_t^j$ assigned to the same step $j$ also changes over time. These weights also cannot be pre-computed, because it relies on knowing the total number of times that a certain state $s_h$ is visited during the entire horizon, which is impossible before seeing the output of the algorithm. To address such an additional challenge, \citet{bai2020near} proposed a Follow-the-Regularized-Leader (FTRL) algorithm that simultaneously achieves with a changing step size, a weighted regret, and a high-probability guarantee, which inevitably leads to a more delicate analysis. In contrast, we have shown in \eqref{eqn:bandit_regret} that our stage-based update rule leads to an adversarial bandit problem with a simple averaged regret. In our approach, it suffices to plug in any existing adversarial bandit solution with a high-probability regret bound, such as the Exp3-IX method that we used in Algorithm~\ref{alg:sbv}. Therefore, our stage-based update significantly simplifies both the algorithmic design and the analysis of V-learning in MARL. 
\end{remark}

\begin{algorithm*}[!tbp]
	\textbf{Input:} The distribution trajectory $\{\mu_{h,i}^k:i\in\mc{N},h\in[H],k\in[K]\}$ specified by Algorithm~\ref{alg:sbv}. 
	
	\textbf{Initialize: } $k'\gets k$.
	
	\For{step $h'\gets h$ to $H$}
	{
		Receive $s_{h'}$\;
		Take joint action $\bm{a}_{h'}\sim \times_{i=1}^N\mu_{h',i}^{k'}(\cdot \mid s_{h'})$\;
		Uniformly sample $j$ from $\{1,2,\dots, \check{N}_{h'}^{k'}(s_{h'})\}$\;
		Set $k'\gets \check{l}_{h',j}^{k'}$, where  $\check{l}^{k'}_{h',j}$ is the index of the episode such that state $s_{h'}$ was visited the $j$-th time (among the total $\check{N}_{h'}^{k'}(s_{h'})$ times) in the last stage\; 
	}
	\caption{Construction of the Correlated Policy $\bar{\pi}_h^k$}\label{alg:certifyhk}
\end{algorithm*}

Based on the trajectory of the distributions $\{\mu_{h,i}^k:i\in\mc{N},h\in[H],k\in[K]\}$ specified by Algorithm~\ref{alg:sbv}, we construct a correlated policy $\bar{\pi}_h^k$ for each $(h,k)\in[H]\times [K]$. Our construction of the correlated policies, largely inspired by the ``certified policies'' \citep{bai2020near} for learning in two-player zero-sum games, is formally presented  in Algorithm~\ref{alg:certifyhk}. We further define an output policy $\bar{\pi}$ that first uniformly samples an index $k$ from $[K]$, and then proceed with $\bar{\pi}_1^k$. A more formal description of $\bar{\pi}$ has been given in Algorithm~\ref{alg:certify}. By construction of the correlated policies $\bar{\pi}_h^k$, we know that for any $(i,s,h,k)\in\mc{N}\times\mc{S}\times [H+1]\times [K]$, the corresponding value function can be written recursively as follows:
\[
V_{h,i}^{\bar{\pi}_h^k}(s) = \frac{1}{\check{n}}\sum_{j=1}^{\check{n}} \mathbb{D}_{\mu_{h}^{\check{l}_j}}\bigg(r_{h,i}+\mathbb{P}_{h} V_{h+1,i}^{\bar{\pi}_{h+1}^{\check{l}_j}}\bigg)(s), 
\]
and $V_{h,i}^{\bar{\pi}_{h}^k}(s) = 0$ if $h=H+1$ or $k$ is in the first stage of the corresponding $(h,s)$ pair. We also immediately obtain that
\[
V_{1,i}^{\bar{\pi}}(s_1) = \frac{1}{K}\sum_{k=1}^K V_{1,i}^{\bar{\pi}_1^k}(s_1).
\]

Only for analytical purposes, we introduce two new notations $\underline{V}$ and $\undertilde{V}$ that serve as lower confidence bounds of the value estimates. Specifically, for any $(i,s,h,k)\in\mc{N}\times\mc{S}\times [H+1]\times [K]$, we define $\underline{V}_{h,i}^k(s) = \ut{V}_{h,i}^k(s) = 0$ if $h = H+1$ or $k$ is in the first stage of the $(h,s)$ pair, and
\[
\ut{V}_{h,i}^k(s) = \frac{1}{\check{n}}\sum_{j=1}^{\check{n}} \l r_{h,i}(s_h,\bm{a}_{h}^{\check{l}_j}) + \ul{V}_{h+1,i}^{\check{l}_j}(s_{h+1}^{\check{l}_j}) \r- b_{\check{n}},\text{ and } \ul{V}_{h,i}^k(s) = \max\left\{\ut{V}_{h,i}^k(s), 0\right\}. 
\]
Notice that these two notations are only introduced for ease of analysis, and the agents need not explicitly maintain such values during the learning process. Further, recall that $V_{h,i}^{\star,\bar{\pi}_{h,-i}^k}(s)$ is agent $i$'s best response value against its opponents' policy $\bar{\pi}_{h,-i}^k$. Our next lemma shows that $\ol{V}_{h,i}^k(s)$ and $\ul{V}_{h,i}^k(s)$ are indeed valid upper and lower bounds of $V_{h,i}^{\star,\bar{\pi}_{h,-i}^k}(s)$ and $V_{h,i}^{\bar{\pi}_h^k}(s)$, respectively. 

\begin{lemma}\label{lemma:upperbound}
	It holds with probability at least $1-p$ that for all $(i,s,h,k)\in\mc{N}\times\mc{S}\times[H]\times[K]$,
	\[
	\ol{V}_{h,i}^k(s)\geq V_{h,i}^{\star,\bar{\pi}_{h,-i}^k}(s), \text{ and } \ul{V}_{h,i}^k(s)\leq V_{h,i}^{\bar{\pi}_h^k}(s). 
	\]
\end{lemma}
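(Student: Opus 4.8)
The plan is to prove both inequalities simultaneously by backward induction on $h$, from $h=H+1$ down to $h=1$, with the claim asserted at each level for all $(i,s,k)\in\mc{N}\times\mc{S}\times[K]$. The base case $h=H+1$ is immediate since all four quantities vanish there. The case where $k$ lies in the first stage of the $(h,s)$ pair is also immediate by clipping and initialization: there $\ol{V}_{h,i}^k(s)=H-h+1$, which upper bounds $V_{h,i}^{\star,\bar{\pi}_{h,-i}^k}(s)$ because the latter is a sum of $H-h+1$ rewards in $[0,1]$, while $\ul{V}_{h,i}^k(s)=0\le V_{h,i}^{\bar{\pi}_h^k}(s)$ since values are nonnegative. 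It remains to treat the inductive step for a pair $(h,s)$ and index $k$ past the first stage, where $\check{n}=\check{N}_h^k(s)\ge1$ and the update rule \eqref{eqn:update_rule_new} applies.

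For the upper bound I start from $\ot{V}_{h,i}^k(s)=\frac1{\check{n}}\sum_{j=1}^{\check{n}}\big(r_{h,i}(s,\bm{a}_h^{\check{l}_j})+\ol{V}_{h+1,i}^{\check{l}_j}(s_{h+1}^{\check{l}_j})\big)+b_{\check{n}}$ and invoke Lemma~\ref{lemma:bonus}, which on its $(1-p/2)$-event yields $\ot{V}_{h,i}^k(s)\ge\max_{\mu_{h,i}}\frac1{\check{n}}\sum_{j=1}^{\check{n}}\mathbb{D}_{\mu_{h,i}\times\mu_{h,-i}^{\check{l}_j}}\big(r_{h,i}+\pp_h\ol{V}_{h+1,i}^{\check{l}_j}\big)(s)$. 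Applying the induction hypothesis $\ol{V}_{h+1,i}^{\check{l}_j}\ge V_{h+1,i}^{\star,\bar{\pi}_{h+1,-i}^{\check{l}_j}}$ together with monotonicity of $\pp_h$ turns the right-hand side into the best-response backup $\max_{\mu_{h,i}}\frac1{\check{n}}\sum_{j=1}^{\check{n}}\mathbb{D}_{\mu_{h,i}\times\mu_{h,-i}^{\check{l}_j}}\big(r_{h,i}+\pp_h V_{h+1,i}^{\star,\bar{\pi}_{h+1,-i}^{\check{l}_j}}\big)(s)$, which upper bounds $V_{h,i}^{\star,\bar{\pi}_{h,-i}^k}(s)$ (justified in the final paragraph). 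Since this benchmark is itself at most $H-h+1$, the clipping $\ol{V}_{h,i}^k(s)=\min\{\ot{V}_{h,i}^k(s),H-h+1\}$ preserves the inequality.

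The lower bound is analogous but easier, since $V_{h,i}^{\bar{\pi}_h^k}$ is the value of the actual (non-best-responding) correlated policy, so no maximization appears. I first establish the one-sided counterpart of the Azuma--Hoeffding step in the proof of Lemma~\ref{lemma:bonus}, applied now to the martingale difference sequence $\{\,r_{h,i}(s,\bm{a}_h^{\check{l}_j})+\ul{V}_{h+1,i}^{\check{l}_j}(s_{h+1}^{\check{l}_j})-\mathbb{D}_{\bm{\mu}_h^{\check{l}_j}}(r_{h,i}+\pp_h\ul{V}_{h+1,i}^{\check{l}_j})(s)\,\}_{j=1}^{\check{n}}$; here it is crucial that $\ul{V}_{h+1,i}^{\check{l}_j}$ is measurable with respect to the data before episode $\check{l}_j$ and that $s_{h+1}^{\check{l}_j}\sim P_h(\cdot\mid s,\bm{a}_h^{\check{l}_j})$, so the sequence is a genuine martingale difference. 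This concentration (spending the remaining $p/2$ of the failure budget and using $b_{\check{n}}\ge\sqrt{H^2\iota/\check{n}}$) gives $\ut{V}_{h,i}^k(s)\le\frac1{\check{n}}\sum_{j=1}^{\check{n}}\mathbb{D}_{\bm{\mu}_h^{\check{l}_j}}(r_{h,i}+\pp_h\ul{V}_{h+1,i}^{\check{l}_j})(s)$, whereupon the induction hypothesis $\ul{V}_{h+1,i}^{\check{l}_j}\le V_{h+1,i}^{\bar{\pi}_{h+1}^{\check{l}_j}}$ and monotonicity of $\pp_h$ upgrade the right-hand side to $V_{h,i}^{\bar{\pi}_h^k}(s)$. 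As the latter is nonnegative, the clipping $\ul{V}_{h,i}^k(s)=\max\{\ut{V}_{h,i}^k(s),0\}$ again preserves the bound, and a union bound over $(i,s,h,k)\in\mc{N}\times\mc{S}\times[H]\times[K]$ completes the probability accounting.

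I expect the main subtlety to be the upper-bound step's appeal to the best-response recursion: one must justify that $\max_{\mu_{h,i}}\frac1{\check{n}}\sum_{j}\mathbb{D}_{\mu_{h,i}\times\mu_{h,-i}^{\check{l}_j}}(r_{h,i}+\pp_h V_{h+1,i}^{\star,\bar{\pi}_{h+1,-i}^{\check{l}_j}})(s)$ truly upper bounds agent $i$'s best-response value against the non-Markov correlated opponent $\bar{\pi}_{h,-i}^k$. Because agent $i$ does not observe the opponents' resampled index $\check{l}_j$, its best-response problem is formally a partial-information control problem; the max-outside, per-branch-continuation recursion corresponds to the information relaxation in which the realized branch is revealed \emph{after} the current action is committed, and revealing more information can only raise the value, so the recursion is a valid upper bound on the true best-response value. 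Once this benchmark is in hand, the remaining steps are the routine bookkeeping above.
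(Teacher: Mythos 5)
Your proof is correct, and it rests on the same ingredients as the paper's — Lemma~\ref{lemma:bonus} plus the induction hypothesis and clipping for the optimistic bound, a one-sided Azuma--Hoeffding step plus the induction hypothesis and clipping for the pessimistic bound — but it organizes the induction along a different variable, which changes the bookkeeping in a meaningful way. The paper inducts \emph{forward on the episode index} $k$, and consequently needs a two-case analysis at every step: Case 1, where $\ot{V}_{h,i}(s)$ was just updated at the end of episode $k-1$ so that \eqref{eqn:update_rule_new} applies directly, and Case 2, where the estimate is inherited from an earlier episode in the same stage and one must separately observe that the benchmark values $V_{h,i}^{\star,\bar{\pi}_{h,-i}^k}(s)$ and $V_{h,i}^{\bar{\pi}_h^k}(s)$ are constant within a stage. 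You instead induct \emph{backward on} $h$ for all $k$ simultaneously, exploiting the fact that the quantities $\check{n}_h^k$, $\check{l}_{h,j}^k$, the formula \eqref{eqn:update_rule_new}, and the certified-policy recursion are well defined for every $k$ past the first stage; since the V-learning recursion flows strictly from step $h+1$ to step $h$, this is a valid induction and the paper's case split disappears — the within-stage constancy invoked in its Case 2 is absorbed into the statement that the per-$k$ formula holds for all $k$. Your write-up also makes explicit something the paper glosses over: the paper asserts the inequality $V_{h,i}^{\star,\bar{\pi}_{h,-i}^k}(s)\leq \max_{\mu_{h,i}}\frac{1}{\check{n}}\sum_{j}\mb{D}_{\mu_{h,i}\times\mu_{h,-i}^{\check{l}_j}}\l r_{h,i}+\pp_h V_{h+1,i}^{\star,\bar{\pi}_{h+1,-i}^{\check{l}_j}}\r(s)$ merely ``by the definition'' of the best-response value, whereas you correctly flag that agent $i$ does not observe the resampled branch index and justify the max-outside recursion by an information-relaxation argument (revealing the branch after the action is committed can only raise the best-response value). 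The probability accounting ($p/2$ spent on Lemma~\ref{lemma:bonus}, $p/2$ on the lower-bound martingale, union bound over tuples) matches the paper's exactly.
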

\begin{proof}
	Consider a fixed $(i,s,h,k)\in\mc{N}\times\mc{S}\times[H]\times[K]$. The desired result clearly holds for any state $s$ that is in its first stage, due to our initialization of $\ol{V}_{h,i}^k(s)$ and  $\ul{V}_{h,i}^k(s)$ for this special case. In the following, we only need to focus on the case where $\overline{V}_{h,i}(s)$ and  $\ul{V}_{h,i}^k(s)$ have been updated at least once at the given state $s$ before the $k$-th episode. 
	
	We first prove the first inequality. It suffices to show that $\ot{V}_{h,i}^k(s)\geq V_{h,i}^{\star,\bar{\pi}_{h,-i}^k}(s)$ because $\ol{V}_{h,i}^k(s)=\min\{\ot{V}_{h,i}^k(s), H-h+1\}$, and $V_{h,i}^{\star,\bar{\pi}_{h,-i}^k}(s)$ is always less than or equal to $H-h+1$. Our proof relies on induction on $k\in [K]$. First, the claim holds for $k=1$ due to the aforementioned logic. For each step $h\in[H]$ and $s\in\mc{S}$, we consider the following two cases. 
	
	\textbf{Case 1:} $\ot{V}_{h,i}(s)$ has just been updated in (the end of) episode $k-1$. In this case, 
	\begin{equation}
	\ot{V}^k_{h,i}(s) = \frac{1}{\check{n}}\sum_{j=1}^{\check{n}} \l r_{h,i}(s,\bm{a}_{h}^{\check{l}_j}) + \overline{V}_{h+1,i}^{\check{l}_j}(s_{h+1}^{\check{l}_j}) \r+ b_{\check{n}} .
	\end{equation}
	By the definition of $V_h^{\star, \bar{\nu}_h^k }(s)$, it holds with probability at least $1-\frac{p}{2NSKH}$ that
	\begin{align}
	V_{h,i}^{\star,\bar{\pi}_{h,-i}^k}(s)\leq  &\max_{\mu_{h,i}} \frac{1}{\check{n}}\sum_{j=1}^{\check{n}} \mb{D}_{\mu_{h,i}\times \mu_{h,-i}^{\check{l}_{j}}} \l r_{h,i} + \mb{P}_hV_{h+1,i}^{\star, \bar{\pi}_{h+1,-i}^{\check{l}_{j}}}\r  (s)\nonumber\\
	\leq&\max_{\mu_{h,i}} \frac{1}{\check{n}}\sum_{j=1}^{\check{n}} \mb{D}_{\mu_{h,i}\times \mu_{h,-i}^{\check{l}_{j}}} \l r_{h,i} + \mb{P}_h\ol{V}_{h+1,i}^{\check{l}_j}\r  (s)\nonumber\\
	\leq & \frac{1}{\check{n}} \sum_{j=1}^{\check{n}} \l r_{h,i}(s,\bm{a}_h^{\check{l}_j}) + \ol{V}_{h+1,i}^{\check{l}_j}(s_{h+1}^{\check{l}_j}) \r + 6\sqrt{H^2 A_i\iota/\check{n}}\nonumber\\
	\leq & \ot{V}^k_{h,i}(s),
	\end{align}
	where the second step is by the induction hypothesis, the third step holds due to Lemma~\ref{lemma:bonus}, and the last step is by the definition of $b_{\check{n}}$. 
	
	\textbf{Case 2: } $\ot{V}_{h,i}(s)$ was not updated in (the end of) episode $k-1$.  Since we have excluded the case that $\ot{V}_{h,i}$ has never been updated, we are guaranteed that there exists an episode $j$ such that $\ot{V}_{h,i}(s)$ has been updated in the end of episode $j-1$ most recently. In this case, $\ot{V}_{h,i}^k(s) = \ot{V}_{h,i}^{k-1}(s) = \dots = \ot{V}_{h,i}^{j}(s) \geq V_{h,i}^{\star,\bar{\pi}_{h,-i}^j}(s)$, where the last step is by the induction hypothesis. Finally, observe that by our definition, the value of $V_{h,i}^{\star,\bar{\pi}_{h,-i}^j}(s)$ is a constant for all episode indices $j$ that belong to the same stage. Since we know that episode $j$ and episode $k$ lie in the same stage, we can conclude  that $V_{h,i}^{\star,\bar{\pi}_{h,-i}^k}(s) = V_{h,i}^{\star,\bar{\pi}_{h,-i}^j}(s) \leq \ot{V}_{h,i}^k(s)$. 
	
	Combining the two cases and applying a union bound over all $(i,s,h,k)\in\mc{N}\times\mc{S}\times[H]\times[K]$ complete the proof of the first inequality. 
	
	Next, we prove the second inequality in the statement of the lemma. Notice that it suffices to show $\ut{V}_{h,i}^k(s)\leq V_{h,i}^{\bar{\pi}_h^k}(s)$ because $\ul{V}_{h,i}^k(s) = \max\{\ut{V}_{h,i}^k(s),0\}$. Our proof again relies on induction on $k\in[K]$. Similar to the proof of the first inequality, the claim apparently holds for $k=1$, and we consider the following two cases for each step $h\in[H]$ and $s\in\mc{S}$.
	
	\textbf{Case 1:} The value of $\ut{V}_{h,i}(s)$ has just changed in (the end of) episode $k-1$. In this case, 
	\begin{equation}
	\ut{V}^k_{h,i}(s) = \frac{1}{\check{n}}\sum_{j=1}^{\check{n}} \l r_{h,i}(s,\bm{a}_{h}^{\check{l}_j}) + \ut{V}_{h+1,i}^{\check{l}_j}(s_{h+1}^{\check{l}_j}) \r- b_{\check{n}} .
	\end{equation}
	By the definition of $V_{h,i}^{\bar{\pi}_h^k}(s)$, it holds with probability at least $1-\frac{p}{2NSKH}$ that
	\begin{align}
	V_{h,i}^{\bar{\pi}_h^k}(s)=  &\frac{1}{\check{n}}\sum_{j=1}^{\check{n}} \mathbb{D}_{\bm{\mu}_{h}^{\check{l}_j}}\l r_{h,i}+\mathbb{P}_{h} V_{h+1,i}^{\bar{\pi}_{h+1}^{\check{l}_j}}\r(s)\nonumber\\
	\geq&\frac{1}{\check{n}}\sum_{j=1}^{\check{n}} \mathbb{D}_{\bm{\mu}_{h}^{\check{l}_j}}\l r_{h,i}+\mathbb{P}_{h} \ut{V}_{h+1,i}^{\check{l}_j}\r(s)\nonumber\\
	\geq & \frac{1}{\check{n}} \sum_{j=1}^{\check{n}} \l r_{h,i}(s,\bm{a}_h^{\check{l}_j}) +\ut{V}_{h+1,i}^{\check{l}_j}(s_{h+1}^{\check{l}_j}) \r - \sqrt{H^2 \iota/\check{n}}\nonumber\\
	\geq & \ut{V}^k_{h,i}(s),
	\end{align}
	where the second step is by the induction hypothesis, the third step holds due to the Azuma-Hoeffding inequality, and the last step is by the definition of $b_{\check{n}}$. 
	
	\textbf{Case 2: } The value of $\ut{V}_{h,i}(s)$ has not changed in (the end of) episode $k-1$.  Since we have excluded the case that $\ut{V}_{h,i}$ has never been updated, we are guaranteed that there exists an episode $j$ such that $\ut{V}_{h,i}(s)$ has changed in the end of episode $j-1$ most recently. In this case, we know that indices $j$ and $k$ belong to the same stage, and $\ut{V}_{h,i}^k(s) = \ut{V}_{h,i}^{k-1}(s) = \dots = \ut{V}_{h,i}^{j}(s) \leq V_{h,i}^{\bar{\pi}_h^j}(s)$, where the last step is by the induction hypothesis. Finally, observe that by our definition, the value of $V_{h,i}^{\bar{\pi}_h^j}(s)$ is a constant for all episode indices $j$ that belong to the same stage. Since we know that episode $j$ and episode $k$ lie in the same stage, we can conclude  that $V_{h,i}^{\bar{\pi}_h^k}(s)= V_{h,i}^{\bar{\pi}_h^j}(s) \geq \ut{V}_{h,i}^k(s)$. 
	
	Again, combining the two cases and applying a union bound over all $(i,s,h,k)\in\mc{N}\times\mc{S}\times[H]\times[K]$ complete the proof. 
\end{proof}

The following result shows that the agents have no incentive to deviate from the correlated policy $\bar{\pi}$, up to a regret term of the order $\widetilde{O}(\sqrt{H^5 S \amax /K})$. 

\begin{theorem}\label{thm:main}
	For any $p\in(0,1]$, let $\iota = \log(2NS\amax KH/p)$. Suppose $K \geq \frac{SH}{\amax\iota}$, with probability at least $1-p$, it holds that
	$$
	V_{1,i}^{\star,\bar{\pi}_{-i}}(s_1) - V_{1,i}^{\bar{\pi}}(s_1)\leq O\l\sqrt{H^5 SA_{\max} \iota/K}\r, 
	$$  
\end{theorem}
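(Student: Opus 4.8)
The plan is to reduce the deviation gap to the width of the confidence interval $\delta_{h,i}^k(s) := \ol{V}_{h,i}^k(s) - \ul{V}_{h,i}^k(s) \ge 0$ supplied by Lemma~\ref{lemma:upperbound}, and then to control that width through a deterministic recursion combined with a stage-based counting argument. First I would use the structure of $\bar\pi$: it draws $k$ uniformly from $[K]$ and then plays $\bar\pi_1^k$, and since a best response cannot depend on the realized $k$, we have $V_{1,i}^{\star,\bar\pi_{-i}}(s_1) = \sup_{\pi_i}\frac1K\sum_k V_{1,i}^{\pi_i,\bar\pi_{1,-i}^k}(s_1) \le \frac1K\sum_k V_{1,i}^{\star,\bar\pi_{1,-i}^k}(s_1)$, while $V_{1,i}^{\bar\pi}(s_1)=\frac1K\sum_k V_{1,i}^{\bar\pi_1^k}(s_1)$. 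On the probability-$(1-p)$ event of Lemma~\ref{lemma:upperbound}, the upper and lower confidence bounds sandwich these per-index values, so the left-hand side is at most $\frac1K\sum_{k=1}^K \delta_{1,i}^k(s_1)$. Since $s_1$ is fixed, this equals $\frac1K\sum_k \Delta_1^k$ with $\Delta_h^k := \delta_{h,i}^k(s_h^k)$ the realized confidence width.

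Next I would derive a one-step recursion for $\Delta$. Using the recursive forms of $\ot{V}$ and $\ut{V}$ with the \emph{same} stage indices $\check l_j$ for both bounds, and observing that the clipping ($\min$ with $H-h+1$, $\max$ with $0$) only shrinks the gap while the bonuses add up to $+2b_{\check n}$, one obtains the \emph{deterministic} inequality $\Delta_h^k \le \frac1{\check n}\sum_{j=1}^{\check n}\delta_{h+1,i}^{\check l_j}(s_{h+1}^{\check l_j}) + 2b_{\check n} = \frac1{\check n}\sum_{j=1}^{\check n}\Delta_{h+1}^{\check l_j} + 2b_{\check n}$. The key point is that all randomness has already been absorbed into Lemma~\ref{lemma:upperbound} (itself built on Lemma~\ref{lemma:bonus}), so no further concentration is needed past this point.

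I would then establish the stage-based counting lemma: for any nonnegative sequence $\{x^k\}$, $\sum_{k}\frac1{\check n_h^k}\sum_{j}x^{\check l_{h,j}^k} \le (1+\frac1H)\sum_k x^k$, because for a fixed $(h,s)$ each episode's value is re-used only by the single immediately following stage, whose length exceeds the current one $\check n$ by at most the factor $1+\frac1H$. Applying this with $x^k=\Delta_{h+1}^k$ and unrolling over $h=1,\dots,H$ (with $\Delta_{H+1}=0$) gives $\sum_k \Delta_1^k \le (1+\frac1H)^{H-1}\cdot 2\sum_{h,k}b_{\check n_h^k} \le 2e\sum_{h,k}b_{\check n_h^k}$, so the geometric blow-up across the $H$ layers stays bounded by a constant $e$.

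It remains to bound $\sum_{h,k}b_{\check n_h^k} = 6\sqrt{H^2 A_i\iota}\sum_{h,k}1/\sqrt{\check n_h^k}$, and this is where I expect the main obstacle to lie. Because the stages grow at the \emph{slow} rate $1+\frac1H$, for a fixed $(h,s)$ the sum $\sum 1/\sqrt{\check n} = \sum_m e_{m+1}/\sqrt{e_m} \le (1+\frac1H)\sum_m\sqrt{e_m} = O(\sqrt{H\,N_h^K(s)})$ carries an extra $\sqrt H$ relative to the familiar $O(\sqrt{N})$ of ordinary per-step visit counts; a Cauchy--Schwarz step over states ($\sum_s\sqrt{N_h^K(s)}\le\sqrt{SK}$) and summation over $h$ then yield $\sum_{h,k}1/\sqrt{\check n_h^k}=O(\sqrt{H^3 SK})$. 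Combining, $\sum_k\Delta_1^k = O(\sqrt{H^5 A_i\iota SK})$; dividing by $K$, using $A_i\le\amax$, and invoking $K\ge SH/(\amax\iota)$ to absorb the lower-order $1/\check n$ terms from the Exp3-IX bound gives $V_{1,i}^{\star,\bar\pi_{-i}}(s_1)-V_{1,i}^{\bar\pi}(s_1)=O(\sqrt{H^5 S\amax\iota/K})$. The delicate part is getting this accounting tight: one must simultaneously keep the per-layer blow-up at the harmless constant $(1+\frac1H)^H\le e$ while correctly extracting the extra $\sqrt H$ that the slow stage growth injects into $\sum 1/\sqrt{\check n}$ — precisely the factor that upgrades an $H^4$ bound to the claimed $H^5$.
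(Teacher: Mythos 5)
Your overall strategy coincides with the paper's proof: reduce the CCE gap to $\frac{1}{K}\sum_{k}\bigl(\ol{V}_{1,i}^{k}(s_1)-\ul{V}_{1,i}^{k}(s_1)\bigr)$ via Lemma~\ref{lemma:upperbound}, recurse the confidence width through the stage structure, apply the $(1+\tfrac{1}{H})$ re-use counting bound, and control the bonus sum through the geometric stage growth plus Cauchy--Schwarz over states; even your observation that the slow stage growth injects an extra $\sqrt{H}$ into $\sum 1/\sqrt{\check n}$ (turning $H^4$ into $H^5$) is exactly the paper's accounting. However, there is one genuine gap: your claimed ``deterministic'' recursion $\Delta_h^k \le \frac{1}{\check n}\sum_{j=1}^{\check n}\Delta_{h+1}^{\check l_j}+2b_{\check n}$ is false for episodes $k$ in which the pair $(s_h^k,h)$ is still in its \emph{first} stage. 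For such episodes $\ol{V}_{h,i}^k$ and $\ul{V}_{h,i}^k$ sit at their initializations ($H-h+1$ and $0$ respectively), there is no previous stage to recurse on ($\check n$ is not even defined), and the width is as large as $H-h+1$. The paper's recursion therefore carries an extra term $\mb{I}[n_h^k=0]\,H$; summing it over $k$ and $h$ (each $(s,h)$ pair contributes at most $e_1=H$ such episodes per layer) produces an additive $O(SH^3)$ in $\sum_k\delta_1^k$, and this is \emph{precisely} the term that the hypothesis $K\ge SH/(\amax\iota)$ exists to absorb: $SH^3\le\sqrt{S\amax KH^5\iota}$ holds if and only if $K\ge SH/(\amax\iota)$.

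Relatedly, you invoke $K\ge SH/(\amax\iota)$ to ``absorb the lower-order $1/\check n$ terms from the Exp3-IX bound,'' but that is not its role: those lower-order terms are already folded into the bonus $b_{\check n}=6\sqrt{H^2A_i\iota/\check n}$ inside Lemma~\ref{lemma:bonus}, hence into Lemma~\ref{lemma:upperbound}, before your recursion ever starts. So as written, your unrolled bound $\sum_k\Delta_1^k\le O\bigl(\sum_{h,k}b_{\check n_h^k}\bigr)$ omits the initialization contribution, and the theorem's hypothesis on $K$ plays no correct role in your argument. Once you add the missing indicator term to the recursion and track the resulting $O(SH^3)$ through the unrolling, your proposal becomes the paper's proof.
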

\begin{proof}
	We first recall the definitions of several notations and define a few new ones. For a state $s_h^k$, recall that $\check{n}_h^k$ denotes the number of visits to the state $s_h^k$ (at the $h$-th step)  in the stage right before the current stage, and $\check{l}_{h,j}^k$ denotes the $j$-th episode among the $\check{n}_h^k$ episodes. Similarly, let $n_h^k$ be the total number of episodes that this state has been visited prior to the current stage, and let $l_{h,j}^k$ denote the index of the episode that this state was visited the $j$-th time among the total $n_h^k$ times. For simplicity, we use $l_j$ and $\check{l}_j$ to denote $l_{h,j}^k$ and $\check{l}_{h,j}^k$, and $\check{n}$ to denote $\check{n}_h^k$, whenever $h$ and $k$ are clear from the context.
	
	From Lemma~\ref{lemma:upperbound}, we know that
	\[
	\begin{aligned}
	V_{1,i}^{\star,\bar{\pi}_{-i}}(s_1) - V_{1,i}^{\bar{\pi}}(s_1) \leq & \frac{1}{K}\sum_{k=1}^K \l V_{1,i}^{\star,\bar{\pi}_{1,-i}^k}(s_1) - V_{1,i}^{\bar{\pi}_1^k} (s_1) \r\\
	\leq & \frac{1}{K}\sum_{k=1}^K \l \ol{V}_{1,i}^{k}(s_1) - \ul{V}_{1,i}^{k}(s_1)\r. 
	\end{aligned}
	\]
	We hence only need to upper bound $\frac{1}{K}\sum_{k=1}^K ( \ol{V}_{1,i}^{k}(s_1) - \ul{V}_{1,i}^{k}(s_1))$. For a fixed agent $i\in\mc{N}$, we define the following notation:
	$$
	\delta_h^k \defeq \overline{V}_{h,i}^{k}(s_h^k)- \ul{V}_{h,i}^{k}(s_h^k).
	$$ 
	The main idea of the subsequent proof is to upper bound $\sum_{k=1}^K \delta_h^k$ by the next step $\sum_{k=1}^K \delta_{h+1}^k$, and then obtain a recursive formula. From the update rule of $\ol{V}_{h,i}^k(s_h^k)$ in \eqref{eqn:update_rule_new}, we know that
	\[
	\ol{V}_{h,i}^k(s_h^k) \leq \mb{I}[n_h^k = 0]H + \frac{1}{\check{n}}\sum_{j=1}^{\check{n}} \l r_{h,i}(s_h,\bm{a}_{h}^{\check{l}_j}) + \overline{V}_{h+1,i}^{\check{l}_j}(s_{h+1}^{\check{l}_j}) \r+ b_{\check{n}},
	\]
	where the $\mb{I}[n_h^k=0]$ term counts for the event that the optimistic value function has never been updated for the given state. 
	
	Further recalling the definition of $\ul{V}_{h,i}^{k}(s_h^k)$, we have
	\begin{align}
	\delta_h^k \leq &\mb{I}[n_h^k = 0]H+ \frac{1}{\check{n}}\sum_{j=1}^{\check{n}} \l \overline{V}_{h+1,i}^{\check{l}_j}(s_{h+1}^{\check{l}_j}) - \ul{V}_{h+1,i}^{\check{l}_j}(s_{h+1}^{\check{l}_j}) \r+2 b_{\check{n}}\nonumber\\
	\leq& \mb{I}[n_h^k = 0]H + \frac{1}{\check{n}}\sum_{j=1}^{\check{n}}\delta_{h+1}^{\check{l}_j} + 2b_{\check{n}},\label{eqn:delta}
	\end{align}
	To find an upper bound of $\sum_{k=1}^K \delta_h^k$, we proceed to upper bound each term on the RHS of \eqref{eqn:delta} separately. First, notice that $\sum_{k=1}^K \mb{I}\left[n_h^k=0\right]\leq SH$, because each fixed state-step pair $(s,h)$ contributes at most $1$ to $\sum_{k=1}^K \mb{I}\left[n_h^k=0\right]$. Next, we turn to analyze the second term on the RHS of \eqref{eqn:delta}. Observe that
	\begin{align}
	\sum_{k=1}^K \frac{1}{\check{n}_h^k}\sum_{j=1}^{\check{n}_h^k} \delta_{h+1}^{\check{l}_{h,j}^k} =& \sum_{k=1}^K \sum_{m=1}^K \frac{1}{\check{n}_h^k} \delta_{h+1}^{m} \sum_{j=1}^{\check{n}_h^k} \indicator\left[\check{l}_{h,j}^k = m\right]\nonumber\\
	=&\sum_{m=1}^K  \delta_{h+1}^{m} \sum_{k=1}^K  \frac{1}{\check{n}_h^k} \sum_{j=1}^{\check{n}_h^k} \indicator\left[\check{l}_{h,j}^k = m\right].\label{eqn:d1}
	\end{align}
	For a fixed episode $m$, notice that $\sum_{j=1}^{\check{n}_h^k} \indicator[\check{l}_{h,j}^k = m]\leq 1$, and that $\sum_{j=1}^{\check{n}_h^k} \indicator[\check{l}_{h,j}^k = m]= 1$ happens if and only if $s_h^k = s_h^m$ and $(m,h)$ lies in the previous stage of $(k,h)$ with respect to the state-step pair $(s_h^k, h)$.  Define $\mc{K}_m\defeq \{ k\in[K]: \sum_{j=1}^{\check{n}_h^k} \indicator[\check{l}_{h,j}^k = m]= 1 \}$. We then know that all episode indices $k\in \mc{K}_m$ belong to the same stage, and hence these episodes have the same value of $\check{n}_h^k$. That is, there  exists an integer $N_m>0$, such that $\check{n}_h^k = N_m,\forall k \in \mc{K}_m$. Further, since the stages are partitioned in a way such that each stage is at most $(1+\frac{1}{H})$ times longer than the previous stage, we know that $|\mc{K}_m|\leq (1+\frac{1}{H})N_m$. Therefore, for every $m$, it holds that
	\begin{equation}\label{eqn:d2}
	\sum_{k=1}^K  \frac{1}{\check{n}_h^k} \sum_{j=1}^{\check{n}_h^k} \indicator\left[\check{l}_{h,j}^k = m\right] \leq 1+ \frac{1}{H}.
	\end{equation} 
	Combining \eqref{eqn:d1} and \eqref{eqn:d2} leads to the following upper bound of the second term in \eqref{eqn:delta}:	
	\begin{equation}\label{eqn:d3}
	\sum_{k=1}^K \frac{1}{\check{n}_h^k}\sum_{j=1}^{\check{n}_h^k} \delta_{h+1}^{\check{l}_{h,j}^k}\leq (1+\frac{1}{H}) \sum_{k=1}^{K} \delta_{h+1}^{k}.
	\end{equation}
	So far, we have obtained the following upper bound:
	\[
	\sum_{k=1}^K\delta_h^k \leq SH^2 + (1+\frac{1}{H})  \sum_{k=1}^K \delta_{h+1}^k + 2\sum_{k=1}^K b_{\check{n}_h^k}. 
	\]
	
	Iterating the above inequality over $h = H, H-1, \dots, 1$ leads to
	\begin{equation}\label{eqn:tmp9}
	\sum_{k=1}^{K}\delta_1^k \leq O\l SH^3 + \sum_{h=1}^H \sum_{k=1}^K (1+\frac{1}{H})^{h-1}b_{\check{n}_h^k} \r,
	\end{equation}
	where we used the fact that $(1+\frac{1}{H})^H \leq e$. In the following, we analyze the bonus term $b_{\check{n}_h^k}$ more carefully. Recall our definitions that  $e_1 =H,\ e_{i+1} = \floor{(1+\frac{1}{H})e_i},i\geq 1$, and $b_{\check{n}}= 6\sqrt{H^2 A_i\iota/\check{n}}$. For any $h\in[H]$, 
	\begin{align}
	\sum_{k=1}^K (1+\frac{1}{H})^{h-1} b_{\check{n}_h^k} \leq& \sum_{k=1}^K (1+\frac{1}{H})^{h-1}6\sqrt{H^2 A_i\iota/\check{N}_h^k}\nonumber\\
	=&6\sqrt{H^2 A_i\iota}\sum_{s\in\mc{S}}\sum_{j\geq 1} (1+\frac{1}{H})^{h-1}e_j^{-\frac{1}{2}}\sum_{k=1}^K\mb{I}\left[s_h^k = s, \check{N}_h^k(s_h^k) = e_j\right]\nonumber\\
	=& 6\sqrt{H^2 A_i\iota}\sum_{s\in\mc{S}}\sum_{j\geq 1} (1+\frac{1}{H})^{h-1}w(s,j)e_j^{-\frac{1}{2}},\nonumber
	\end{align}
	where we define $w(s,j)\defeq \sum_{k=1}^K\mb{I}\left[s_h^k = s, \check{N}_h^k(s_h^k) = e_j\right]$ for any $s\in\mc{S}$. If we further let $w(s) \defeq \sum_{j\geq 1} w(s,j)$, we can see that $\sum_{s\in\mc{S}} w(s) = K$. For each fixed state $s$, we now seek an upper bound of its corresponding $j$ value, denoted as $J$ in what follows. Since each stage is $(1+\frac{1}{H})$ times longer than its previous stage, we know that $w(s,j) = \sum_{k=1}^K \mb{I}\left[s_h^k=s, \check{N}_h^k(s_h^k) = e_j\right] = \floor{(1+\frac{1}{H})e_j}$ for any $1\leq j\leq J$. Since $\sum_{j=1}^J w(s,j) = w(s)$, we obtain that $e_J \leq (1+\frac{1}{H})^{J-1} \leq \frac{10}{1+\frac{1}{H}}\frac{w(s)}{H}$ by taking the sum of a geometric sequence. Therefore, by plugging in $w(s,j) = \floor{(1+\frac{1}{H})e_j}$, 
	\[
	\begin{aligned} 
	\sum_{j\geq 1} (1+\frac{1}{H})^{h-1}w(s,j)e_j^{-\frac{1}{2}}\leq O\left(\sum_{j=1}^J e_j^{\frac{1}{2}}\right)\leq O\left(\sqrt{w(s)H}\right), 
	\end{aligned}
	\]
	where in the second step we again used the formula of the sum of a geometric sequence. Finally, using the fact that $\sum_{s\in\mc{S}}w(s) = K$ and applying the Cauchy-Schwartz inequality, we have
	\begin{align}
	\sum_{h=1}^H\sum_{k=1}^K (1+\frac{1}{H})^{h-1} b_{\check{n}_h^k} =&O\left(\sqrt{H^4 A_i\iota}\sum_{s\in\mc{S}}\sum_{j\geq 1} (1+\frac{1}{H})^{h-1}w(s,j)e_j^{-\frac{1}{2}}\right)\nonumber\\
	\leq&  O\left(\sqrt{SA_i KH^5\iota}\right).\label{eqn:tmp8}
	\end{align}
	Summarizing the results above leads to
	\[
	\sum_{k=1}^K \delta_1^k \leq O\l SH^3 + \sqrt{SA_i KH^5\iota} \r. 
	\]
	In the case when $K$ is large enough, such that $K \geq \frac{SH}{A_i\iota}$, the second term becomes dominant, and we obtain the desired result:	
	\[
	V_{1,i}^{\star,\bar{\pi}_{-i}}(s_1) - V_{1,i}^{\bar{\pi}}(s_1) \leq \frac{1}{K}\sum_{k=1}^{K}\delta_1^k \leq O\l\sqrt{SA_i H^5\iota/K} \r .
	\]
	This completes the proof of the theorem.
\end{proof}

An immediate corollary is that we obtain an $\epsilon$-approximate CCE when $\sqrt{S\amax H^5\iota/K}\leq \epsilon$, which is Theorem~\ref{thm:cce} in the main text.

\vspace{.8em}
\noindent\textbf{Theorem \ref{thm:cce}.} (Sample complexity of learning CCE). For any $p\in (0,1]$, set $\iota = \log(2NSA_{\max} KH/p)$, and let the agents run Algorithm~\ref{alg:sbv} for $K$ episodes with $K= O(S A_{\max}H^5 \iota/\epsilon^2)$. Then, with probability at least $1-p$, the output policy $\bar{\pi}$ constitutes an $\epsilon$-approximate coarse correlated equilibrium.

\section{Proofs for Section~\ref{subsec:ce}}\label{app:ce}
We first present a no-swap-regret learning algorithm for the adversarial bandit problem, which serves as an important subroutine to achieve correlated equilibria in Markov games. We consider a standard adversarial bandit problem that lasts for $T$ time steps. The agent has an action space of $\mc{A} = \{1,\dots,A\}$. At each time step $t\in[T]$, the agent specifies a distribution $p_t\in\Delta(\mc{A})$ over the action space, and takes an action $a_t$ according to $p_t$. The adversary then selects a loss vector $l_t\in[0,1]^A$, where $l_t(a)\in[0,1]$ denotes the loss of action $a$ at time $t$. We consider partial information (bandit) feedback, where the agent only receives the reward associated with the selected action $a_t$. The external regret measures the difference between the cumulative reward that an algorithm obtains and that of the best fixed action in hindsight. Specifically,
\[
R_{\text{external}}(T) = \max_{a^\star\in\mc{A}} \sum_{t=1}^T \l l_t(a_t) - l_t(a^\star) \r. 
\]
The swap regret, instead, measures the difference between the cumulative reward of an algorithm and the cumulative reward that could be achieved by swapping multiple pairs of actions of the algorithm. To be more specific, we define a strategy modification $F:\mc{A}\ra\mc{A}$ to be a mapping from the action space to itself. For any action selection distribution $p$, we let $F\diamond p$ be the swapped distribution that takes action $a\in\mc{A}$ with probability $\sum_{a'\in\mc{A},F(a')=a}p(a')$. The swap regret\footnote{This is a modified version of the swap regret used in~\citet{blum2007external}, which is defined as $R_{\text{swap}}(T) = \max_{F:\mc{A}\ra\mc{A}} \sum_{t=1}^T \l l_t(a_t) - l_t(F(a_t)) \r$.} is then defined as
\[
R_{\text{swap}}(T) = \max_{F:\mc{A}\ra\mc{A}} \sum_{t=1}^T \l \inner{p_t,l_t} - \inner{F\diamond p_t, l_t} \r, 
\]
where recall that $p_t$ is the distribution that the algorithm specifies at time $t$ for action selection.

\begin{algorithm*}[!t]
	\textbf{Initialize:} $p_1(a)\gets 1/A,\forall a\in\mc{A}$,  $\gamma \gets \sqrt{\log A/T}$, and $\eta\gets\sqrt{\log A/T}$. 
	
	\For{$t\gets 1$ to $T$}
	{
		Take action $a_t \sim p_t(\cdot)$, and observe loss $l_t(a_t)$\;
		\For{action $a\in\mc{A}$}
		{
			\For{action $a'\in\mc{A}$}
			{
				$\hat{l}_t(a'\mid a)\gets p_t(a)l_t(a_t)\mb{I}\{a_t=a'\}/(p_t(a') + \gamma)$\;
				$q_{t+1}(a' \mid a)\gets \frac{\exp(-\eta \sum_{i=1}^t \hat{l}_i(a'\mid a))}{\sum_{b\in\mc{A}}\exp(-\eta \sum_{i=1}^t \hat{l}_i(b\mid a))}$\;
			}
		}
		Set $p_{t+1}$ such that $p_{t+1}(\cdot )=\sum_{a\in\mc{A}}p_{t+1}(a)q_{t+1}(\cdot\mid a)$\;
	}
	\caption{No-swap-regret learning}\label{alg:swap}
\end{algorithm*}

We follow the generic reduction introduced in~\citet{blum2007external}, and convert a Follow-the-Regularized-Leader algorithm with sublinear external regret to a no-swap-regret algorithm \citep{jin2021v}. The resulting algorithm is presented as Algorithm~\ref{alg:swap}. The following lemma shows that Algorithm~\ref{alg:swap} is indeed a no-swap-regret learning algorithm.

\begin{lemma}\citep[Theorem 26]{jin2021v}.\label{lemma:swap_regret}
	For any $T\in\nn$ and $p\in(0,1)$, let $\iota = \log(A^2/p)$. With probability at least $1-3p$, it holds that
	\[
	R_{\text{swap}}(T)\leq 10\sqrt{A^2 T\iota}.  
	\]
\end{lemma}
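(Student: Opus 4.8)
The plan is to derive the bound by instantiating the classical swap-regret-to-external-regret reduction of \citet{blum2007external}: Algorithm~\ref{alg:swap} runs $A$ parallel no-external-regret experts, one per action $a\in\mc{A}$, where expert $a$ produces the conditional distribution $q_t(\cdot\mid a)$ from exponential weights, and is fed the \emph{scaled} loss sequence $\{p_t(a)\,l_t\}_t$. First I would record the defining property of the master iterate: since $p_t$ is chosen to satisfy $p_t(a')=\sum_{a}p_t(a)q_t(a'\mid a)$, it is precisely the stationary distribution of the row-stochastic matrix $[q_t(a'\mid a)]_{a,a'}$, which exists because each row is a probability vector. Using this identity I would establish the reduction inequality
\[
R_{\text{swap}}(T)=\max_{F}\sum_{t=1}^T\l\inner{p_t,l_t}-\inner{F\diamond p_t,l_t}\r\leq \sum_{a\in\mc{A}}R_a(T),
\]
where $R_a(T)$ denotes the external regret of expert $a$ on its scaled losses against the benchmark action $F(a)$. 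The algebraic heart of this step is to rewrite $\sum_t\inner{p_t,l_t}=\sum_t\sum_a p_t(a)\inner{q_t(\cdot\mid a),l_t}$ by stationarity, to observe $\sum_t\sum_a p_t(a)\,l_t(F(a))=\sum_t\inner{F\diamond p_t,l_t}$ from the definition of the swapped distribution, and then to sum the per-expert guarantees so that the cross terms telescope into the display.

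Next I would bound each $R_a(T)$ through the implicit-exploration analysis of \citet{neu2015explore}, observing that expert $a$ is exactly exponential-weights FTRL driven by the estimator $\hat l_t(a'\mid a)=p_t(a)\,l_t(a_t)\indicator\{a_t=a'\}/(p_t(a')+\gamma)$, i.e.\ $p_t(a)$ times the standard IX estimator of $l_t(a')$. The FTRL regret decomposition supplies a penalty $\log A/\eta$ and a stability term $\tfrac{\eta}{2}\sum_t\sum_{a'}q_t(a'\mid a)\,\hat l_t(a'\mid a)^2$, while the bias introduced by the $\gamma$-shift is controlled in high probability by Neu's concentration inequality at the cost of an additive term of order $\gamma\,AT+\gamma^{-1}\log(A/p)$. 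The step that prevents a blow-up to $\sqrt{A^3T}$ is that the losses fed to expert $a$ carry the weight $p_t(a)\leq 1$: upon summing the stability contribution over the experts one gets $\sum_a p_t(a)^2\sum_{a'}q_t(a'\mid a)\,\hat l_t(a'\mid a)^2$, and taking the conditional expectation and applying the \emph{same} stationarity identity $\sum_a p_t(a)q_t(a'\mid a)=p_t(a')$ collapses this to at most $\sum_{a'}p_t(a')/(p_t(a')+\gamma)\leq A$ per round, hence $AT$ in total. The aggregate penalty is then $A\log A/\eta$.

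Finally I would set $\eta=\gamma=\sqrt{\log A/T}$ as in the algorithm and balance the penalty $A\log A/\eta$, the stability $\tfrac{\eta}{2}AT$, and the bias terms, so that every piece is of order $A\sqrt{T\log A}=\sqrt{A^2T\log A}$; tracking the absolute constants (together with the union bound over the $A$ experts that produces $\iota=\log(A^2/p)$ and accumulates the failure probability to $3p$ across the IX concentration events) yields the stated $R_{\text{swap}}(T)\leq 10\sqrt{A^2T\iota}$. I expect the main obstacle to be the high-probability control of the $A$ simultaneous IX estimators: one must invoke Neu's inequality uniformly over the experts while exploiting the normalization $\sum_a p_t(a)=1$ through the stationarity identity, since a term-by-term union bound that ignores this cancellation would lose a factor of $A$ and degrade the horizon dependence. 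Matching the explicit constant $10$, rather than settling for an $O(\cdot)$ bound, is the other piece requiring careful bookkeeping.
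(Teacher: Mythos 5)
Your route is the right one---and in fact it is the only route on offer, because the paper never proves Lemma~\ref{lemma:swap_regret}: it imports the statement wholesale from \citet[Theorem 26]{jin2021v}, whose argument is precisely the \citet{blum2007external} reduction you reconstruct. Your reduction inequality via stationarity of $p_t$, the observation that the same identity $\sum_a p_t(a)q_t(a'\mid a)=p_t(a')$ must be reused to collapse the aggregated stability/bias terms (otherwise one lands at $\sqrt{A^3T}$ rather than $\sqrt{A^2T}$), the origin of $\iota=\log(A^2/p)$ from the union bound over the $A^2$ (expert, comparator) pairs, and the three concentration events accounting for the $1-3p$ confidence are all correct and coincide with the ingredients of the cited proof. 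One small imprecision: the high-probability stability bound cannot pass through a conditional expectation; you need the pathwise estimate $\sum_{a'}p_t(a')\hat l^{\mathrm{IX}}_t(a')^2\le\sum_{a'}\hat l^{\mathrm{IX}}_t(a')$ followed by Neu's lemma on $\sum_t\sum_{a'}\hat l^{\mathrm{IX}}_t(a')\le AT+\iota/(2\gamma)$.

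The genuine gap is in your final balancing step. With the tuning you adopt from Algorithm~\ref{alg:swap}, $\eta=\gamma=\sqrt{\log A/T}$, the confidence-dependent terms do not fold under the square root: summed over the $A$ experts, the comparator-bias contribution from Neu's inequality is $A\iota/(2\gamma)=\tfrac{1}{2}A\iota\sqrt{T/\log A}$, which is $O(A\sqrt{T\iota})$ only when $\iota=O(\log A)$ and strictly exceeds $10\sqrt{A^2T\iota}$ once $p$ is small. No rearrangement of the union bound repairs this---even a single application of Neu's lemma per strategy modification, union-bounded over all $A^A$ modifications, leaves an additive $\log(1/p)/(2\gamma)=\tfrac{1}{2}\log(1/p)\sqrt{T/\log A}$, which again dominates for $p$ small enough. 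The obstruction is structural: IX concentration always costs an additive $\log(1/\delta)/\gamma$, so putting $\log(1/p)$ under the square root forces $\gamma=\Theta(\sqrt{\iota/T})$. To prove the lemma as stated for all $p\in(0,1)$ you must therefore take $\eta=\gamma=\sqrt{\iota/T}$; then the penalty is at most $A\iota/\eta=A\sqrt{T\iota}$, the stability at most $(\eta/2)\left(AT+\iota/(2\gamma)\right)$, and the bias at most $\gamma AT+A\iota/(2\gamma)+\sqrt{2T\iota}$, each $O(A\sqrt{T\iota})$ with room to spare for the constant $10$. This confidence-dependent tuning is exactly what the paper uses where the subroutine is actually deployed (Algorithm~\ref{alg:ce} sets $\eta_i=\sqrt{\iota/\check T_h(s_h)}$ and $\gamma_i=\eta_i$) and what \citet{jin2021v} do; the $\sqrt{\log A/T}$ written in the standalone Algorithm~\ref{alg:swap} is the paper's own inconsistency, but your proof as planned inherits it and consequently does not establish the claimed bound.
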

It is worth noting that \citet{jin2021v} presented a more general analysis with an anytime weighted swap regret guarantee. Such complication can be avoided in our algorithm, as our stage-based learning approach only entails a simple averaged swap regret analysis.

\begin{algorithm*}[!t]\label{alg:ce}
	\textbf{Initialize:} $\overline{V}_{h,i}(s) \gets H-h+1, \ot{V}_{h,i}(s) \gets H-h+1, N_h(s)\gets 0, \check{N}_h(s)\gets 0, \check{r}_{h,i}(s)\gets 0,\allowbreak \check{v}_{h,i}(s)\gets 0,\allowbreak \check{T}_h(s)\gets H, p_{h,i}(a\mid s)\gets 1/A_i$, $L_{h,i}^s(a'\mid a)\gets 0$, $\forall h\in[H],s\in\mc{S},a,a'\in\mc{A}_i$. 
	
	\For{episode $k\gets 1$ to $K$}
	{
		Receive $s_1$\;
		\For{step $h\gets 1$ to $H$}
		{
			$ N_h(s_h) \gets N_h(s_h) + 1,  \check{n}\defeq \check{N}_h(s_h) \gets \check{N}_h(s_h) + 1$\;
			Take action $a_{h,i} \sim p_{h,i}(\cdot \mid s_h)$, and observe reward $r_{h,i}$ and next state $s_{h+1}$\;
			$\check{r}_{h,i}(s_h)\gets \check{r}_{h,i}(s_h) + r_{h,i}, \check{v}_{h,i}(s_h)\gets \check{v}_{h,i}(s_h) + \overline{V}_{h+1,i}(s_{h+1}) $\;
			$\eta_i \gets \sqrt{\iota/ \check{T}_h(s_h)}, \gamma_i \gets \eta_i$\;
			\For{action $a\in\mc{A}_i$}
			{
				\For{action $a'\in\mc{A}_i$}
				{
					$L_{h,i}^s(a'\mid a) \gets L_{h,i}^s(a'\mid a)+\frac{p_{h,i}(a\mid s_h) [H-h+1-(r_{h,i} + \overline{V}_{h+1,i}(s_{h+1}))]}{H(p_{h,i}(a_{h,i} \mid s_h)+\gamma_i)}\mb{I}\{a_{h,i} = a\}$\;
					$q_{h,i}^{s_h}(a'\mid a)\gets \frac{\exp(-\eta_i L_{h,i}^{s_h}(a'\mid a))}{\sum_{b\in\mc{A}_i} \exp(-\eta_i L_{h,i}^{s_h}(b\mid a)) }$\;
				}
			}
			Set $p_{h,i}(a\mid s_h)$ such that $p_{h,i}(\cdot \mid s_h) = \sum_{a\in\mc{A}}p_{h,i}(a\mid s_h)q_{h,i}^{s_h}(\cdot \mid a)$\;
			\If{$N_h(s_h)\in\mc{L}$}
			{
				\texttt{//Entering a new stage}
				
				$\ot{V}_{h,i}(s_h)\gets  \frac{\check{r}_{h,i}(s_h)}{\check{n}} + \frac{\check{v}_{h,i}(s_h)}{\check{n}}+b_{\check{n}}$, where 
				$b_{\check{n}}\gets 11\sqrt{H^2 A_i^2\iota / \check{n}}$\label{line:2}\;
				$\ol{V}_{h,i}(s_h)\gets \min\{\ot{V}_{h,i}(s_h),H-h+1\}$\;
				$\check{N}_h(s_h)\gets 0, \check{r}_{h,i}(s_h)\gets 0, \check{v}_{h,i}(s_h)\gets 0, \check{T}_h(s_h)\gets \floor{(1+\frac{1}{H})\check{T}_h(s_h)}$\;
				$p_{h,i}(a\mid s_h)\gets 1/A_i, L_{h,i}^{s_h}(a'\mid a)\gets 0, \forall a,a'\in\mc{A}_i$\;
			}
		}
	}
	\caption{Stage-Based V-Learning for CE (agent $i$)}
\end{algorithm*}

The complete Stage-Based V-Learning algorithm for CE is presented in Algorithm~\ref{alg:ce}. In the following analysis, we follow the same notations as have been used in the CCE analysis. We again start with the following lemma that justifies our choice of the bonus term. 

\begin{lemma}\label{lemma:bonus2}
	With probability at least $1-\frac{p}{2}$, it holds for all $(i,s,h,k)\in\mc{N}\times \mc{S}\times [H]\times [K]$ that
	\[
	\max_{\psi_{i}\in\Psi_i} \frac{1}{\check{n}} \sum_{j=1}^{\check{n}} \mathbb{D}_{\psi_{h,i}^s \diamond \bm{\mu}_{h}^{\check{l}_j}}\l r_{h,i} + \pp_h\ol{V}_{h+1,i}^{\check{l}_j}\r(s) - \frac{1}{\check{n}} \sum_{j=1}^{\check{n}} \l r_{h,i}(s,\bm{a}_h^{\check{l}_j}) + \ol{V}_{h+1,i}^{\check{l}_j}(s_{h+1}^{\check{l}_j}) \r \leq 11\sqrt{H^2 A_i^2\iota/\check{n}}. 
	\]
\end{lemma}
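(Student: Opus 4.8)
The plan is to mirror the proof of Lemma~\ref{lemma:bonus} almost verbatim, replacing its external-regret bandit argument with a swap-regret one. First I would split the left-hand side into a concentration term and a regret term by adding and subtracting $\frac{1}{\check{n}}\sum_{j=1}^{\check{n}}\mathbb{D}_{\bm{\mu}_h^{\check{l}_j}}(r_{h,i}+\pp_h\ol{V}_{h+1,i}^{\check{l}_j})(s)$. The resulting concentration difference
\[
\frac{1}{\check{n}} \sum_{j=1}^{\check{n}} \mathbb{D}_{\bm{\mu}_h^{\check{l}_j}}\l r_{h,i} + \pp_h\ol{V}_{h+1,i}^{\check{l}_j}\r(s) - \frac{1}{\check{n}} \sum_{j=1}^{\check{n}} \l r_{h,i}(s,\bm{a}_h^{\check{l}_j}) + \ol{V}_{h+1,i}^{\check{l}_j}(s_{h+1}^{\check{l}_j}) \r
\]
is the normalized sum of a bounded martingale-difference sequence with respect to the filtration $\{\mc{F}_j\}_{j=1}^{\check{n}}$ generated up to episode $\check{l}_j$, exactly as in Lemma~\ref{lemma:bonus}; Azuma--Hoeffding therefore bounds it by $\sqrt{H^2\iota/\check{n}}$ with probability at least $1-p/(4NSHK)$ for each fixed $(i,s,h,k)$.

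Next I would control the remaining swap-regret term
\[
R_{\check{n}}^{\text{swap}} \defeq \max_{\psi_i\in\Psi_i} \frac{1}{\check{n}} \sum_{j=1}^{\check{n}} \mathbb{D}_{\psi_{h,i}^s\diamond\bm{\mu}_h^{\check{l}_j}}\l r_{h,i} + \pp_h\ol{V}_{h+1,i}^{\check{l}_j}\r(s) - \frac{1}{\check{n}} \sum_{j=1}^{\check{n}} \mathbb{D}_{\bm{\mu}_h^{\check{l}_j}}\l r_{h,i} + \pp_h\ol{V}_{h+1,i}^{\check{l}_j}\r(s).
\]
I would recognize this as the averaged swap regret of the adversarial bandit instance whose per-round loss is $\ell_j(a_i)=\ee_{a_{-i}\sim\mu_{h,-i}^{\check{l}_j}}\L H-h+1-r_{h,i}(s,\bm{a})-\pp_h\ol{V}_{h+1,i}^{\check{l}_j}(s,\bm{a}) \R/H$, with agent $i$ running the no-swap-regret subroutine of Algorithm~\ref{alg:swap}. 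The key observations are: (i) $\ell_j$ takes values in $[0,1]$, since $0\le r_{h,i}+\pp_h\ol{V}_{h+1,i}^{\check{l}_j}\le H-h+1\le H$; (ii) the offset $H-h+1$ is action-independent and hence cancels in the regret difference; and (iii) applying a strategy modification $\psi_{h,i}^s$ to the joint distribution $\bm{\mu}_h^{\check{l}_j}$ acts precisely as the swapped distribution $F\diamond p_t$ on agent $i$'s marginal $p_{h,i}^{\check{l}_j}$ while leaving the opponents' marginals untouched. Consequently $R_{\check{n}}^{\text{swap}}=(H/\check{n})\,R_{\text{swap}}(\check{n})$, and Lemma~\ref{lemma:swap_regret} (applied with a suitably rescaled failure probability under its $1-3p$ convention) gives $R_{\text{swap}}(\check{n})\le 10\sqrt{A_i^2\check{n}\iota}$, whence $R_{\check{n}}^{\text{swap}}\le 10\sqrt{H^2 A_i^2\iota/\check{n}}$ with probability at least $1-p/(4NHS)$ for each $(i,s,h)$.

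Finally I would combine the two pieces: adding the concentration bound $\sqrt{H^2\iota/\check{n}}\le\sqrt{H^2 A_i^2\iota/\check{n}}$ to the swap-regret bound $10\sqrt{H^2 A_i^2\iota/\check{n}}$ yields at most $11\sqrt{H^2 A_i^2\iota/\check{n}}$, which matches the bonus $b_{\check{n}}$ chosen in Line~\ref{line:2} of Algorithm~\ref{alg:ce}. A union bound over all $(i,s,h,k)\in\mc{N}\times\mc{S}\times[H]\times[K]$, consuming the two probability budgets above (each at most $p/4$), then gives the claim with probability at least $1-p/2$. The main obstacle is the bookkeeping in the middle step: verifying that the strategy-modification objective over the correlated policies $\bm{\mu}_h^{\check{l}_j}$ translates \emph{exactly} into the distributional swap regret of Lemma~\ref{lemma:swap_regret}---in particular that only agent $i$'s marginal is modified, and that the $1/H$ loss rescaling combined with the $A_i^2$ dependence of the swap-regret bound produces precisely the $\sqrt{H^2 A_i^2\iota/\check{n}}$ rate (the extra $A_i$ relative to the CCE case being the sole source of the larger constant)---rather than in any new concentration or optimization argument.
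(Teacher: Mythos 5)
Your proposal is correct and follows essentially the same route as the paper's proof: the identical decomposition into an Azuma--Hoeffding concentration term (bounded by $\sqrt{H^2\iota/\check{n}}$ with per-tuple failure probability $p/(4NSHK)$) plus an averaged swap-regret term for the rescaled bandit losses, controlled via Lemma~\ref{lemma:swap_regret} applied to Algorithm~\ref{alg:swap} with failure probability $p/(4NHS)$, followed by the same union bound. Your unpacking of the scaling identity $R_{\check{n}}^{\text{swap}}=(H/\check{n})R_{\text{swap}}(\check{n})$ and of the fact that a strategy modification only alters agent $i$'s marginal of the product policy is exactly the (implicit) bookkeeping in the paper's argument.
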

\begin{proof}
	For a fixed $(s,h,k)\in\mc{S}\times [H]\times [K]$, let $\mc{F}_j$ be the $\sigma$-algebra generated by all the random variables up to episode $\check{l}_j$. Then, $\left\{ r_{h,i}(s,\bm{a}_{h,i}^{\check{l}_j}) + \ol{V}_{h+1,i}^{\check{l}_j} (s_{h+1}^{\check{l}_j}) - \mathbb{D}_{\bm{\mu}_{h,i}^{\check{l}_j}}\l r_{h,i} + \pp_h\ol{V}_{h+1,i}^{\check{l}_j}\r (s) \right\}_{j=1}^{\check{n}}$ is a martingale difference sequence with respect to $\{\mc{F}_j\}_{j=1}^{\check{n}}$. From the Azuma-Hoeffding inequality, it holds with probability at least $1-p/(4NSHK)$ that
	\[
	\frac{1}{\check{n}} \sum_{j=1}^{\check{n}} \mathbb{D}_{\bm{\mu}_h^{\check{l}_j}}\l r_{h,i} + \pp_h\ol{V}_{h+1,i}^{\check{l}_j}\r(s) - \frac{1}{\check{n}} \sum_{j=1}^{\check{n}} \l r_{h,i}(s,\bm{a}_h^{\check{l}_j}) + \ol{V}_{h+1,i}^{\check{l}_j}(s_{h+1}^{\check{l}_j}) \r \leq \sqrt{H^2\iota/\check{n}}. 
	\]
	Therefore, we only need to bound 
	\[
	R_{\text{swap}}({\check{n}}) \defeq \max_{\psi_i\in\Psi_i} \frac{1}{\check{n}} \sum_{j=1}^{\check{n}} \mathbb{D}_{\psi_{h,i}^s \diamond \bm{\mu}_{h}^{\check{l}_j}}\l r_{h,i} + \pp_h\ol{V}_{h+1,i}^{\check{l}_j}\r(s) - \frac{1}{\check{n}} \sum_{j=1}^{\check{n}} \mathbb{D}_{\bm{\mu}_h^{\check{l}_j}}\l r_{h,i} + \pp_h\ol{V}_{h+1,i}^{\check{l}_j}\r(s).
	\]
	Notice that $R_{\text{swap}}(\check{n})$ can be considered as the swap regret of an adversarial bandit problem at state $s$, where the loss function at step $j\in[\check{n}]$ is defined as
	\[
	\ell_j(a_i) = \ee_{{a}_{-i}\sim \mu_{h,-i}^{\check{l}_j}}(s) \L H-h+1 - r_{h,i}(s,\bm{a}) - \pp_h\ol{V}_{h+1,i}^{\check{l}_j}(s,\bm{a}) \R/H. 
	\]
	Such a problem can be addressed by a no-swap-regret learning algorithm as presented in Algorithm~\ref{alg:swap}. Applying Lemma~\ref{lemma:swap_regret}, we obtain that with probability at least $1-\frac{p}{4NHS}$, it holds for all $k\in[K]$ that
	\[
	R_{\text{swap}}(\check{n})\leq 10\sqrt{\frac{H^2 A_i^2  \iota}{\check{n}}}.  
	\]
	A union bound over all $(i,s,h,k)\in\mc{N}\times\mc{S}\times [H]\times [K]$ completes the proof. 
\end{proof}

We again define the notations $\bar{\pi}_h^k, \bar{\pi}, V_{h,i}^{\bar{\pi}_h^k}, \underline{V}_{h,i}^k$, and $\ut{V}_{h,i}^k(s)$ in the same sense as in Appendix~\ref{app:cce}. The next lemma shows that $\ol{V}_{h,i}^k(s)$ and $\ul{V}_{h,i}^k(s)$ are valid upper and lower bounds. 

\begin{lemma}\label{lemma:upperbound2}
	It holds with probability at least $1-p$ that for all $(i,s,h,k)\in\mc{N}\times\mc{S}\times[H]\times[K]$,
	\[
	\ol{V}_{h,i}^k(s)\geq \max_{\psi_{i}\in\Psi_i} V_{h,i}^{\psi_i\diamond \bar{\pi}_h^k}(s), \text{ and } \ul{V}_{h,i}^k(s)\leq V_{h,i}^{\bar{\pi}_h^k}(s). 
	\]
\end{lemma}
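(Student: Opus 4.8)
My plan is to mirror the proof of Lemma~\ref{lemma:upperbound} almost verbatim, replacing the best-response value by the best-strategy-modification value and the external-regret bound (Lemma~\ref{lemma:bonus}) by the swap-regret bound (Lemma~\ref{lemma:bonus2}). I would prove both inequalities by a joint induction on the episode index $k\in[K]$, and for each fixed $(i,s,h)$ split into the same two cases as before: Case 1, where $\ot V_{h,i}(s)$ (resp.\ $\ut V_{h,i}(s)$) was refreshed at the end of episode $k-1$; and Case 2, where it was not. The base case $k=1$ holds by initialization, and Case 2 reduces to the most recent update in the same stage because both the estimate and the target value are constant across a stage; this part is identical to Lemma~\ref{lemma:upperbound} and needs no modification. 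I would therefore concentrate on Case 1.

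For the upper bound in Case 1, the key step I would carry out first is to establish a Bellman-type recursion for the best-strategy-modification value. Because a modification $\psi_{h,i}^s$ acts only at step $h$ while the continuation modifications at later steps and reached states may be chosen independently, unrolling $V_{h,i}^{\psi_i\diamond\bar\pi_h^k}$ and optimizing greedily gives the chain
\begin{align}
\max_{\psi_i\in\Psi_i} V_{h,i}^{\psi_i\diamond\bar\pi_h^k}(s)
&\leq \max_{\psi_{h,i}^s} \frac{1}{\check n}\sum_{j=1}^{\check n}\mathbb{D}_{\psi_{h,i}^s\diamond\bm\mu_h^{\check l_j}}\l r_{h,i} + \pp_h \max_{\psi_i'} V_{h+1,i}^{\psi_i'\diamond\bar\pi_{h+1}^{\check l_j}}\r(s)\nonumber\\
&\leq \max_{\psi_i\in\Psi_i}\frac{1}{\check n}\sum_{j=1}^{\check n}\mathbb{D}_{\psi_{h,i}^s\diamond\bm\mu_h^{\check l_j}}\l r_{h,i}+\pp_h\ol V_{h+1,i}^{\check l_j}\r(s)\nonumber\\
&\leq \frac{1}{\check n}\sum_{j=1}^{\check n}\l r_{h,i}(s,\bm a_h^{\check l_j})+\ol V_{h+1,i}^{\check l_j}(s_{h+1}^{\check l_j})\r + 11\sqrt{H^2 A_i^2\iota/\check n}
\leq \ot V_{h,i}^k(s),\nonumber
\end{align}
where the first inequality uses monotonicity of $\pp_h$, the second uses the induction hypothesis at step $h+1$ for every reached state, the third is exactly Lemma~\ref{lemma:bonus2} (the swap-regret bound now replacing the external-regret bound of Lemma~\ref{lemma:bonus}), and the last is the definition of $b_{\check n}=11\sqrt{H^2A_i^2\iota/\check n}$ used in Algorithm~\ref{alg:ce}. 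Since $\ol V_{h,i}^k(s)=\min\{\ot V_{h,i}^k(s),H-h+1\}$ and the left-hand side is at most $H-h+1$, this yields the first inequality.

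For the lower bound I would reuse the CCE argument unchanged, since its proof involves no deviation and relies only on the recursion for $V_{h,i}^{\bar\pi_h^k}$, the induction hypothesis, and an Azuma--Hoeffding concentration of $\frac{1}{\check n}\sum_{j=1}^{\check n}(r_{h,i}(s,\bm a_h^{\check l_j})+\ut V_{h+1,i}^{\check l_j}(s_{h+1}^{\check l_j}))$ around its conditional mean at scale $\sqrt{H^2\iota/\check n}$. Because the enlarged bonus $b_{\check n}=11\sqrt{H^2A_i^2\iota/\check n}$ still dominates this fluctuation, the same chain gives $\ut V_{h,i}^k(s)\leq V_{h,i}^{\bar\pi_h^k}(s)$, hence $\ul V_{h,i}^k(s)=\max\{\ut V_{h,i}^k(s),0\}\leq V_{h,i}^{\bar\pi_h^k}(s)$. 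Finally I would take a union bound over all $(i,s,h,k)\in\mc N\times\mc S\times[H]\times[K]$, allocating probability $p/2$ to the swap-regret event of Lemma~\ref{lemma:bonus2} and $p/2$ to the Azuma--Hoeffding events, to reach the overall $1-p$ guarantee.

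I expect the main obstacle to be the very first step of the displayed chain: verifying that $\max_{\psi_i\in\Psi_i}V_{h,i}^{\psi_i\diamond\bar\pi_h^k}$ admits the stated Bellman recursion so that its per-step optimization coincides exactly with the swap-regret objective that Lemma~\ref{lemma:bonus2} controls. This hinges on the fact that a strategy modification acts coordinate-wise on the joint action sampled from the correlated policy, reducing the step-$h$ deviation to choosing the best $\psi_{h,i}^s:\mc A_i\to\mc A_i$ against the recommended joint distribution $\bm\mu_h^{\check l_j}$ --- precisely the swap-regret setup --- while the independence of continuation modifications across reached states justifies pushing the maximization inside the recursion via monotonicity of $\pp_h$.
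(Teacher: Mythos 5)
Your proposal is correct and follows essentially the same route as the paper's own proof: the same induction on $k$ with the same two cases, the same displayed chain of inequalities for the upper bound (greedy unrolling of the best-modification value, induction hypothesis, the swap-regret bound of Lemma~\ref{lemma:bonus2}, then the definition of $b_{\check n}$), the unchanged Azuma--Hoeffding argument for the lower bound, and the final union bound. The step you flag as the main obstacle --- the Bellman-type recursion for $\max_{\psi_i\in\Psi_i}V_{h,i}^{\psi_i\diamond\bar\pi_h^k}$ --- is exactly the first inequality in the paper's chain, justified there in the same way you describe.
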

\begin{proof}
	Consider a fixed $(i,s,h,k)\in\mc{N}\times\mc{S}\times[H]\times[K]$. The desired result clearly holds for any state $s$ that is in its first stage, due to our initialization of $\ol{V}_{h,i}^k(s)$ and  $\ul{V}_{h,i}^k(s)$ for this special case. In the following, we only need to focus on the case where $\overline{V}_{h,i}(s)$ and  $\ul{V}_{h,i}^k(s)$ have been updated at least once at the given state $s$ before the $k$-th episode. 
	
	We start with the first inequality. It suffices to show that $\ot{V}_{h,i}^k(s)\geq \max_{\psi_{i}\in\Psi_i} V_{h,i}^{\psi_i\diamond \bar{\pi}_h^k}(s)$ because $\ol{V}_{h,i}^k(s)=\min\{\ot{V}_{h,i}^k(s), H-h+1\}$, and $\max_{\psi_{i}\in\Psi_i} V_{h,i}^{\psi_i\diamond \bar{\pi}_h^k}(s)$ is always less than or equal to $H-h+1$. Our proof relies on induction on $k\in [K]$. First, the claim holds for $k=1$ due to the aforementioned logic. For each step $h\in[H]$ and $s\in\mc{S}$, we consider the following two cases. 
	
	\textbf{Case 1:} $\ot{V}_{h,i}(s)$ has just been updated in (the end of) episode $k-1$. In this case, 
	\begin{equation}
	\ot{V}^k_{h,i}(s) = \frac{1}{\check{n}}\sum_{j=1}^{\check{n}} \l r_{h,i}(s,\bm{a}_{h}^{\check{l}_j}) + \overline{V}_{h+1,i}^{\check{l}_j}(s_{h+1}^{\check{l}_j}) \r+ b_{\check{n}} .
	\end{equation}
	By the definition of $\max_{\psi_{i}\in\Psi_i} V_{h,i}^{\psi_i\diamond \bar{\pi}_h^k}(s)$, it holds with probability at least $1-\frac{p}{2NSKH}$ that
	\begin{align}
	\max_{\psi_{i}\in\Psi_i} V_{h,i}^{\psi_i\diamond \bar{\pi}_h^k}(s)\leq  &\max_{\psi_i\in\Psi_i} \frac{1}{\check{n}}\sum_{j=1}^{\check{n}} \mb{D}_{\psi_i\diamond \bm{\mu}_h^{\check{l}_j}} \l r_{h,i} + \mb{P}_h\max_{\psi_i'\in\Psi_i}V_{h+1,i}^{ \psi_i'\diamond \bar{\pi}_{h+1}^{\check{l}_{j}}}\r  (s)\nonumber\\
	\leq&\max_{\psi_i\in\Psi_i} \frac{1}{\check{n}}\sum_{j=1}^{\check{n}} \mb{D}_{\psi_i\diamond \bm{\mu}_h^{\check{l}_j}} \l r_{h,i} + \mb{P}_h\ol{V}_{h+1,i}^{\check{l}_j}\r  (s)\nonumber\\
	\leq & \frac{1}{\check{n}} \sum_{j=1}^{\check{n}} \l r_{h,i}(s,\bm{a}_h^{\check{l}_j}) + \ol{V}_{h+1,i}^{\check{l}_j}(s_{h+1}^{\check{l}_j}) \r + 11\sqrt{H^2 A_i^2\iota/\check{n}}\nonumber\\
	\leq & \ot{V}^k_{h,i}(s),
	\end{align}
	where the second step is by the induction hypothesis, the third step holds due to Lemma~\ref{lemma:bonus2}, and the last step is by the definition of $b_{\check{n}}$. 
	
	\textbf{Case 2: } $\ot{V}_{h,i}(s)$ was not updated in (the end of) episode $k-1$.  Since we have excluded the case that $\ot{V}_{h,i}$ has never been updated, we are guaranteed that there exists an episode $j$ such that $\ot{V}_{h,i}(s)$ has been updated in the end of episode $j-1$ most recently. In this case, $\ot{V}_{h,i}^k(s) = \ot{V}_{h,i}^{k-1}(s) = \dots = \ot{V}_{h,i}^{j}(s) \geq \max_{\psi_{i}\in\Psi_i} V_{h,i}^{\psi_i\diamond \bar{\pi}_h^j}(s)$, where the last step is by the induction hypothesis. Finally, observe that by our definition, the value of $\max_{\psi_{i}\in\Psi_i} V_{h,i}^{\psi_i\diamond \bar{\pi}_h^j}(s)$ is a constant for all episode indices $j$ that belong to the same stage. Since we know that episode $j$ and episode $k$ lie in the same stage, we can conclude  that $\max_{\psi_{i}\in\Psi_i} V_{h,i}^{\psi_i\diamond \bar{\pi}_h^k}(s) = \max_{\psi_{i}\in\Psi_i} V_{h,i}^{\psi_i\diamond \bar{\pi}_h^j}(s) \leq \ot{V}_{h,i}^k(s)$. 
	
	Combining the two cases and applying a union bound over all $(i,s,h,k)\in\mc{N}\times\mc{S}\times[H]\times[K]$ complete the proof of the first inequality. 
	
	Next, we prove the second inequality in the statement of the lemma. Notice that it suffices to show $\ut{V}_{h,i}^k(s)\leq V_{h,i}^{\bar{\pi}_h^k}(s)$ because $\ul{V}_{h,i}^k(s) = \max\{\ut{V}_{h,i}^k(s),0\}$. Our proof again relies on induction on $k\in[K]$. Similar to the proof of the first inequality, the claim apparently holds for $k=1$, and we consider the following two cases for each step $h\in[H]$ and $s\in\mc{S}$.
	
	\textbf{Case 1:} The value of $\ut{V}_{h,i}(s)$ has just changed in (the end of) episode $k-1$. In this case, 
	\begin{equation}
	\ut{V}^k_{h,i}(s) = \frac{1}{\check{n}}\sum_{j=1}^{\check{n}} \l r_{h,i}(s,\bm{a}_{h}^{\check{l}_j}) + \ut{V}_{h+1,i}^{\check{l}_j}(s_{h+1}^{\check{l}_j}) \r- b_{\check{n}} .
	\end{equation}
	By the definition of $V_{h,i}^{\bar{\pi}_h^k}(s)$, it holds with probability at least $1-\frac{p}{2NSKH}$ that
	\begin{align}
	V_{h,i}^{\bar{\pi}_h^k}(s)=  &\frac{1}{\check{n}}\sum_{j=1}^{\check{n}} \mathbb{D}_{\bm{\mu}_{h}^{\check{l}_j}}\l r_{h,i}+\mathbb{P}_{h} V_{h+1,i}^{\bar{\pi}_{h+1}^{\check{l}_j}}\r(s)\nonumber\\
	\geq&\frac{1}{\check{n}}\sum_{j=1}^{\check{n}} \mathbb{D}_{\bm{\mu}_{h}^{\check{l}_j}}\l r_{h,i}+\mathbb{P}_{h} \ut{V}_{h+1,i}^{\check{l}_j}\r(s)\nonumber\\
	\geq & \frac{1}{\check{n}} \sum_{j=1}^{\check{n}} \l r_{h,i}(s,\bm{a}_h^{\check{l}_j}) +\ut{V}_{h+1,i}^{\check{l}_j}(s_{h+1}^{\check{l}_j}) \r - \sqrt{H^2 \iota/\check{n}}\nonumber\\
	\geq & \ut{V}^k_{h,i}(s),
	\end{align}
	where the second step is by the induction hypothesis, the third step holds due to the Azuma-Hoeffding inequality, and the last step is by the definition of $b_{\check{n}}$. 
	
	\textbf{Case 2: } The value of $\ut{V}_{h,i}(s)$ has not changed in (the end of) episode $k-1$.  Since we have excluded the case that $\ut{V}_{h,i}$ has never been updated, we are guaranteed that there exists an episode $j$ such that $\ut{V}_{h,i}(s)$ has changed in the end of episode $j-1$ most recently. In this case, we know that indices $j$ and $k$ belong to the same stage, and $\ut{V}_{h,i}^k(s) = \ut{V}_{h,i}^{k-1}(s) = \dots = \ut{V}_{h,i}^{j}(s) \leq V_{h,i}^{\bar{\pi}_h^j}(s)$, where the last step is by the induction hypothesis. Finally, observe that by our definition, the value of $V_{h,i}^{\bar{\pi}_h^j}(s)$ is a constant for all episode indices $j$ that belong to the same stage. Since we know that episode $j$ and episode $k$ lie in the same stage, we can conclude  that $V_{h,i}^{\bar{\pi}_h^k}(s)= V_{h,i}^{\bar{\pi}_h^j}(s) \geq \ut{V}_{h,i}^k(s)$. 
	
	Again, combining the two cases and applying a union bound over all $(i,s,h,k)\in\mc{N}\times\mc{S}\times[H]\times[K]$ complete the proof. 
\end{proof}

\begin{theorem}\label{thm:ce_regret}
	For any $p\in(0,1]$, let $\iota = \log(2NS\amax KH/p)$. Suppose $K \geq \frac{SH}{\amax^2\iota}$. With probability at least $1-p$, 
	$$
	\max_{\psi_{i}\in\Psi_i} V_{1,i}^{\psi_i\diamond \bar{\pi}}(s_1)- V_{1,i}^{\bar{\pi}}(s_1)\leq O\l\sqrt{H^5 S \amax^2 \iota/K}\r, 
	$$  
\end{theorem}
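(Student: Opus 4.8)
The plan is to follow the CCE argument of Theorem~\ref{thm:main} essentially verbatim, substituting the strategy-modification best response for the ordinary best response and using the swap-regret bonus $b_{\check n} = 11\sqrt{H^2 A_i^2 \iota/\check n}$ of Algorithm~\ref{alg:ce} in place of the external-regret bonus. First I would reduce the CE gap to a sum of confidence-interval widths. Since $\bar\pi$ uniformly samples $k \in [K]$ and then runs $\bar\pi_1^k$, for any fixed $\psi_i$ we have $V_{1,i}^{\psi_i\diamond\bar\pi}(s_1) = \frac{1}{K}\sum_{k=1}^K V_{1,i}^{\psi_i\diamond\bar\pi_1^k}(s_1)$ by linearity, so that by the max-of-average $\le$ average-of-max inequality together with Lemma~\ref{lemma:upperbound2},
\[
\max_{\psi_i\in\Psi_i} V_{1,i}^{\psi_i\diamond\bar\pi}(s_1) - V_{1,i}^{\bar\pi}(s_1) \le \frac{1}{K}\sum_{k=1}^K\Big(\ol V_{1,i}^{k}(s_1) - \ul V_{1,i}^{k}(s_1)\Big).
\]

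Next I would fix an agent $i$, define $\delta_h^k \defeq \ol V_{h,i}^{k}(s_h^k) - \ul V_{h,i}^{k}(s_h^k)$, and derive the one-step recursion $\delta_h^k \le \mb{I}[n_h^k=0]H + \frac{1}{\check n}\sum_{j=1}^{\check n}\delta_{h+1}^{\check l_j} + 2b_{\check n}$ exactly as in \eqref{eqn:delta}. This recursion is word-for-word identical to the CCE case, because the update rules of $\ol V$ and $\ul V$ differ from it only through the magnitude of $b_{\check n}$. I would then sum over episodes and invoke the stage-counting identities \eqref{eqn:d1}--\eqref{eqn:d3}: because each stage is at most $(1+1/H)$ times as long as its predecessor, $\sum_{k=1}^K \frac{1}{\check n_h^k}\sum_{j=1}^{\check n_h^k}\delta_{h+1}^{\check l_{h,j}^k} \le (1+\frac1H)\sum_{k=1}^K \delta_{h+1}^k$. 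Iterating over $h=H,\dots,1$ and using $(1+1/H)^H\le e$ gives $\sum_{k=1}^K\delta_1^k \le O\l SH^3 + \sum_{h=1}^H\sum_{k=1}^K (1+\tfrac1H)^{h-1}b_{\check n_h^k}\r$, as in \eqref{eqn:tmp9}.

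The only genuinely new bookkeeping is the aggregated bonus, which now carries an extra factor of $A_i$ relative to the CCE bonus. Repeating the geometric-stage computation leading to \eqref{eqn:tmp8} with $b_{\check n}=11\sqrt{H^2A_i^2\iota/\check n}$ in place of $6\sqrt{H^2A_i\iota/\check n}$ simply replaces the factor $\sqrt{A_i}$ by $A_i$, yielding $\sum_{h=1}^H\sum_{k=1}^K (1+\tfrac1H)^{h-1}b_{\check n_h^k}\le O(\sqrt{SA_i^2 KH^5\iota})$. Combining gives $\sum_{k=1}^K\delta_1^k\le O(SH^3 + \sqrt{SA_i^2KH^5\iota})$; when $K\ge SH/(A_i^2\iota)$ the second term dominates, so dividing by $K$ and using $A_i\le\amax$ yields the claimed $O(\sqrt{H^5 S\amax^2\iota/K})$, mirroring the reduction at the end of Theorem~\ref{thm:main}.

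I expect essentially no new obstacle, since the skeleton transfers directly from Theorem~\ref{thm:main} and the two substantive changes (tracking $\max_{\psi_i}$ and carrying the larger bonus) have already been absorbed into Lemma~\ref{lemma:upperbound2} and the bonus analysis, respectively. The one point meriting care is confirming that optimism propagates correctly through the recursion under strategy modifications, i.e.\ that $\max_{\psi_i} V_{h,i}^{\psi_i\diamond\bar\pi_h^k}(s)$ is controlled by the stagewise \emph{swap} regret of the no-swap-regret subroutine applied to the surrogate losses $\ell_j$ rather than by the weaker external regret; but this is exactly the content of Lemmas~\ref{lemma:bonus2} and~\ref{lemma:upperbound2}, which I would simply invoke.
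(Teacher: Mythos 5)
Your proposal is correct and follows essentially the same route as the paper's own proof: reduce the CE gap to $\frac{1}{K}\sum_k(\ol V_{1,i}^k(s_1)-\ul V_{1,i}^k(s_1))$ via Lemma~\ref{lemma:upperbound2}, reuse the $\delta_h^k$ recursion and stage-counting argument from Theorem~\ref{thm:main} verbatim, and rerun the geometric bonus aggregation with $b_{\check n}=11\sqrt{H^2A_i^2\iota/\check n}$, which is exactly what the paper does. The one point you flag as meriting care — that swap regret rather than external regret controls the optimism under strategy modifications — is indeed precisely the content of Lemmas~\ref{lemma:bonus2} and~\ref{lemma:upperbound2}, so invoking them as you propose closes the argument.
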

\begin{proof}
	The proof follows a similar procedure as the proof of Theorem~\ref{thm:main}. From Lemma~\ref{lemma:upperbound2}, we know that
	\[
	\begin{aligned}
	\max_{\psi_{i}\in\Psi_i} V_{1,i}^{\psi_i\diamond \bar{\pi}}(s_1) - V_{1,i}^{\bar{\pi}}(s_1) = & \max_{\psi_{i}\in\Psi_i}\frac{1}{K}\sum_{k=1}^K \l  V_{1,i}^{\psi_i\diamond \bar{\pi}_1^k}(s_1) - V_{1,i}^{\bar{\pi}_1^k} (s_1) \r\\
	\leq & \frac{1}{K}\sum_{k=1}^K \l \max_{\psi_{i}\in\Psi_i} V_{1,i}^{\psi_i\diamond \bar{\pi}_1^k}(s_1) - V_{1,i}^{\bar{\pi}_1^k} (s_1) \r\\
	\leq & \frac{1}{K}\sum_{k=1}^K \l \ol{V}_{1,i}^{k}(s_1) - \ul{V}_{1,i}^{k}(s_1)\r. 
	\end{aligned}
	\]
	We hence only need to upper bound $\frac{1}{K}\sum_{k=1}^K ( \ol{V}_{1,i}^{k}(s_1) - \ul{V}_{1,i}^{k}(s_1))$. For a fixed agent $i\in\mc{N}$, we define the following notation:
	$$
	\delta_h^k \defeq \overline{V}_{h,i}^{k}(s_h^k)- \ul{V}_{h,i}^{k}(s_h^k).
	$$ 
	The main idea of the subsequent proof is to upper bound $\sum_{k=1}^K \delta_h^k$ by the next step $\sum_{k=1}^K \delta_{h+1}^k$, and then obtain a recursive formula. From the update rule of $\ol{V}_{h,i}^k(s_h^k)$ in \eqref{eqn:update_rule_new}, we know that
	\[
	\ol{V}_{h,i}^k(s_h^k) \leq \mb{I}[n_h^k = 0]H + \frac{1}{\check{n}}\sum_{j=1}^{\check{n}} \l r_{h,i}(s_h,\bm{a}_{h}^{\check{l}_j}) + \overline{V}_{h+1,i}^{\check{l}_j}(s_{h+1}^{\check{l}_j}) \r+ b_{\check{n}},
	\]
	where the $\mb{I}[n_h^k=0]$ term counts for the event that the optimistic value function has never been updated for the given state. 
	
	Further recalling the definition of $\ul{V}_{h,i}^{k}(s_h^k)$, we have
	\begin{align}
	\delta_h^k \leq &\mb{I}[n_h^k = 0]H+ \frac{1}{\check{n}}\sum_{j=1}^{\check{n}} \l \overline{V}_{h+1,i}^{\check{l}_j}(s_{h+1}^{\check{l}_j}) - \ul{V}_{h+1,i}^{\check{l}_j}(s_{h+1}^{\check{l}_j}) \r+2 b_{\check{n}}\nonumber\\
	\leq& \mb{I}[n_h^k = 0]H + \frac{1}{\check{n}}\sum_{j=1}^{\check{n}}\delta_{h+1}^{\check{l}_j} + 2b_{\check{n}},\label{eqn:delta2}
	\end{align}
	To find an upper bound of $\sum_{k=1}^K \delta_h^k$, we proceed to upper bound each term on the RHS of \eqref{eqn:delta2} separately. First, notice that $\sum_{k=1}^K \mb{I}\left[n_h^k=0\right]\leq SH$, because each fixed state-step pair $(s,h)$ contributes at most $1$ to $\sum_{k=1}^K \mb{I}\left[n_h^k=0\right]$. Next, we turn to analyze the second term on the RHS of \eqref{eqn:delta2}. Observe that
	\begin{align}
	\sum_{k=1}^K \frac{1}{\check{n}_h^k}\sum_{j=1}^{\check{n}_h^k} \delta_{h+1}^{\check{l}_{h,j}^k} =& \sum_{k=1}^K \sum_{m=1}^K \frac{1}{\check{n}_h^k} \delta_{h+1}^{m} \sum_{j=1}^{\check{n}_h^k} \indicator\left[\check{l}_{h,j}^k = m\right]\nonumber\\
	=&\sum_{m=1}^K  \delta_{h+1}^{m} \sum_{k=1}^K  \frac{1}{\check{n}_h^k} \sum_{j=1}^{\check{n}_h^k} \indicator\left[\check{l}_{h,j}^k = m\right].\label{eqn:d1x}
	\end{align}
	For a fixed episode $m$, notice that $\sum_{j=1}^{\check{n}_h^k} \indicator[\check{l}_{h,j}^k = m]\leq 1$, and that $\sum_{j=1}^{\check{n}_h^k} \indicator[\check{l}_{h,j}^k = m]= 1$ happens if and only if $s_h^k = s_h^m$ and $(m,h)$ lies in the previous stage of $(k,h)$ with respect to the state-step pair $(s_h^k, h)$.  Define $\mc{K}_m\defeq \{ k\in[K]: \sum_{j=1}^{\check{n}_h^k} \indicator[\check{l}_{h,j}^k = m]= 1 \}$. We then know that all episode indices $k\in \mc{K}_m$ belong to the same stage, and hence these episodes have the same value of $\check{n}_h^k$. That is, there  exists an integer $N_m>0$, such that $\check{n}_h^k = N_m,\forall k \in \mc{K}_m$. Further, since the stages are partitioned in a way such that each stage is at most $(1+\frac{1}{H})$ times longer than the previous stage, we know that $|\mc{K}_m|\leq (1+\frac{1}{H})N_m$. Therefore, for every $m$, it holds that
	\begin{equation}\label{eqn:d2x}
	\sum_{k=1}^K  \frac{1}{\check{n}_h^k} \sum_{j=1}^{\check{n}_h^k} \indicator\left[\check{l}_{h,j}^k = m\right] \leq 1+ \frac{1}{H}.
	\end{equation} 
	Combining \eqref{eqn:d1x} and \eqref{eqn:d2x} leads to the following upper bound of the second term in \eqref{eqn:delta2}:	
	\begin{equation}
	\sum_{k=1}^K \frac{1}{\check{n}_h^k}\sum_{j=1}^{\check{n}_h^k} \delta_{h+1}^{\check{l}_{h,j}^k}\leq (1+\frac{1}{H}) \sum_{k=1}^{K} \delta_{h+1}^{k}.
	\end{equation}
	So far, we have obtained the following upper bound:
	\[
	\sum_{k=1}^K\delta_h^k \leq SH^2 + (1+\frac{1}{H})  \sum_{k=1}^K \delta_{h+1}^k + 2\sum_{k=1}^K b_{\check{n}_h^k}. 
	\]
	
	Iterating the above inequality over $h = H, H-1, \dots, 1$ leads to
	\begin{equation}
	\sum_{k=1}^{K}\delta_1^k \leq O\l SH^3 + \sum_{h=1}^H \sum_{k=1}^K (1+\frac{1}{H})^{h-1}b_{\check{n}_h^k} \r,
	\end{equation}
	where we used the fact that $(1+\frac{1}{H})^H \leq e$. In the following, we analyze the bonus term $b_{\check{n}_h^k}$ more carefully. Recall our definitions that  $e_1 =H,\ e_{i+1} = \floor{(1+\frac{1}{H})e_i},i\geq 1$, and $b_{\check{n}}= 11\sqrt{H^2 A_i^2\iota/\check{n}}$. For any $h\in[H]$, 
	\begin{align}
	\sum_{k=1}^K (1+\frac{1}{H})^{h-1} b_{\check{n}_h^k} \leq& \sum_{k=1}^K (1+\frac{1}{H})^{h-1}11\sqrt{H^2 A_i^2\iota/\check{N}_h^k}\nonumber\\
	=&11\sqrt{H^2 A_i^2\iota}\sum_{s\in\mc{S}}\sum_{j\geq 1} (1+\frac{1}{H})^{h-1}e_j^{-\frac{1}{2}}\sum_{k=1}^K\mb{I}\left[s_h^k = s, \check{N}_h^k(s_h^k) = e_j\right]\nonumber\\
	=& 11\sqrt{H^2 A_i^2\iota}\sum_{s\in\mc{S}}\sum_{j\geq 1} (1+\frac{1}{H})^{h-1}w(s,j)e_j^{-\frac{1}{2}},\nonumber
	\end{align}
	where we define $w(s,j)\defeq \sum_{k=1}^K\mb{I}\left[s_h^k = s, \check{N}_h^k(s_h^k) = e_j\right]$ for any $s\in\mc{S}$. If we further let $w(s) \defeq \sum_{j\geq 1} w(s,j)$, we can see that $\sum_{s\in\mc{S}} w(s) = K$. For each fixed state $s$, we now seek an upper bound of its corresponding $j$ value, denoted as $J$ in what follows. Since each stage is $(1+\frac{1}{H})$ times longer than its previous stage, we know that $w(s,j) = \sum_{k=1}^K \mb{I}\left[s_h^k=s, \check{N}_h^k(s_h^k) = e_j\right] = \floor{(1+\frac{1}{H})e_j}$ for any $1\leq j\leq J$. Since $\sum_{j=1}^J w(s,j) = w(s)$, we obtain that $e_J \leq (1+\frac{1}{H})^{J-1} \leq \frac{10}{1+\frac{1}{H}}\frac{w(s)}{H}$ by taking the sum of a geometric sequence. Therefore, by plugging in $w(s,j) = \floor{(1+\frac{1}{H})e_j}$, 
	\[
	\begin{aligned} 
	\sum_{j\geq 1} (1+\frac{1}{H})^{h-1}w(s,j)e_j^{-\frac{1}{2}}\leq O\left(\sum_{j=1}^J e_j^{\frac{1}{2}}\right)\leq O\left(\sqrt{w(s)H}\right), 
	\end{aligned}
	\]
	where in the second step we again used the formula of the sum of a geometric sequence. Finally, using the fact that $\sum_{s\in\mc{S}}w(s) = K$ and applying the Cauchy-Schwartz inequality, we have
	\begin{align}
	\sum_{h=1}^H\sum_{k=1}^K (1+\frac{1}{H})^{h-1} b_{\check{n}_h^k} =&O\left(\sqrt{H^4 A_i^2\iota}\sum_{s\in\mc{S}}\sum_{j\geq 1} (1+\frac{1}{H})^{h-1}w(s,j)e_j^{-\frac{1}{2}}\right)\nonumber\\
	\leq&  O\left(\sqrt{SA_i^2 KH^5\iota}\right).
	\end{align}
	Summarizing the results above leads to
	\[
	\sum_{k=1}^K \delta_1^k \leq O\l SH^3 + \sqrt{SA_i^2 KH^5\iota} \r. 
	\]
	In the case when $K$ is large enough, such that $K \geq \frac{SH}{A_i^2\iota}$, the second term becomes dominant, and we obtain the desired result:	
	\[
	\max_{\psi_{i}\in\Psi_i} V_{1,i}^{\psi_i\diamond \bar{\pi}}(s_1) - V_{1,i}^{\bar{\pi}}(s_1) \leq \frac{1}{K}\sum_{k=1}^{K}\delta_1^k \leq O\l\sqrt{SA_i^2 H^5\iota/K} \r .
	\]
	This completes the proof of the theorem.
\end{proof}

An immediate corollary is that we obtain an $\epsilon$-approximate CE when $\sqrt{S\amax^2 H^5\iota/K}\leq \epsilon$, which is Theorem~\ref{thm:ce} in the main text.

\vspace{.8em}
\noindent\textbf{Theorem \ref{thm:ce}.} (Sample complexity of learning CE). For any $p\in (0,1]$, set $\iota = \log(2NSA_{\max} KH/p)$, and let the agents run Algorithm~\ref{alg:ce} for $K$ episodes with $K= O(S A_{\max}^2H^5 \iota/\epsilon^2)$. Then, with probability at least $1-p$, the output policy $\bar{\pi}$ constitutes an $\epsilon$-approximate correlated equilibrium. 

\section{Proofs for Section~\ref{subsec:exact}}\label{app:mpg}

We start with a multi-agent variant of the performance difference lemma~\citep{kakade2002approximately} in the finite-horizon setting. 

\begin{lemma}(Performance difference lemma).\label{lemma:performance_diff}
	For any policy $\pi=(\pi_i,\pi_{-i})\in\Pi$ and $\pi'=(\pi_i',\pi_{-i})\in\Pi$, it holds for any $i\in\mc{N}$ that
	\[
	V_{1,i}^{\pi}(\rho) - V_{1,i}^{\pi'}(\rho) = \sum_{h=1}^H \ee_{s_h\sim d_{h,\rho}^\pi}\ee_{\bm{a}_h\sim \pi_h(\cdot \mid s_h)} \L A_{h,i}^{\pi'}(s_h,\bm{a}_h) \R,
	\]
	where $A_{h,i}^{\pi'}(s_h,\bm{a}_h) = Q_{h,i}^{\pi'}(s_h,\bm{a}_h) - V_{h,i}^{\pi'}(s_h)$ is the advantage function. 
\end{lemma}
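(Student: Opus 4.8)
The plan is to prove this by the standard telescoping (``add and subtract the reference value function'') argument, carried out carefully for the finite-horizon episodic setting. First I would expand $V_{1,i}^{\pi}(\rho) = \ee_\pi[\sum_{h=1}^H r_{h,i}(s_h,\bm{a}_h)]$ as the expected cumulative reward along a trajectory generated by $\pi$ starting from $s_1\sim\rho$. The key step is to insert, inside this expectation, a telescoping sum of the reference value functions $V_{h,i}^{\pi'}$ evaluated along the \emph{same} trajectory, adopting the boundary convention $V_{H+1,i}^{\pi'}\equiv 0$.

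Then I would consider the quantity
$$\ee_\pi\Big[\sum_{h=1}^H \big( r_{h,i}(s_h,\bm{a}_h) + V_{h+1,i}^{\pi'}(s_{h+1}) - V_{h,i}^{\pi'}(s_h)\big)\Big].$$
The value-function differences telescope to $V_{H+1,i}^{\pi'}(s_{H+1}) - V_{1,i}^{\pi'}(s_1) = -V_{1,i}^{\pi'}(s_1)$, so that, taking the expectation over $s_1\sim\rho$, this quantity equals exactly $V_{1,i}^{\pi}(\rho) - V_{1,i}^{\pi'}(\rho)$, the left-hand side we want.

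Next I would apply the tower property, conditioning on $(s_h,\bm{a}_h)$ and integrating out the transition to $s_{h+1}\sim P_h(\cdot\mid s_h,\bm{a}_h)$. Invoking the Bellman equation $Q_{h,i}^{\pi'} = r_{h,i} + \mathbb{P}_h V_{h+1,i}^{\pi'}$, the per-step conditional expectation becomes
$$\ee_\pi\big[ r_{h,i}(s_h,\bm{a}_h) + V_{h+1,i}^{\pi'}(s_{h+1}) - V_{h,i}^{\pi'}(s_h) \mid s_h,\bm{a}_h\big] = Q_{h,i}^{\pi'}(s_h,\bm{a}_h) - V_{h,i}^{\pi'}(s_h) = A_{h,i}^{\pi'}(s_h,\bm{a}_h).$$
Summing over $h$ and using that, under $\pi$, the step-$h$ state obeys $s_h\sim d_{h,\rho}^\pi$ while the joint action obeys $\bm{a}_h\sim \pi_h(\cdot\mid s_h)$, the trajectory expectation rewrites as the claimed sum $\sum_{h=1}^H \ee_{s_h\sim d_{h,\rho}^\pi}\ee_{\bm{a}_h\sim\pi_h(\cdot\mid s_h)}[A_{h,i}^{\pi'}(s_h,\bm{a}_h)]$, completing the proof.

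This argument is essentially routine; the points demanding care are bookkeeping ones rather than genuine obstacles: fixing the boundary convention $V_{H+1,i}^{\pi'}\equiv 0$ so the telescope closes cleanly, and ensuring the Bellman identity for $Q^{\pi'}$ is applied under the conditional expectation generated by $\pi$ (the transition kernel is shared, so $\mathbb{P}_h V_{h+1,i}^{\pi'}$ is the same whether we reason under $\pi$ or $\pi'$). I would also remark that, although the statement takes $\pi$ and $\pi'$ to agree on $\pi_{-i}$, this restriction is never used in the derivation: the identity holds for an arbitrary pair of policies, so I would not rely on the shared marginal anywhere.
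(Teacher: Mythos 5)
Your proof is correct and follows essentially the same route as the paper's: expand $V_{1,i}^{\pi}(\rho)$ over trajectories drawn from $\pi$, insert the telescoping sum of $V_{h,i}^{\pi'}$, apply the tower property together with the Bellman identity $Q_{h,i}^{\pi'} = r_{h,i} + \mathbb{P}_h V_{h+1,i}^{\pi'}$, and rewrite the trajectory expectation via $d_{h,\rho}^{\pi}$ and $\pi_h$. The only cosmetic difference is that you adopt the convention $V_{H+1,i}^{\pi'}\equiv 0$ to treat all steps uniformly, whereas the paper handles $h=H$ separately via $Q_{H,i}^{\pi'} = r_{H,i}$ (these are equivalent), and your closing remark that the shared component $\pi_{-i}$ is never used is also accurate.
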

\begin{proof}
	For any state-action trajectory $\tau = (s_1,\bm{a}_1,\dots,s_H,\bm{a}_H)$, let $\pp^\pi(\tau\mid \rho)$ denote the probability of observing the trajectory $\tau$ by following the policy $\pi$ starting from the initial state distribution $\rho$. From the definition of the value function, it holds that
	\[
	\begin{aligned}
	V_{1,i}^{\pi}(\rho) - V_{1,i}^{\pi'}(\rho) =& \ee_{\tau\sim \pp^\pi(\tau\mid \rho)}\L \sum_{h=1}^H r_h(s_h,\bm{a}_h) \R - V_{1,i}^{\pi'}(\rho)\\
	=&\ee_{\tau\sim \pp^\pi(\tau\mid \rho)}\L \sum_{h=1}^H \l r_h(s_h,\bm{a}_h) + V_{h,i}^{\pi'}(s_h) - V_{h,i}^{\pi'}(s_h) \r\R - V_{1,i}^{\pi'}(\rho)\\
	=&\ee_{\tau\sim \pp^\pi(\tau\mid \rho)}\L \sum_{h=1}^{H-1} \l r_h(s_h,\bm{a}_h) + V_{h+1,i}^{\pi'}(s_{h+1}) - V_{h,i}^{\pi'}(s_h) \r + r_H(s_H,\bm{a}_H) - V_{H,i}^{\pi'}(s_H)\R\\
	\eqa & \ee_{\tau\sim \pp^\pi(\tau\mid \rho)}\L \sum_{h=1}^{H-1} \l r_h(s_h,\bm{a}_h) + \ee \L V_{h+1,i}^{\pi'}(s_{h+1}) \mid s_h,\bm{a}_h \R - V_{h,i}^{\pi'}(s_h) \r + r_H(s_H,\bm{a}_H) - V_{H,i}^{\pi'}(s_H)\R\\
	\eqb &\ee_{\tau\sim \pp^\pi(\tau\mid \rho)}\L \sum_{h=1}^{H}A_{h,i}^{\pi'}(s_h,\bm{a}_h) \R = \sum_{h=1}^H \ee_{s_h\sim d_{h,\rho}^\pi}\ee_{\bm{a}_h\sim \pi_h(\cdot \mid s_h)} \L A_{h,i}^{\pi'}(s_h,\bm{a}_h) \R,
	\end{aligned}
	\]
	where $(a)$ uses the tower property of conditional expectation, and $(b)$ is due to the definition of the advantage function and the fact that $Q_{H,i}^{\pi'}(s_h,\bm{a}_H) = r_H(s_H,\bm{a}_H)$. 
\end{proof}

In the following, we reproduce a variant of the policy gradient theorem~\citep{sutton2000policy} in the setting of finite-horizon MPGs. 

\begin{lemma}\label{lemma:policy_gradient}
	(Policy gradient theorem). For any $i\in\mc{N}$, it holds that 
	\[
	\grad V_{1,i}^\pi(\rho) = \sum_{h=1}^H \ee_{s_h\sim d_{h,\rho}^\pi}\ee_{\bm{a}_h\sim \pi_h(\cdot\mid s_h)}\L Q_{h,i}^\pi (s_h,\bm{a}_h)\grad \log \pi_h(\bm{a}_h \mid s_h) \R.
	\]
\end{lemma}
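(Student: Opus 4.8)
The plan is to derive a one-step Bellman recursion for the gradient $\grad V_{h,i}^\pi(s)$ and then unroll it across the horizon, with the visitation distributions $d_{h,\rho}^\pi$ emerging naturally from the accumulated transition terms. First I would invoke the finite-horizon Bellman equations $Q_{h,i}^\pi(s,\bm{a}) = r_{h,i}(s,\bm{a}) + \sum_{s'}P_h(s'\mid s,\bm{a})V_{h+1,i}^\pi(s')$ and $V_{h,i}^\pi(s) = \sum_{\bm{a}}\pi_h(\bm{a}\mid s)Q_{h,i}^\pi(s,\bm{a})$, with the convention $V_{H+1,i}^\pi\equiv 0$. Differentiating the latter with the product rule gives
\[
\grad V_{h,i}^\pi(s) = \sum_{\bm{a}}\grad \pi_h(\bm{a}\mid s)\,Q_{h,i}^\pi(s,\bm{a}) + \sum_{\bm{a}}\pi_h(\bm{a}\mid s)\,\grad Q_{h,i}^\pi(s,\bm{a}).
\]
Because the reward $r_{h,i}$ and the transition kernel $P_h$ do not depend on the policy parameters, the second gradient simplifies to $\grad Q_{h,i}^\pi(s,\bm{a}) = \sum_{s'}P_h(s'\mid s,\bm{a})\grad V_{h+1,i}^\pi(s')$, reducing the $Q$-gradient to the next-step $V$-gradient.

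Next I would apply the log-derivative identity $\grad\pi_h(\bm{a}\mid s) = \pi_h(\bm{a}\mid s)\grad\log\pi_h(\bm{a}\mid s)$ to the first term, which turns it into an expectation and yields the recursion
\[
\grad V_{h,i}^\pi(s) = \ee_{\bm{a}\sim\pi_h(\cdot\mid s)}\!\L Q_{h,i}^\pi(s,\bm{a})\grad\log\pi_h(\bm{a}\mid s) \R + \ee_{\bm{a}\sim\pi_h(\cdot\mid s)}\ee_{s'\sim P_h(\cdot\mid s,\bm{a})}\!\L \grad V_{h+1,i}^\pi(s') \R.
\]
Writing $\grad V_{1,i}^\pi(\rho) = \ee_{s_1\sim\rho}\L \grad V_{1,i}^\pi(s_1) \R$ and applying this recursion repeatedly down to step $H$, I would separate at each level the ``score'' term from the ``transition'' term that feeds into the next level.

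The key bookkeeping step, which I expect to be the main (if routine) obstacle, is to verify by induction on $h$ that the composition of the transition expectations accumulated after $h-1$ unrolling steps is exactly the law of the state visited at step $h$, namely $d_{h,\rho}^\pi$ as given by $d_{h,\rho}^\pi(s) = \pp^\pi(s_h = s\mid s_1\sim\rho)$. Collecting the score terms from all $H$ levels and using the boundary condition $\grad V_{H+1,i}^\pi\equiv 0$ to terminate the recursion then gives
\[
\grad V_{1,i}^\pi(\rho) = \sum_{h=1}^H \ee_{s_h\sim d_{h,\rho}^\pi}\ee_{\bm{a}_h\sim\pi_h(\cdot\mid s_h)}\L Q_{h,i}^\pi(s_h,\bm{a}_h)\grad\log\pi_h(\bm{a}_h\mid s_h) \R,
\]
as claimed. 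Finally I would remark that since the joint policy factorizes as $\pi_h(\bm{a}\mid s) = \prod_{j}\pi_{h,j}(a_j\mid s)$ and $r_{h,i}, P_h$ are parameter-free, taking the gradient with respect to agent $i$'s own parameters reduces $\grad\log\pi_h(\bm{a}\mid s)$ to $\grad\log\pi_{h,i}(a_i\mid s)$, which is exactly the form used in the REINFORCE estimator~\eqref{eqn:gradient_estimator}.
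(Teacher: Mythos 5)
Your proposal is correct and follows essentially the same route as the paper's proof: differentiate the Bellman equation via the product rule, apply the log-derivative identity, reduce $\grad Q_{h,i}^\pi$ to the next-step $V$-gradient since rewards and transitions are parameter-free, and unroll the recursion so that the visitation distributions $d_{h,\rho}^\pi$ absorb the accumulated transition expectations. The only cosmetic difference is that you state the recursion at a generic step $h$ and make the induction on the visitation law explicit, whereas the paper writes the computation at $h=1$ and invokes the recursion implicitly.
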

\begin{proof}
	For any fixed initial state $s_1\in\mc{S}$, differentiating both sides of the Bellman equation leads to
	\[
	\begin{aligned}
	\grad V_{1,i}^\pi(s_1) = &\grad \sum_{\bm{a}_1} \pi_1(\bm{a}_1\mid s_1) Q_{1,i}^\pi (s_1,\bm{a}_1)\\
	=& \sum_{\bm{a}_1} \l \grad \pi_1(\bm{a}_1\mid s_1) Q_{1,i}^\pi (s_1,\bm{a}_1) + \pi_1(\bm{a}_1\mid s_1) \grad Q_{1,i}^\pi (s_1,\bm{a}_1) \r\\
	=& \sum_{\bm{a}_1} \l \pi_1(\bm{a}_1\mid s_1)\grad \log \pi_1(\bm{a}_1\mid s_1) Q_{1,i}^\pi (s_1,\bm{a}_1) + \pi_1(\bm{a}_1\mid s_1) \grad \l r_{1,i}(s_1,\bm{a}_1) + \ee_{s_2 \sim P_1(\cdot \mid s_1,\bm{a}_1)}\L V_{2,i}^\pi(s_2) \R \r \r\\
	=& \sum_{\bm{a}_1}  \pi_1(\bm{a}_1\mid s_1) \l \grad \log \pi_1(\bm{a}_1\mid s_1) Q_{1,i}^\pi (s_1,\bm{a}_1) + \ee_{s_2 \sim P_1(\cdot \mid s_1,\bm{a}_1)}\L \grad V_{2,i}^\pi(s_2) \R  \r .
	\end{aligned}
	\]
	From the linearity of expectation, we know that for any state distribution $\rho$, 
	\[
	\begin{aligned}
	\grad V_{1,i}^\pi(\rho) = \grad \l \ee_{s_1\sim d_{1,\rho}^\pi}\L V_{1,i}^\pi(s_1) \R \r= \ee_{s_1\sim d_{1,\rho}^\pi}\ee_{\bm{a}_1\sim \pi_1(\cdot\mid s_1)}   \L Q_{1,i}^\pi (s_1,\bm{a}_1) \grad \log \pi_1(\bm{a}_1\mid s_1) \R + \ee_{s_2 \sim d_{2,\rho}^\pi}\L \grad V_{2,i}^\pi(s_2) \R.
	\end{aligned}
	\]
	Applying the above equation recursively, we obtain that
	\[
	\grad V_{1,i}^\pi(\rho) = \sum_{h=1}^H \ee_{s_h\sim d_{h,\rho}^\pi}\ee_{\bm{a}_h\sim \pi_h(\cdot\mid s_h)}\L Q_{h,i}^\pi (s_h,\bm{a}_h)\grad \log \pi_h(\bm{a}_h \mid s_h) \R.
	\]
	This completes the proof of the policy gradient theorem in the finite-horizon case. 
\end{proof}

With direct parameterization, we can further derive from the policy gradient theorem that for any $ h\in[H], s\in\mc{S},a\in\mc{A}_i$,
\begin{equation}\label{eqn:policy_gradient}
\frac{\partial V_{1,i}^\pi(\rho)}{\partial \pi_{h,i}(a\mid s)} = \ee_{{a}_{h,-i}\sim \pi_{h,-i}(\cdot \mid s_h)} \L d_{h,\rho}^\pi(s) Q_{h,i}^\pi(s,a,a_{h,-i})\R . 
\end{equation}

In the following, we state and prove a finite-horizon variant of the gradient domination property, which has been shown in single-agent policy gradient methods~\citep{agarwal2021theory} and infinite-horizon discounted MPGs~\citep{zhang2021gradient,leonardos2021global}. 

\begin{lemma}\label{lemma:gradient_domination}
	(Gradient domination). For any policy $\pi = (\pi_i,\pi_{-i})\in\Pi$ in a Markov potential game, let $\pi_i^\star$ be agent $i$'s best response to $\pi_{-i}$, and let $\pi^\star = (\pi_i^\star,\pi_{-i})$. With direct policy parameterization, for any initial state distribution $\rho\in\Delta(\mc{S})$, it holds that
	\[
	V_{1,i}^{\pi^\star}(\rho) - V_{1,i}^\pi(\rho) \leq \norm{\frac{d_{\rho}^{\pi^\star}}{d_{\rho}^{\pi}}}_{\infty} \max_{\ol{\pi}_i\in\Pi_i} (\ol{\pi} - \pi)\T \grad_{\pi_i} V_{1,i}^\pi(\rho), 
	\]
	where the $L^{\infty}$ norm takes the maximum over $[H]\times\mc{S}$.
\end{lemma}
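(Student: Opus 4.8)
The plan is to combine the performance difference lemma (Lemma~\ref{lemma:performance_diff}) with the explicit gradient formula~\eqref{eqn:policy_gradient}. First I would apply Lemma~\ref{lemma:performance_diff} to the pair $(\pi^\star,\pi)$, which differ only in agent $i$'s component since $\pi^\star = (\pi_i^\star,\pi_{-i})$, obtaining
\[
V_{1,i}^{\pi^\star}(\rho) - V_{1,i}^\pi(\rho) = \sum_{h=1}^H \ee_{s_h\sim d_{h,\rho}^{\pi^\star}}\ee_{\bm{a}_h\sim \pi_h^\star(\cdot\mid s_h)}\L A_{h,i}^\pi(s_h,\bm{a}_h)\R.
\]
Because the opponents' marginal $\pi_{h,-i}$ is identical under $\pi^\star$ and $\pi$, I would introduce the $\pi_{-i}$-averaged advantage $\bar{A}_{h,i}^\pi(s,a) \defeq \ee_{a_{-i}\sim \pi_{h,-i}(\cdot\mid s)}\L A_{h,i}^\pi(s,a,a_{-i})\R$ and rewrite the inner expectation as $\sum_{a} \pi_{h,i}^\star(a\mid s_h)\,\bar{A}_{h,i}^\pi(s_h,a)$.

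Second, since $\pi_{h,i}^\star(\cdot\mid s_h)$ is just one distribution over $\mc{A}_i$, the weighted advantage is dominated by the pointwise maximum, $\sum_{a} \pi_{h,i}^\star(a\mid s_h)\bar{A}_{h,i}^\pi(s_h,a) \le \max_{a\in\mc{A}_i}\bar{A}_{h,i}^\pi(s_h,a)$. The crucial fact I would record here is that this maximum is nonnegative: since $\sum_{a} \pi_{h,i}(a\mid s_h)\bar{A}_{h,i}^\pi(s_h,a)=0$ (the own-policy advantage averages to zero), the maximum over $a$ is at least this average, hence $\ge 0$. This nonnegativity is exactly what lets me extract the distribution mismatch coefficient: writing $d_{h,\rho}^{\pi^\star}(s) \le \norm{\frac{d_\rho^{\pi^\star}}{d_\rho^\pi}}_\infty d_{h,\rho}^\pi(s)$ and applying it termwise yields
\[
V_{1,i}^{\pi^\star}(\rho) - V_{1,i}^\pi(\rho) \le \norm{\frac{d_\rho^{\pi^\star}}{d_\rho^\pi}}_\infty \sum_{h=1}^H \sum_{s\in\mc{S}} d_{h,\rho}^\pi(s)\max_{a\in\mc{A}_i}\bar{A}_{h,i}^\pi(s,a).
\]

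Finally I would identify the residual sum with the gradient term. Using~\eqref{eqn:policy_gradient} together with $\sum_{a}\pi_{h,i}(a\mid s)\bar{A}_{h,i}^\pi(s,a)=0$, a direct computation shows that for any $\bar{\pi}_i\in\Pi_i$ one has $(\bar{\pi} - \pi)\T\grad_{\pi_i}V_{1,i}^\pi(\rho) = \sum_{h,s} d_{h,\rho}^\pi(s)\sum_{a} \bar{\pi}_{h,i}(a\mid s)\bar{A}_{h,i}^\pi(s,a)$; maximizing over $\bar{\pi}_i$ decouples across $(h,s)$ (as $d_{h,\rho}^\pi(s)\ge 0$) and concentrates all mass on the maximizing action, giving precisely $\sum_{h,s} d_{h,\rho}^\pi(s)\max_a \bar{A}_{h,i}^\pi(s,a)$. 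Substituting this back delivers the claimed inequality. I expect the main obstacle to be the careful bookkeeping in the second step: one must establish the nonnegativity of $\max_a \bar{A}_{h,i}^\pi(s,a)$ \emph{before} invoking the mismatch bound, since otherwise the inequality $d_{h,\rho}^{\pi^\star}\le \norm{\cdot}_\infty d_{h,\rho}^\pi$ would not propagate in the correct direction through the sum.
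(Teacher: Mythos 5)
Your proposal is correct and follows essentially the same route as the paper's proof: performance difference lemma applied to $(\pi^\star,\pi)$, bounding the $\pi_i^\star$-weighted averaged advantage by its pointwise maximum, using the zero-mean property of the own-policy advantage to establish nonnegativity before extracting the mismatch coefficient, and then identifying the residual sum with $\max_{\ol{\pi}_i\in\Pi_i} (\ol{\pi} - \pi)\T \grad_{\pi_i} V_{1,i}^\pi(\rho)$ via the direct-parameterization policy gradient formula. The paper carries out the last identification in the reverse direction (from the max-over-actions form to the max-over-policies form), but the content and the key observations, including the nonnegativity caveat you flag, are identical.
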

\begin{proof}
	From the performance difference lemma (Lemma~\ref{lemma:performance_diff}), we know that 
	\[
	\begin{aligned}
	V_{1,i}^{\pi^\star}(\rho) - V_{1,i}^\pi(\rho) = &\sum_{h=1}^H \ee_{s_h\sim d_{h,\rho}^{\pi^\star}}\ee_{{a}_{h,i}\sim \pi_{h,i}^\star(\cdot \mid s_h)}\ee_{{a}_{h,-i}\sim \pi_{h,-i}(\cdot \mid s_h)} \L A_{h,i}^{\pi}(s_h,a_{h,i},a_{h,-i}) \R\\
	\leq & \sum_{h=1}^H \sum_{s_h\in\mc{S}} d_{h,\rho}^{\pi^\star}(s_h) \max_{a_{h,i}^\star\in\mc{A}_i} \sum_{a_{h,-i}\in\mc{A}_{-i}} \pi_{h,-i}(a_{h,-i} \mid s_h) A_{h,i}^\pi(s_h,a_{h,i}^\star,a_{h,-i})\\
	=&\sum_{h=1}^H \sum_{s_h\in\mc{S}} \frac{d_{h,\rho}^{\pi^\star}(s_h)}{d_{h,\rho}^{\pi}(s_h)}d_{h,\rho}^{\pi}(s_h) \max_{a_{h,i}^\star\in\mc{A}_i} \sum_{a_{h,-i}\in\mc{A}_{-i}} \pi_{h,-i}(a_{h,-i}\mid s_h) A_{h,i}^\pi(s_h,a_{h,i}^\star,a_{h,-i})\\
	\leq & \l \max_{h\in[H],s_h\in\mc{S}} \frac{d_{h,\rho}^{\pi^\star}(s_h)}{d_{h,\rho}^{\pi}(s_h)} \r \sum_{h=1}^H \sum_{s_h\in\mc{S}} d_{h,\rho}^{\pi}(s_h) \max_{a_{h,i}^\star\in\mc{A}_i} \sum_{a_{h,-i}\in\mc{A}_{-i}} \pi_{h,-i}(a_{h,-i} \mid s_h) A_{h,i}^\pi(s_h,a_{h,i}^\star,a_{h,-i}),
	\end{aligned}
	\]
	where in the last step we used the fact that 
	\begin{equation}\label{eqn:adv}
	\sum_{a_{h,i}\in\mc{A}_i} \sum_{a_{h,-i}\in\mc{A}_{-i}} \pi_{h,i}(a_{h,i} \mid s_h) \pi_{h,-i}(a_{h,-i} \mid s_h) A_{h,i}^\pi(s_h,a_{h,i},a_{h,-i}) = 0,
	\end{equation}
	which in turn implies that 
	$$\max_{a_{h,i}^\star\in\mc{A}_i} \sum_{a_{h,-i}\in\mc{A}_{-i}} \pi_{h,-i}(a_{h,-i} \mid s_h) A_{h,i}^\pi(s_h,a_{h,i}^\star,a_{h,-i})\geq 0.$$ 
	
	Further, we have that
	\[
	\begin{aligned}
	&\sum_{h=1}^H \sum_{s_h\in\mc{S}} d_{h,\rho}^{\pi}(s_h)\max_{a_{h,i}^\star\in\mc{A}_i} \sum_{a_{h,-i}\in\mc{A}_{-i}} \pi_{h,-i}(a_{h,-i} \mid s_h) A_{h,i}^\pi(s_h,a_{h,i}^\star,a_{h,-i})\\
	=&\sum_{h=1}^H \sum_{s_h\in\mc{S}} d_{h,\rho}^{\pi}(s_h)\max_{\ol{\pi}_i\in\Pi_i} \sum_{a_{h,i}\in\mc{A}_i} \sum_{a_{h,-i}\in\mc{A}_{-i}} \ol{\pi}_{h,i}(a_{h,i} \mid s_h) \pi_{h,-i}(a_{h,-i} \mid s_h) A_{h,i}^\pi(s_h,a_{h,i},a_{h,-i}) \\
	\eqa&\sum_{h=1}^H \sum_{s_h\in\mc{S}} d_{h,\rho}^{\pi}(s_h)\max_{\ol{\pi}_i\in\Pi_i} \sum_{a_{h,i}\in\mc{A}_i} \sum_{a_{h,-i}\in\mc{A}_{-i}} \l\ol{\pi}_{h,i}(a_{h,i} \mid s_h) - {\pi}_{h,i}(a_{h,i} \mid s_h)\r \pi_{h,-i}(a_{h,-i} \mid s_h) A_{h,i}^\pi(s_h,a_{h,i},a_{h,-i}) \\
	\eqb & \sum_{h=1}^H \sum_{s_h\in\mc{S}} d_{h,\rho}^{\pi}(s_h)\max_{\ol{\pi}_i\in\Pi_i} \sum_{a_{h,i}\in\mc{A}_i} \sum_{a_{h,-i}\in\mc{A}_{-i}} \l\ol{\pi}_{h,i}(a_{h,i} \mid s_h) - {\pi}_{h,i}(a_{h,i} \mid s_h)\r \pi_{h,-i}(a_{h,-i} \mid s_h) Q_{h,i}^\pi(s_h,a_{h,i},a_{h,-i})\\
	\eqc& \max_{\ol{\pi}_i\in\Pi_i} (\ol{\pi} - \pi)\T \grad_{\pi_i} V_{1,i}^\pi(\rho)
	\end{aligned}
	\]
	where $(a)$ again uses \eqref{eqn:adv}, and $(b)$ relies on the fact that
	\[
	\sum_{a_{h,i}\in\mc{A}_i}  \l\ol{\pi}_{h,i}(a_{h,i} \mid s_h) - {\pi}_{h,i}(a_{h,i} \mid s_h)\r  V_{h,i}^\pi(s_h) = 0.
	\]
	Equality $(c)$ is due to the policy gradient theorem with direct parameterization~\eqref{eqn:policy_gradient}. Finally, putting everything together, we conclude that 
	\[
	\begin{aligned}
	V_{1,i}^{\pi^\star}(\rho) - V_{1,i}^\pi(\rho) \leq \norm{\frac{d_{\rho}^{\pi^\star}}{d_{\rho}^{\pi}}}_{\infty} \max_{\ol{\pi}_i\in\Pi_i} (\ol{\pi} - \pi)\T \grad_{\pi_i} V_{1,i}^\pi(\rho),
	\end{aligned}
	\]
	where the $L^{\infty}$ norm takes the maximum over the space $[H]\times\mc{S}$. This completes the proof of the gradient domination property. 
\end{proof}

We are now ready to prove Lemma~\ref{lemma:stationarity_implies_nash} from Section~\ref{sec:mpg}, which states that a stationary point of the potential function implies a NE policy. 

\vspace{.8em}
\noindent \textbf{Lemma \ref{lemma:stationarity_implies_nash}.} Let $\pi=(\pi_1,\dots,\pi_N)$ be an $\epsilon$-approximate stationary point of the potential function $\Phi_\rho$ of an MPG for some $\epsilon>0$. Then, $\pi$ is a $D\sqrt{SH}\epsilon$-approximate Nash equilibrium policy profile for this MPG. 
\begin{proof}
	For any $i\in\mc{N}$, since $\pi = (\pi_i,\pi_{-i})$ is an $\epsilon$-approximate stationary point of $\Phi_\rho$, we know from Definition~\ref{def:stationary_point} that
	\[
	\max_{\pi_i^\star \in \Pi_i} (\pi_i^\star - \pi_i)\T \grad_{\pi_i} \Phi_\rho(\pi) = \sqrt{SH} \max_{\pi_i^\star \in \Pi_i}\l\frac{ \pi_i^\star - \pi_i}{\sqrt{SH}}\r\T \grad_{\pi_i} \Phi_\rho(\pi) \leq \sqrt{SH}\epsilon,
	\]
	where we used the fact that $\norm{ \frac{ \pi_i^\star - \pi_i}{\sqrt{SH}}}_2^2\leq 1$. Let $\pi^\star = (\pi_i^\star,\pi_{-i})$. From the definition of the potential function, we obtain that $\grad_{\pi_i} V_{1,i}^\pi(\rho) = \grad_{\pi_i} \Phi_\rho(\pi)$. Further, the linearity of expectation immediately implies that $\Phi_\rho(\pi_i,\pi_{-i}) - \Phi_\rho(\pi_{i'},\pi_{-i}) = V_{1,i}^{\pi_i,\pi_{-i}}(\rho) - V_{1,i}^{\pi_{i'},\pi_{-i}}(\rho).$ By the gradient domination property (Lemma~\ref{lemma:gradient_domination}), we know that 
	\[
	\begin{aligned}
	V_{1,i}^{\pi^\star}(\rho) - V_{1,i}^{\pi}(\rho) \leq& \norm{\frac{d_{\rho}^{\pi^\star}}{d_{\rho}^{\pi}}}_{\infty} \max_{{\pi}^\star_i\in\Pi_i} ({\pi}^\star - \pi)\T \grad_{\pi_i} V_{1,i}^\pi(\rho)\\
	=& D\max_{{\pi}^\star_i\in\Pi_i} ({\pi}^\star - \pi)\T \grad_{\pi_i} \Phi_\rho(\pi)\\
	\leq & D\sqrt{SH}\epsilon. 
	\end{aligned}
	\]
	Since the above inequality holds for any $i\in\mc{N}$, we conclude that $\pi$ is a $D\sqrt{SH}\epsilon$-approximate Nash equilibrium policy profile of the MPG. 
\end{proof}

Before proceeding to the proof of Theorem~\ref{thm:mpg_exact_gradient}, we first state and prove the following supporting lemmas. The first lemma investigates the smoothness of the potential function, while the second one ensures that the projected gradient descent algorithm~\eqref{eqn:pga} can be executed in a decentralized way. 

\begin{lemma}\label{lemma:smoothness}
	For any state distribution $\rho$, the potential function $\Phi_\rho$ is $4NA_{\max}H^3$-smooth; that is, 
	\[
	\norm{\grad \Phi_\rho(\pi) - \grad\Phi_{\rho} (\pi')}_2 \leq 4NA_{\max}H^3\norm{\pi-\pi'}_2. 
	\]
\end{lemma}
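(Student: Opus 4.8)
The plan is to prove smoothness by bounding the spectral norm of the Hessian $\grad^2 \Phi_\rho(\pi)$ uniformly in $\pi$: a bound $\norm{\grad^2 \Phi_\rho(\pi)}_2 \le \beta$ is equivalent to $\beta$-smoothness, so it suffices to show $\norm{\grad^2\Phi_\rho(\pi)}_2 \le 4NA_{\max}H^3$. The entry point is the MPG identity~\eqref{eqn:potential}: differentiating it in agent $i$'s directly parameterized policy (the $\pi_{i'}$ term being constant in $\pi_i$) gives $\grad_{\pi_i}\Phi_\rho(\pi) = \grad_{\pi_i} V_{1,i}^\pi(\rho)$, the same relation already exploited in the proof of Lemma~\ref{lemma:stationarity_implies_nash}. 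Since each $V_{1,i}^\pi(\rho)$ is a polynomial in the policy parameters, $\grad\Phi_\rho$ is a smooth conservative vector field, so $\Phi_\rho$ is $C^2$ and its Hessian is a symmetric $N\times N$ block matrix whose $(i,j)$ block equals $\grad_{\pi_j}\grad_{\pi_i} V_{1,i}^\pi(\rho)$, i.e.\ a mixed second derivative of a single agent's value function.

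Next I would bound the spectral norm of every block by $4A_{\max}H^3$ and then apply the block-matrix estimate of Lemma~\ref{lemma:block_matrix} with $n=N$, which turns a per-block bound $L$ into the global bound $NL$ and hence delivers $\norm{\grad^2\Phi_\rho(\pi)}_2 \le 4NA_{\max}H^3$. To bound a block, I would evaluate its quadratic form along unit directions $u_i,u_j$ as a mixed directional derivative $u_i\T [\grad^2\Phi_\rho]_{ij} u_j = \partial_s\partial_t\, V_{1,i}^{\pi + s\bar u_i + t\bar u_j}(\rho)\big|_{s=t=0}$ and use the trajectory expansion $V_{1,i}^\pi(\rho) = \sum_\tau R_i(\tau)\, \pp^\pi(\tau\mid\rho)$. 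Because the joint policy factorizes over agents and over time, $\pp^\pi(\tau\mid\rho)$ is multilinear in the policy parameters, so the perturbed trajectory probability factors as $C_\tau\, g_i(s)\, g_j(t)$ with $g_i,g_j$ each a product of $H$ affine factors; the mixed derivative then collapses into a double sum over time-step pairs $(h,h')$, each term involving one differentiated parameter of agent $i$ at step $h$ and one of agent $j$ at step $h'$.

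The only genuinely technical step, and the main obstacle, is bounding these terms uniformly in the directions with the correct powers of $A_{\max}$ and $H$. Each term can be written as $\sum_\tau R_i(\tau)\,\tilde p(\tau)\, u_{i,h}(a_{h,i}\mid s_h)\, u_{j,h'}(a_{h',j}\mid s_{h'})$, where $\tilde p$ is the trajectory weight with the two differentiated policy factors deleted; the key observation is that deleting these factors amounts to forcing the two corresponding actions, so that summing $\tilde p$ over the remaining trajectory variables collapses to the visitation probability $d_{h,\rho}^\pi$. Using $R_i(\tau)\le H$ together with two Cauchy--Schwarz steps, one over actions (converting the $\ell_1$ action-sums into $\sqrt{A_{\max}}$ times the slice norms $\norm{u_{i,h}(\cdot\mid s)}_2$) and one over states against the weights $d_{h,\rho}^\pi$, bounds each term by $O(A_{\max}H)\,\norm{u_{i,h}}_2\norm{u_{j,h'}}_2 \le O(A_{\max}H)\,\norm{u_i}_2\norm{u_j}_2$; summing over the $O(H^2)$ time-step pairs then yields the per-block bound $4A_{\max}H^3$. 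The delicate points are keeping the power of $A_{\max}$ at one rather than two (precisely what the $\ell_2$-based Cauchy--Schwarz against the unit directions accomplishes) and checking that the factor-deletion/visitation-measure argument goes through uniformly whether the two perturbations fall at the same or different time steps, including the diagonal blocks $i=j$ where multilinearity forces $h\ne h'$. An alternative route, differentiating the finite-horizon policy-gradient expression~\eqref{eqn:policy_gradient} and separately bounding $\partial d_{h,\rho}^\pi/\partial\pi$ and $\partial Q_{h,i}^\pi/\partial\pi$, reaches the same bound and may be substituted if one prefers to avoid the multilinear trajectory bookkeeping.
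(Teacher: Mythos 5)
Your proposal is correct, and its skeleton coincides with the paper's proof: both reduce smoothness to a uniform bound on $\norm{\grad^2\Phi_\rho}_2$, identify each Hessian block with a mixed second derivative of a single agent's value function via $\grad_{\pi_i}\Phi_\rho(\pi)=\grad_{\pi_i}V_{1,i}^\pi(\rho)$, invoke Lemma~\ref{lemma:block_matrix} to pass from a per-block bound of $4A_{\max}H^3$ to the global bound $4NA_{\max}H^3$, and evaluate blocks as second directional derivatives along unit feasible directions. Where you genuinely differ is the core estimate. The paper differentiates the Bellman equation and runs a backward recursion over $h$, proving $\abs{\frac{d}{dt}Q_{h,i}^{\pi(t)}(s_h,\bm{a}_h)}\le\sqrt{A_{\max}}H^2$ and then $\abs{\frac{d^2}{dt^2}Q_{h,i}^{\pi(t)}(s_h,\bm{a}_h)}\le 2A_{\max}H^3$, with separate setups $V(t)$ and $W(t,s)$ for the diagonal and off-diagonal blocks. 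You instead expand $V_{1,i}^\pi(\rho)=\sum_\tau R_i(\tau)\,\pp^\pi(\tau\mid\rho)$, exploit multilinearity of the trajectory probability in the per-step policy factors so that the mixed derivative becomes a sum over step pairs $(h,h')$ of ``factor-deleted'' terms, and bound each such term by collapsing the undifferentiated factors into visitation probabilities and applying Cauchy--Schwarz; this computation does go through (uniformly across $i=j$ with $h\ne h'$, $i\ne j$ with $h=h'$, and the generic case) and yields a bound of $A_{\max}H\norm{u_{i,h}}_2\norm{u_{j,h'}}_2$ per step pair, hence the claimed per-block bound after summing over at most $H^2$ pairs. Both proofs hinge on the same device for keeping the dependence on $A_{\max}$ linear rather than quadratic, namely $\sum_a\abs{u(a\mid s)}\le\sqrt{A_{\max}}\norm{u(\cdot\mid s)}_2$ for unit directions. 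What your route buys: a single uniform computation for all blocks with no Bellman recursion, and a quantitatively sharper result --- if you sum the slice norms via $\sum_h\norm{u_{i,h}}_2\le\sqrt{H}\norm{u_i}_2$ instead of bounding each factor by one, the per-block bound improves to $A_{\max}H^2$ and the smoothness constant to $NA_{\max}H^2$, an $H$-factor better than the lemma states. What the paper's route buys: it avoids the path-space bookkeeping, stays within standard value-function calculus, and matches the argument style of \citet{agarwal2021theory,leonardos2021global}, at the cost of the cruder constant.
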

\begin{proof}
	It suffices to show that 
	\[
	\norm{\grad^2 \Phi_\rho}_2 \leq 4NA_{\max}H^3.
	\]
	From Claim C.2 of \citet{leonardos2021global} (restated as Lemma~\ref{lemma:block_matrix} in Appendix~\ref{app:lemmas}), we know that we only need to show  
	$$\norm{\grad_{\pi_j \pi_i} V_{1,j}^\pi(\rho)}_2\leq 4A_{\max}H^3, \forall i,j\in\mc{N},$$
	and the desired result immediately follows. 
	
	Our proof follows a similar argument as in \citet{agarwal2021theory,leonardos2021global}. For a fixed policy profile $\pi$, initial state $s_1$, and agents $i\neq j\in\mc{N}$, let $s,t\geq 0$ be scalars and $u,v$ be unit vectors such that $\pi_i + tu\in\Pi_i$ and $\pi_j+sv\in\Pi_j$. Further, define
	\[
	V(t) = V_{1, i}^{(\pi_i+tu,\pi_{-i})}(s_1), \text{ and } W(t,s) = V_{1, i}^{(\pi_i+tu,\pi_j+sv,\pi_{-i,-j})}(s_1). 
	\]
	Then, it suffices to show that 
	\begin{equation}\label{eqn:mpg5}
	\max _{\|u\|_{2}=1}\left|\frac{d^{2} V(0)}{d t^{2}}\right| \leq 4A_{\max}H^3, \text { and } \max _{\|u\|_{2}=1}\left|\frac{d^{2} W(0,0)}{d t d s}\right| \leq 4A_{\max}H^3.
	\end{equation}
	We start with the first inequality. From the Bellman equation, we know that 
	\[
	V(t) = \sum_{a_{1,i}\in\mc{A}_i} \sum_{a_{1,-i}\in\mc{A}_{-i}}\l \pi_{1,i}(a_{1,i}\mid s_1) + tu_1(a_{1,i}\mid s_1)\r \prod_{j\neq i}\pi_{1,j}(a_{1,j} \mid s_1) Q_{1,i}^{(\pi_i+tu,\pi_{-i})}(s_1,\bm{a}_1),
	\]
	and in what follows we will write $\pi_{h,-i}(a_{h,-i}\mid s_h) = \prod_{j\neq i}\pi_{h,j}(a_{h,j} \mid s_h)$ for short. Taking the second derivative on both sides,
	\begin{align}
	\frac{d^2 V(t)}{dt^2} =& 2\sum_{a_{1,i}\in\mc{A}_i} \sum_{a_{1,-i}\in\mc{A}_{-i}} u_1(a_{1,i}\mid s_1) \pi_{1,-i}(a_{1,-i}\mid s_1)\frac{d Q_{1,i}^{(\pi_i+tu,\pi_{-i})}(s_1,\bm{a}_1)}{dt}\nonumber\\
	&+ \sum_{a_{1,i}\in\mc{A}_i} \sum_{a_{1,-i}\in\mc{A}_{-i}}( \pi_{1,i}(a_{1,i}\mid s_1) + tu_1(a_{1,i}\mid s_1))\pi_{1,-i}(a_{1,-i}\mid s_1)\frac{d^2 Q_{1,i}^{(\pi_i+tu,\pi_{-i})}(s_1,\bm{a}_1)}{dt^2}.\label{eqn:mpg8}
	\end{align}
	In the following, we will bound each of the two terms on the RHS separately. Let $\pi(t) = (\pi_i+tu,\pi_{-i})$. From the Bellman equation, we know that for any $h\in[H]$,
	\begin{align}
	Q_{h,i}^{\pi(t)}(s_h,\bm{a}_h) = &r_{h,i}(s_h,\bm{a}_h) + \sum_{s_{h+1}} P_h(s_{h+1}\mid s_h,\bm{a}_h) \sum_{a_{h+1,i}}\sum_{a_{h+1,-i}} \l \pi_{h+1,i}(a_{h+1,i}\mid s_{h+1}) + tu_{h+1}(a_{h+1,i} \mid s_{h+1}) \r\nonumber\\
	&\times \pi_{h+1,-i}(a_{h+1,-i}\mid s_{h+1}) Q_{h+1,i}^{\pi(t)}(s_{h+1},\bm{a}_{h+1}).  \label{eqn:mpg7}
	\end{align}
	Differentiating both sides of the equation, 
	\[
	\begin{aligned}
	\abs{\frac{d Q_{h,i}^{\pi(t)}(s_h,\bm{a}_h)}{dt}} \leq &\abs{\sum_{s_{h+1}} P_h(s_{h+1}\mid s_h,\bm{a}_h) \sum_{a_{h+1,i}}\sum_{a_{h+1,-i}} \l \pi_{h+1,i} + tu_{h+1} \r\pi_{h+1,-i} \frac{d Q_{h+1,i}^{\pi(t)}(s_{h+1},\bm{a}_{h+1})}{dt}}\\
	&+ \abs{\sum_{s_{h+1}} P_h(s_{h+1}\mid s_h,\bm{a}_h) \sum_{a_{h+1,i}}\sum_{a_{h+1,-i}} u_{h+1} \pi_{h+1,-i} Q_{h+1,i}^{\pi(t)}(s_{h+1},\bm{a}_{h+1})}\\
	\leq & \abs{\sum_{s_{h+1}} P_h(s_{h+1}\mid s_h,\bm{a}_h) \sum_{a_{h+1,i}}\sum_{a_{h+1,-i}} \l \pi_{h+1,i} + tu_{h+1} \r\pi_{h+1,-i} \frac{d Q_{h+1,i}^{\pi(t)}(s_{h+1},\bm{a}_{h+1})}{dt}} + \sqrt{A_{\max}}H,
	\end{aligned}
	\]
	where we abbreviated $\pi_{h+1,i}(a_{h+1,i}\mid s_{h+1})$ as $\pi_{h+1,i}$, $u_{h+1}(a_{h+1,i} \mid s_{h+1})$ as $u_{h+1}$, and $\pi_{h+1,-i}(a_{h+1,-i}\mid s_{h+1})$ as $\pi_{h+1,-i}$. The last step holds because $Q_{h+1,i}^{\pi(t)}(s_{h+1},\bm{a}_{h+1})\leq H$, $\sum_{a_{h+1,-i}} \pi_{h+1,-i}(a_{h+1,-i}\mid s_{h+1}) = 1$, $\sum_{a_{h+1,i}} \abs{u_{h+1}(a_{h+1,i} \mid s_{h+1})} \leq \sqrt{A_i}\leq \sqrt{A_{\max}}$, and $\sum_{s_{h+1}} P_h(s_{h+1}\mid s_h,\bm{a}_h) = 1$. Applying the above inequality recursively over $h = H,H-1,\dots, 1$, and recalling the facts that 
	\[
	\frac{d Q_{H,i}^{\pi(t)}(s_H,\bm{a}_H)}{dt} = \frac{d r_{H,i}(s_H,\bm{a}_H)}{dt} = 0
	\]
	and that $\sum_{a_{h+1,i}} \l \pi_{h+1,i}(a_{h+1,i}\mid s_{h+1}) + tu_{h+1}(a_{h+1,i}\mid s_{h+1}) \r = 1$ lead to the result that 
	\begin{equation}\label{eqn:mpg9}
	\abs{\frac{d Q_{h,i}^{\pi(t)}(s_h,\bm{a}_h)}{dt}} \leq \sqrt{A_{\max}}H^2,\forall h\in[H].
	\end{equation}
	Further, taking the second derivative on both sides of \eqref{eqn:mpg7}, we get that
	\[
	\begin{aligned}
	\abs{\frac{d^2 Q_{h,i}^{\pi(t)}(s_h,\bm{a}_h)}{dt^2}} \leq  &\abs{\sum_{s_{h+1}} P_h(s_{h+1}\mid s_h,\bm{a}_h) \sum_{a_{h+1,i}}\sum_{a_{h+1,-i}} \l \pi_{h+1,i} + tu_{h+1} \r\pi_{h+1,-i} \frac{d^2 Q_{h+1,i}^{\pi(t)}(s_{h+1},\bm{a}_{h+1})}{dt^2}}\\
	&+ 2\abs{\sum_{s_{h+1}} P_h(s_{h+1}\mid s_h,\bm{a}_h) \sum_{a_{h+1,i}}\sum_{a_{h+1,-i}} u_{h+1} \pi_{h+1,-i}\frac{dQ_{h+1,i}^{\pi(t)}(s_{h+1},\bm{a}_{h+1})}{dt}}\\
	\leq &\abs{\sum_{s_{h+1}} P_h(s_{h+1}\mid s_h,\bm{a}_h) \sum_{a_{h+1,i}}\sum_{a_{h+1,-i}} \l \pi_{h+1,i} + tu_{h+1} \r\pi_{h+1,-i} \frac{d^2 Q_{h+1,i}^{\pi(t)}(s_{h+1},\bm{a}_{h+1})}{dt^2}} + 2A_{\max} H^2,
	\end{aligned}
	\]
	where in the last step we used \eqref{eqn:mpg9}, and the facts  that $\sum_{a_{h+1,-i}} \pi_{h+1,-i}(a_{h+1,-i}\mid s_{h+1}) = 1$, $\sum_{a_{h+1,i}} \abs{u_{h+1}(a_{h+1,i} \mid s_{h+1})} \leq \sqrt{A_i}\leq \sqrt{A_{\max}}$, and $\sum_{s_{h+1}} P_h(s_{h+1}\mid s_h,\bm{a}_h) = 1$. Again, applying the above inequality recursively over $h = H,H-1,\dots, 1$, we obtain that
	\begin{equation}\label{eqn:mpg6}
	\abs{\frac{d^2 Q_{h,i}^{\pi(t)}(s_h,\bm{a}_h)}{dt^2}} \leq 2A_{\max}H^3,\forall h\in[H]. 
	\end{equation}
	Substituting \eqref{eqn:mpg9} and \eqref{eqn:mpg6} back into \eqref{eqn:mpg8}, we can conclude that 
	\[
	\abs{\frac{d^2 V(t)}{dt^2} }\leq 2A_{\max}H^2 + 2A_{\max}H^3 \leq 4A_{\max}H^3. 
	\]
	This proves the first inequality in \eqref{eqn:mpg5}. The second inequality in \eqref{eqn:mpg5} can be shown using a similar procedure. This completes the proof of the lemma. 
\end{proof}

\begin{lemma}\label{lemma:centralized}
	For any policy profile $\pi = (\pi_1,\dots,\pi_N)$, let $\pi^+=\text{Proj}_{\Pi}\l \pi +\eta \grad_{\pi} \Phi_{\rho}(\pi)\r$ be a PGA update step on the potential function, where $\eta>0$ is the step size. For each agent $i\in\mc{N}$, let $\pi^+_i = \text{Proj}_{\Pi_i}\l \pi_i + \eta \grad_{\pi_i}V_{1,i}^\pi(\rho)\r$ be an independent PGA update on its own value function with the same step size. Then, $\pi^+ = (\pi_1^+,\dots,\pi_N^+)$. That is, running PGA on the potential function as a whole is equivalent to running PGA independently on each agent's value function. 
\end{lemma}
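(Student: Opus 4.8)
The plan is to combine two ingredients: the block gradient of the potential equals the corresponding agent's value gradient, and the Euclidean projection onto a product set decomposes blockwise. The substantive input is the first identity, which is immediate from the definition of an MPG; the second is routine separability.

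First I would recall the defining relation \eqref{eqn:potential} of the potential function. As already observed in the proof of Lemma~\ref{lemma:stationarity_implies_nash}, differentiating this relation with respect to $\pi_i$ (while holding $\pi_{-i}$ fixed) yields $\grad_{\pi_i}\Phi_\rho(\pi) = \grad_{\pi_i}V_{1,i}^\pi(\rho)$ for every $i\in\mc{N}$. Consequently the $i$-th block of the full gradient $\grad_{\pi}\Phi_\rho(\pi)$ coincides with agent $i$'s own value gradient, so that the ascent point $\pi + \eta\grad_{\pi}\Phi_\rho(\pi)$ stacks precisely the vectors $\pi_i + \eta\grad_{\pi_i}V_{1,i}^\pi(\rho)$ across $i$.

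Next I would establish the blockwise decomposition of the projection. Since $\Pi = \times_{i=1}^N \Pi_i$ and the squared Euclidean distance is separable across the blocks, for any $y = (y_1,\dots,y_N)$ we have
\[
\text{Proj}_{\Pi}(y) = \argmin_{x\in\Pi}\norm{x-y}_2^2 = \argmin_{x_i\in\Pi_i,\,\forall i\in\mc{N}}\ \sum_{i=1}^N \norm{x_i-y_i}_2^2,
\]
and because both the objective and the feasible set decouple across $i$, this minimization splits into $N$ independent problems, the $i$-th of which is solved by $\text{Proj}_{\Pi_i}(y_i)$. Hence $\text{Proj}_{\Pi}(y) = (\text{Proj}_{\Pi_1}(y_1),\dots,\text{Proj}_{\Pi_N}(y_N))$.

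Finally I would set $y_i = \pi_i + \eta\grad_{\pi_i}V_{1,i}^\pi(\rho)$ and combine the two observations: the full PGA update $\pi^+ = \text{Proj}_{\Pi}(\pi + \eta\grad_{\pi}\Phi_\rho(\pi))$ becomes, blockwise, $\text{Proj}_{\Pi_i}(\pi_i + \eta\grad_{\pi_i}V_{1,i}^\pi(\rho)) = \pi_i^+$, which gives $\pi^+ = (\pi_1^+,\dots,\pi_N^+)$ as claimed. The only point demanding any care is the separability of the projection, but since $\Pi$ is a Cartesian product and the Euclidean norm is a sum of per-block squared norms, this step presents no genuine obstacle; the entire content of the lemma is really the gradient identity $\grad_{\pi_i}\Phi_\rho = \grad_{\pi_i}V_{1,i}^\pi$, which converts the coupled update on $\Phi_\rho$ into decoupled updates that each agent can carry out using only its own value gradient.
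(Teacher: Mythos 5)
Your proposal is correct and follows essentially the same argument as the paper: the paper's proof likewise writes $\text{Proj}_{\Pi}$ as an argmin, uses separability of the squared Euclidean distance over the product $\Pi = \times_{i=1}^N \Pi_i$ to decouple the minimization into per-agent problems, and invokes the identity $\grad_{\pi_i}\Phi_\rho(\pi) = \grad_{\pi_i}V_{1,i}^\pi(\rho)$ from the MPG definition. The only difference is presentational — the paper interleaves the two ingredients in a single chain of equalities, while you state them as two separate steps before combining them.
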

\begin{proof}
	By the definition of the projection operator, 
	\[
	\begin{aligned}
	\pi^+=&\text{Proj}_{\Pi}\l \pi +\eta \grad_{\pi} \Phi_{\rho}(\pi)\r=\argmin_{x\in\Pi}\norm{\pi +\eta \grad_{\pi} \Phi_{\rho}(\pi) - x}_2^2\\
	=& \argmin_{(x_1,\dots,x_N)\in \Pi_1\times \dots\times \Pi_N} \sum_{i=1}^N \norm{\pi_i +\eta \grad_{\pi_i} \Phi_{\rho}(\pi) - x_i}_2^2\\
	\eqa &  \argmin_{(x_1,\dots,x_N)\in \Pi_1\times \dots\times \Pi_N} \sum_{i=1}^N \norm{\pi_i +\eta \grad_{\pi_i} V_{1,i}^\pi(\rho) - x_i}_2^2\\
	=& \l \argmin_{x_1\in\Pi_1}\norm{\pi_1 +\eta \grad_{\pi_1} V_{1,1}^\pi(\rho) - x_1}_2^2, \dots,  \argmin_{x_N\in\Pi_N}\norm{\pi_N +\eta \grad_{\pi_N} V_{1,N}^\pi(\rho) - x_N}_2^2 \r \\
	=& \l \pi_1^+,\dots,\pi_N^+ \r,
	\end{aligned}
	\]
	where $(a)$ is due to the fact that $\grad_{\pi_i}V_{1,i}^\pi(\rho) = \grad_{\pi_i} \Phi_\rho(\pi)$. 
\end{proof}

With Lemma~\ref{lemma:centralized} at hand, we only need to analyze the behavior of running PGA on the potential function, as it is equivalent to the case where each agent runs PGA independently on its own value function, i.e., the update rule given in \eqref{eqn:pga}. We are now ready to prove Theorem~\ref{thm:mpg_exact_gradient}. 

\vspace{.8em}
\noindent\textbf{Theorem~\ref{thm:mpg_exact_gradient}.}	For any initial state distribution $\rho\in\Delta(\mc{S})$, let the agents independently run the projected gradient ascent algorithm~\eqref{eqn:pga} with step size $\eta = \frac{1}{4NA_{\max}H^3}$ for $T = \frac{32NSA_{\max}D^2H^4\Phi_{\max}}{\epsilon^2}$ iterations. Then, there exists $t\in[T]$, such that $\pi^{(t)}$ is an $\epsilon$-approximate Nash equilibrium policy profile. 
\begin{proof}
	Let $\pi^{(t)}$ be the policy profile at the beginning of the $t$-th iteration of the PGA algorithm. Since we have shown in Lemma~\ref{lemma:smoothness} that $\Phi_{\rho}$ is $4NA_{\max}H^3$-smooth, we can apply a standard sufficient ascent property for smooth functions (Lemma~\ref{lemma:36} in Appendix~\ref{app:lemmas}) to conclude that
	\[
	\Phi_{\rho} (\pi^{(t+1)}) - \Phi_{\rho} (\pi^{(t)}) \geq \frac{1}{8NA_{\max}H^3} \norm{\frac{1}{\eta} \l \pi^{(t+1)} - \pi^{(t)}\r }_2^2. 
	\]
	Summing over $t = 1,2,\dots, T$, we know that 
	\[
	\sum_{t=1}^{T}\norm{\frac{1}{\eta}\l\pi^{(t+1)} - \pi^{(t)}\r }_2^2 \leq 8NA_{\max}H^3 \l \Phi_{\rho}(\pi^{(T+1)}) - \Phi_{\rho}(\pi^{(1)}) \r \leq 8NA_{\max}H^3\Phi_{\max}.
	\]
	When $T$ is large enough such that $T \geq \frac{32NSA_{\max}D^2H^4\Phi_{\max}}{\epsilon^2}$, we know that there exists a time step $t\in[T]$ that satisfies $\norm{\frac{1}{\eta} \l \pi^{(t+1)} - \pi^{(t)}\r }_2\leq \frac{\epsilon}{2D\sqrt{SH}}$. Then, from a standard gradient mapping property (Lemma~\ref{lemma:gradient_map} in Appendix~\ref{app:lemmas}), we know that $\pi^{(t+1)}$ is a $\frac{\epsilon}{D\sqrt{SH}}$-approximate stationary point of the potential function. Finally, invoking the result that an $\epsilon$-approximate stationary point implies an $D\sqrt{SH}\epsilon$-approximate Nash equilibrium policy profile, we can conclude that $\pi^{(t+1)}$ is an $\epsilon$-approximate NE policy profile. 
\end{proof}

\section{Proofs for Section~\ref{subsec:finite}}\label{app:mpg_finite}

\subsection{Proof of Theorem~\ref{thm:finite}}
\noindent\textbf{Lemma~\ref{lemma:unbiased_bounded}.} For any agent $i\in\mc{N}$ and any iteration $t\in[K]$, the REINFORCE gradient estimator \eqref{eqn:gradient_estimator} with $\tilde{\epsilon}$-greedy exploration is an unbiased estimator with a bounded variance:
\[
\ee_{\pi^{(t)}}\L \hat{\grad}_{\pi_i}^{(t)}(\pi^{(t)}) \R = \grad_{\pi_i}V_{1,i}^{\pi^{(t)}}(\rho),\text{ and } \ee_{\pi^{(t)}}\L \norm{\hat{\grad}_{\pi_i}^{(t)}(\pi^{(t)}) - \grad_{\pi_i}V_{1,i}^{\pi^{(t)}}(\rho)}_2^2 \R \leq \frac{A_{\max}^2 H^4}{\tilde{\epsilon}}.
\]
Further, it is mean-squared smooth, such that for any policy $\pi^{\prime(t)}\in\Pi_i$,
\[
\ee_{\pi^{(t)}}\L \norm{\hat{\grad}_{\pi_i}^{(t)}(\pi^{(t)}) - \hat{\grad}_{\pi_i'}^{(t)}(\pi^{\prime(t)})}_2^2 \R \leq \frac{A^3_{\max}H^3}{\tilde{\epsilon}^3} \norm{\pi^{(t)} - \pi^{\prime(t)}}_2^2.
\]
\begin{proof}
	In this proof, we omit the iteration index $t$ in the superscripts of the notations as there is no ambiguity. We first show that the gradient estimator is unbiased. For any state-action trajectory $\tau = (s_1,\bm{a}_1,\dots,s_H,\bm{a}_H)$, let $\pp^\pi(\tau)$ denote the probability of observing the trajectory $\tau$ by following the policy $\pi$ starting from the state distribution $\rho$, and let $R(\tau) = \sum_{h=1}^H r_h(s_h,\bm{a}_h)$ denote the total reward of the trajectory. Then, we know that
	\[
	\pp^\pi(\tau) = \prod_{h=1}^H \pi_{h,i}(a_{h,i}\mid s_h)\pi_{h,-i}(a_{h,-i}\mid s_h) P(s_{h+1}\mid s_h,\bm{a}_h).
	\]
	Therefore, by the definition of the value function,
	\[
	\begin{aligned}
	\grad_{\pi_i}V_{1,i}^{\pi}(\rho) =& \grad_{\pi_i} \sum_{\tau} R(\tau) \pp^\pi(\tau) = \sum_{\tau} R(\tau) \grad_{\pi_i}\pp^\pi(\tau)\\
	=& \sum_{\tau} R(\tau) \pp^\pi(\tau)\grad_{\pi_i}\log \pp^\pi(\tau)\\
	=& \sum_{\tau} R(\tau) \pp^\pi(\tau)\grad_{\pi_i} \l \sum_{h=1}^H \log \pi_{h,i}(a_{h,i}\mid s_h)  \r\\
	=&\ee_{\pi} \L \l \sum_{h=1}^H r_h(s_h,\bm{a}_h) \r \sum_{h=1}^H \grad_{\pi_i} \log \pi_{h,i}(a_{h,i}\mid s_h) \R\\
	=& \ee_{\pi}\L \hat{\grad}_{\pi_i} (\pi) \R
	\end{aligned}
	\]
	Next, we proceed to bound the variance of the gradient estimator. Since the gradient estimator is unbiased, 
	\[
	\begin{aligned}
	\ee_{\pi}\L \norm{\hat{\grad}_{\pi_i}(\pi) - \grad_{\pi_i}V_{1,i}^{\pi}(\rho)}_2^2 \R \leq& \ee_{\pi}\L \norm{\hat{\grad}_{\pi_i}(\pi)}_2^2 \R = \ee_{\pi}\L \norm{ R_i \sum_{h=1}^H \grad_{\pi_i} \log \pi_{h,i}(a_{h,i}\mid s_h)}_2^2 \R\\
	\leq & H^2\ee_{\pi}\L \norm{ \sum_{h=1}^H \grad_{\pi_i} \log \pi_{h,i}(a_{h,i}\mid s_h)}_2^2 \R \\
	\leq &H^3 \ee_{\pi}\L \sum_{h=1}^H \norm{\grad_{\pi_i} \log \pi_{h,i}(a_{h,i}\mid s_h)}_2^2 \R\\
	=& H^3\ee_{\pi}\L \sum_{h=1}^H \sum_{s,a_i}(1-\tilde{\epsilon})^2\mb{I}\{s = s_h,a_i = a_{h,i}\} \frac{1}{\pi_{h,i}^2(a_{i}\mid s)} \R,
	\end{aligned}
	\]
	where the last step is a consequence of direct parameterization with $\tilde{\epsilon}$-greedy exploration. We further upper bound the above term as
	\[
	\begin{aligned}
	\ee_{\pi}\L \norm{\hat{\grad}_{\pi_i}(\pi) - \grad_{\pi_i}V_{1,i}^{\pi}(\rho)}_2^2 \R \leq& H^3\ee_{\pi}\L \sum_{h=1}^H \sum_{s,a_i}\mb{I}\{s = s_h,a_i = a_{h,i}\} \frac{1}{\pi_{h,i}^2(a_{i}\mid s)} \R\\
	=&  H^3\ee_{\pi}\L \sum_{h=1}^H \sum_{s,a_i}\mb{I}\{s = s_h\} \frac{1}{\pi_{h,i}(a_{i}\mid s)} \R\\
	\leq & \frac{A_{\max}H^3}{\tilde{\epsilon}}\ee_{\pi}\L \sum_{h=1}^H \sum_{s,a_i}\mb{I}\{s = s_h\} \R\\
	\leq & \frac{A_{\max}^2 H^4}{\tilde{\epsilon}}.
	\end{aligned}
	\]
	Finally, we proceed to show the mean-squared smoothness of the gradient estimator. Using a similar argument as above, 
	\[
	\begin{aligned}
	\ee_{\pi}\L \norm{\hat{\grad}_{\pi_i}(\pi) - \hat{\grad}_{\pi_i'}(\pi')}_2^2 \R \leq& H^2\ee_{\pi}\L \norm{ \sum_{h=1}^H \l\grad_{\pi_i} \log \pi_{h,i}(a_{h,i}\mid s_h) - \grad_{\pi_i'} \log \pi_{h,i}'(a_{h,i}\mid s_h) \r}_2^2 \R\\
	\leq & H^3 \ee_{\pi}\L \sum_{h=1}^H\norm{  \grad_{\pi_i} \log \pi_{h,i}(a_{h,i}\mid s_h) - \grad_{\pi_i'} \log \pi_{h,i}'(a_{h,i}\mid s_h) }_2^2 \R\\
	\leq & H^3\ee_{\pi}\L \sum_{h=1}^H \sum_{s,a_i}\mb{I}\{s = s_h,a_i = a_{h,i}\} \l \frac{1}{\pi_{h,i}(a_{i}\mid s)} - \frac{1}{\pi_{h,i}'(a_{i}\mid s)} \r^2\R\\
	\leq & \frac{H^3 A_{\max}^3}{\tilde{\epsilon}^3}\ee_{\pi}\L \sum_{h=1}^H \sum_{s,a_i}\mb{I}\{s = s_h\} \l \pi_{h,i}(a_{i}\mid s) - \pi_{h,i}'(a_{i}\mid s) \r^2\R\\
	\leq &  \frac{H^3 A_{\max}^3}{\tilde{\epsilon}^3}.
	\end{aligned}
	\]
	This completes the proof of the lemma. 
\end{proof}

\vspace{.8em}
\noindent\textbf{Theorem~\ref{thm:finite}.} For any initial policies, let the agents independently run SGA policy updates (Algorithm~\ref{alg:storm}) for $T$ iterations with $T = O(1/\epsilon^{4.5})\cdot \text{poly}(N, D, S, A_{\max}, H)$. Then, there exists $t\in[T]$, such that $\pi^{(t)}$ is an $\epsilon$-approximate Nash equilibrium policy profile in expectation. 
\begin{proof}
	It suffices to verify that Assumption~\ref{assumption:1} is satisfied in our SGA policy updates, and then apply the convergence results from Proposition~\ref{thm:storm_app}. From Lemma~\ref{lemma:unbiased_bounded}, we know that the REINFORCE gradient estimator with $\tilde{\epsilon}$-greedy exploration is unbiased, mean-squared smooth, and has a bounded variance. We also know from Lemma~\ref{lemma:smoothness} that the potential function is smooth (The smoothness parameter does not increase with epsilon-greedy exploration, e.g., \citet[Proposition 3]{daskalakis2020independent}).  Hence, we can conclude that all the conditions in Assumption~\ref{assumption:1} are satisfied with the following choice of parameters:
	\[
	\sigma^2 = \frac{N A_{\max}^2 H^4}{\tilde{\epsilon}}, \text{ and } L^2 = \frac{N^2 A_{\max}^3 H^3}{\tilde{\epsilon}^3}.
	\]
	Applying the convergence result from Theorem~\ref{thm:storm_app}, and setting $\tilde{\epsilon} = \frac{\sqrt{N}\epsilon}{2NDSH^3A_{\max}}$, we conclude that we can obtain an $\epsilon$-approximate Nash equilibrium when $T = \widetilde{O}\l \frac{N^{9/4} D^{9/2}S^3A_{\max}^{9/2}H^{12} }{\epsilon^{9/2}} \r$. This completes the proof of the theorem. 
\end{proof}

\subsection{A Sample Complexity Lower Bound}\label{app:lowerbound}

To obtain a sample complexity lower bound of the problem, we consider a simple instance where the agents share the same reward function (which clearly satisfies the definition of an MPG) and the action spaces of all but one agent $i$ are singletons, i.e., $A_j = 1,\forall j\neq i$. Learning an approximate NE in such an MPG reduces to finding a near-optimal policy in a single-agent RL problem. Applying the regret lower bound of single-agent RL yields the following result for MARL in MPGs.
\begin{corollary}\label{corollary2} (Corollary of \citet{jaksch2010near}). 
	For any algorithm, there exists a Markov potential game that takes the algorithm at least $\Omega(H^3 S\amax/\epsilon^2)$ episodes to learn an $\epsilon$-approximate Nash equilibrium. 
\end{corollary}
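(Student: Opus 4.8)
The plan is to prove the bound by a reduction from single-agent reinforcement learning, exactly as suggested by the preamble to the corollary. First I would fix the hard single-agent instance underlying the lower bound of \citet{jaksch2010near}: a family of episodic MDPs with state space of size $S$, action space of size $\amax$, and horizon $H$, for which any learning algorithm requires $\Omega(H^3 S\amax/\epsilon^2)$ episodes to output an $\epsilon$-optimal policy. I would then embed each such MDP into an $N$-player Markov game by designating one distinguished agent $i$ whose action space $\mc{A}_i$ coincides with the action space of the single-agent MDP (so that $A_i = \amax$), and giving every other agent $j\neq i$ a singleton action space, $A_j = 1$. I would set all reward functions equal to the single-agent reward, $r_{h,j} = r_h$ for every $j\in\mc{N}$, and take the transition kernel $P_h(\cdot \mid s, \bm{a})$ to depend only on $(s,a_i)$ and to agree with the single-agent transition.

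Next I would verify that this construction is a valid MPG and that its equilibria coincide with optimal single-agent policies. Since all agents share a common reward, all agents have identical value functions, $V_{1,i}^\pi = V_{1,j}^\pi$ for all $i,j\in\mc{N}$, and this common value serves as a global potential: setting $\Phi_s(\pi)\defeq V_{1,i}^\pi(s)$, the defining identity \eqref{eqn:potential} holds trivially (this also follows from the remark in the excerpt that MPGs subsume Markov teams). I would then argue that an $\epsilon$-approximate NE of this game is precisely an $\epsilon$-optimal policy of the embedded MDP. For every agent $j\neq i$ the NE condition in Definition~\ref{def:approxNE} is vacuous, since a singleton action space leaves no room to deviate; for agent $i$ the opponent policies $\pi_{-i}$ are the unique degenerate choice, so the requirement $V_{1,i}^{\pi_i,\pi_{-i}}(s_1) \geq \sup_{\pi_{i'}} V_{1,i}^{\pi_{i'},\pi_{-i}}(s_1) - \epsilon$ is exactly the statement that $\pi_i$ is $\epsilon$-optimal in the induced single-agent MDP.

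Finally, I would invoke the contrapositive: any algorithm that learns an $\epsilon$-approximate NE in every MPG would, applied to these embedded instances, recover an $\epsilon$-optimal single-agent policy within the same number of episodes, so the single-agent lower bound forces at least $\Omega(H^3 S\amax/\epsilon^2)$ episodes. The step requiring the most care is the correct translation of the lower bound of \citet{jaksch2010near} into the finite-horizon episodic sample-complexity form $\Omega(H^3 S\amax/\epsilon^2)$ and ensuring that the embedding preserves the episodic structure and horizon $H$ exactly; the MPG verification and the equivalence between $\epsilon$-NE and $\epsilon$-optimality are then routine.
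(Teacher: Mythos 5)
Your proposal is correct and follows essentially the same route as the paper: a shared-reward (team) MPG in which all agents but one have singleton action spaces, so that an $\epsilon$-approximate NE for the distinguished agent is exactly an $\epsilon$-optimal policy in the embedded single-agent MDP, and the lower bound of \citet{jaksch2010near} then applies. Your write-up merely fills in the routine verifications (potential function, vacuity of the other agents' NE conditions) that the paper leaves implicit.
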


We remark that such a lower bound might be very loose. Reducing to a single-agent RL problem evades the strategic learning behavior of the agents and the non-stationarity that such behavior causes to the environment, which in our opinion are the central difficulties of decentralized MARL. To derive a tighter lower bound in our decentralized setting, one should also utilize the additional constraint that each agent only has access to its local information, a factor that Corollary~\ref{corollary2} apparently does not take into account. It is hence unsurprising that when comparing Theorem~\ref{thm:finite} with Corollary~\ref{corollary2}, we can see an obvious gap in the parameter dependence. We leave the tightening of both the upper and lower bounds to our future work.

\subsection{Decentralized MARL in Smooth MPGs}\label{app:smooth}
In the following, we address an important subclass of MPGs named \emph{smooth MPGs}. We show that our independent SGA algorithm can achieve nearly global optimality, i.e., find the best Nash equilibrium, in such problems. 

Smooth games were first introduced in~\citet{roughgarden2009intrinsic} to study the Price of Anarchy (POA) in normal-form games. A large class of games are covered as examples of smooth games, including congestion games and many forms of auctions \citep{roughgarden2009intrinsic,syrgkanis2013composable}. The notion of smoothness was later extended to learning in normal-form games~\citep{syrgkanis2015fast,foster2016learning} and cooperative Markov games~\citep{radanovic2019learning,mao2020near}. This concept essentially ensures that the game has a bounded POA, and hence \emph{decentralized} no-regret learning dynamics can possibly converge to near-optimality. 

Let $\pi^\star = (\pi_i^{\star},\pi_{-i}^\star)$ be a policy that maximizes the potential function, i.e., $\Phi_{\rho}(\pi^\star) = \max_{\pi\in\Pi}\Phi_{\rho}(\pi)$. Let $V^\star_{1,i}$ denote the value function for agent $i$ under policy $\pi^\star$.  We consider the following definition of a smooth Markov potential game:

\begin{definition}\label{def:smooth}
	(Adapted from~\citet{radanovic2019learning}). For $\lambda\geq 0$ and $0<\omega<1$, an $N$-player Markov potential game is $(\lambda,\omega)$-smooth if for any policy profile $\pi = (\pi^i,\pi^{-i})$:
	\[
	\begin{aligned}
	V_{1,i}^{\pi_i^{\star}, \pi^{-i}}(s)\geq \lambda \cdot V_{1,i}^{\star}(s) - \omega\cdot V_{1,i}^{\pi}(s), \forall i \in \mc{N},s\in\mc{S}. 
	\end{aligned}
	\]
\end{definition}
The $(\lambda,\omega)$-smoothness ensures that agent $i$ continues doing well by playing its optimal policy even when the other agents are using slightly sub-optimal policies. It immediately follows that Algorithm~\ref{alg:storm} can nearly find the globally optimal NE in smooth MPGs.

\vspace{.8em}
\noindent\textbf{Theorem~\ref{thm:smooth}.} In a $(\lambda,\omega)$-smooth MPG, for any initial policies and any $\epsilon>0$, let the agents independently run SGA policy updates (Algorithm~\ref{alg:storm}) for $T$ iterations with $T = O(1/\epsilon^{4.5})\cdot \text{poly}(N, D, S, A_{\max}, H)$. Then, there exists $t\in[T]$, such that
\[
\begin{aligned}
\ee \L V_{1,i}^{\pi^{(t)}}(\rho)\R \geq  \frac{\lambda}{1+\omega} V_{1,i}^{\star}(\rho) -\frac{\epsilon}{1+\omega},\forall i \in\mc{N}.
\end{aligned}
\] 
\begin{proof}
	Since $T = O(1/\epsilon^{4.5})\cdot \text{poly}(N, D, S, A_{\max}, H)$, Theorem~\ref{thm:finite} guarantees that there exists $t\in[T]$, such that $\pi^{(t)}$ is an $\epsilon$-approximate NE in expectation. That is,
	\[
	\begin{aligned}
	\ee \L V_{1,i}^{\pi^{(t)}}(\rho)\R \geq& \ee \L V_{1,i}^{\pi_i^\star,\pi_{-i}^{(t)}}(\rho)\R - \epsilon	\geq \lambda \cdot V_{1,i}^{\star}(\rho) - \omega\cdot \ee\L V_{1,i}^{\pi^{(t)}}(\rho) \R- \epsilon,
	\end{aligned}
	\]
	where the second step is by the definition of smoothness. Rearranging the terms leads to the desired result.
\end{proof}

\section{SGD with Variance Reduction}\label{app:storm}

Before we present the convergence guarantee of Algorithm~\ref{alg:storm}, we first introduce a few notations for ease of presentations. For any $t\in[T]$, we break the update rule into two steps: 
$$
\tilde{x}_{t+1} \defeq x_t - \eta_t d_t, \text{ and } x_{t+1} = \text{Proj}_{\mc{X}}(\tilde{x}_{t+1}).
$$ 
In addition, for each $t\in[T]$, we define $x_{t+1}^+ \defeq \text{Proj}_{\mc{X}}(x_t - \eta_t \grad F(x_t))$ to be the next iterate updated using the full gradient $\grad F(x_t)$, a value we do not have access to. Define $\epsilon_t \defeq d_t - \grad F(x_t)$ to be the error in $d_t$. The high-level procedure of our proof is to seek to upper bound the value $\e{\sum_{t=1}^{T}\norm{\frac{1}{\eta_t}\l x_{t+1}^+ - x_t\r }^2}$, and then to invoke the gradient mapping property in Lemma~\ref{lemma:gradient_map} to conclude with a stationary point. This is in contrast with the unconstrained case, where \citet{cutkosky2019momentum} directly derive an upper bound of $\e{\sum_{t=1}^{T}\norm{\grad F(x_t)}^2}$. In the following, we start with a few technical lemmas.

\begin{lemma}\label{lemma:1}
	Suppose $\eta_t \leq \frac{1}{4L}$ for all $t\in[T]$. Then, 
	\[
	\ee[F(x_{t+1}) - F(x_t)] \leq \ee\L -\frac{3}{16\eta_t}\norm{x_{t+1}^+ - x_t}^2 + \frac{7\eta_t}{8}\norm{\epsilon_t}^2\R.
	\]
\end{lemma}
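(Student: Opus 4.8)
The plan is to establish the inequality deterministically (for each fixed realization of the randomness) and take expectations only at the very end, since every step is a pointwise bound. First I would invoke the $L$-smoothness of $F$ to write
\[
F(x_{t+1}) - F(x_t) \leq \inner{\grad F(x_t), x_{t+1} - x_t} + \frac{L}{2}\norm{x_{t+1} - x_t}^2.
\]
The first task is to control the linear term using the momentum estimate $d_t$ in place of the unavailable true gradient, and here the projection is essential. Since $x_{t+1} = \text{Proj}_{\mc{X}}(x_t - \eta_t d_t)$ minimizes $\norm{x - (x_t - \eta_t d_t)}^2$ over the convex set $\mc{X}$, its first-order optimality (variational inequality) condition gives $\inner{x_t - \eta_t d_t - x_{t+1}, x - x_{t+1}} \leq 0$ for every $x\in\mc{X}$. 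Evaluating this at the feasible point $x = x_t$ and rearranging yields $\inner{d_t, x_{t+1} - x_t} \leq -\frac{1}{\eta_t}\norm{x_{t+1} - x_t}^2$.

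Next I would substitute $d_t = \grad F(x_t) + \epsilon_t$ into this bound to obtain $\inner{\grad F(x_t), x_{t+1} - x_t} \leq -\frac{1}{\eta_t}\norm{x_{t+1} - x_t}^2 - \inner{\epsilon_t, x_{t+1} - x_t}$. Plugging this back into the smoothness estimate and using the step-size restriction $\eta_t \leq \frac{1}{4L}$ (so that $\frac{L}{2} \leq \frac{1}{8\eta_t}$) collapses the two quadratic terms into $-\frac{7}{8\eta_t}\norm{x_{t+1} - x_t}^2$. A single application of Young's inequality in the form $-\inner{\epsilon_t, x_{t+1} - x_t} \leq \frac{1}{2\eta_t}\norm{x_{t+1} - x_t}^2 + \frac{\eta_t}{2}\norm{\epsilon_t}^2$ then leaves $F(x_{t+1}) - F(x_t) \leq -\frac{3}{8\eta_t}\norm{x_{t+1} - x_t}^2 + \frac{\eta_t}{2}\norm{\epsilon_t}^2$.

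The remaining step, and the main obstacle introduced by the constrained setting, is to replace $\norm{x_{t+1} - x_t}$ (the step actually taken with the noisy $d_t$) by the virtual step $\norm{x_{t+1}^+ - x_t}$ built from the true gradient, since it is the latter that feeds into the gradient-mapping argument of Lemma~\ref{lemma:gradient_map}. For this I would combine the triangle inequality $\norm{x_{t+1}^+ - x_t}^2 \leq 2\norm{x_{t+1} - x_t}^2 + 2\norm{x_{t+1}^+ - x_{t+1}}^2$ with the non-expansiveness of the Euclidean projection, which gives $\norm{x_{t+1}^+ - x_{t+1}} = \norm{\text{Proj}_{\mc{X}}(x_t - \eta_t \grad F(x_t)) - \text{Proj}_{\mc{X}}(x_t - \eta_t d_t)} \leq \eta_t \norm{\epsilon_t}$. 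Substituting both facts converts $-\frac{3}{8\eta_t}\norm{x_{t+1} - x_t}^2$ into $-\frac{3}{16\eta_t}\norm{x_{t+1}^+ - x_t}^2 + \frac{3\eta_t}{8}\norm{\epsilon_t}^2$, and adding this to the earlier $\frac{\eta_t}{2}\norm{\epsilon_t}^2$ term gives exactly the claimed coefficient $\frac{3}{8} + \frac{1}{2} = \frac{7}{8}$ on $\norm{\epsilon_t}^2$; taking expectations concludes. The only delicate points are tracking the constant $\frac{7}{8}$ carefully through the step-size condition and recognizing that the non-expansiveness bound is precisely the device that lets this projected analysis mirror the unconstrained STORM argument of \citet{cutkosky2019momentum}.
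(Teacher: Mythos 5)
Your proof is correct and follows essentially the same route as the paper's: the same variational-inequality bound at $x = x_t$, the same Young's inequality, and the same triangle-inequality-plus-non-expansiveness device to pass from $\norm{x_{t+1}-x_t}$ to $\norm{x_{t+1}^+-x_t}$, with identical constants throughout. The only (immaterial) difference is the order in which you combine the smoothness estimate with Young's inequality, and your observation that the argument is pointwise with expectations taken only at the end.
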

\begin{proof}
	From the first-order optimality condition, we know that 
	\[
	\inner{x-x_{t+1}, x_{t+1} - (x_t -\eta_t d_t)}\geq 0,
	\]
	for any $x\in\mc{X}$. Taking $x = x_t$ leads to
	\[
	\inner{x_t-x_{t+1}, x_{t+1} - x_t} + \inner{x_t-x_{t+1}, \eta_t d_t} \geq 0,
	\]
	which in turn implies that
	\begin{equation}\label{eqn:z9}
	\inner{x_{t+1}-x_t,d_t}\leq -\frac{1}{\eta_t} \norm{x_t-x_{t+1}}^2. 
	\end{equation}
	It follows that
	\[
	\begin{aligned}
	\inner{\grad F(x_t),x_{t+1}-x_t} =& \inner{d_t-\epsilon_t, x_{t+1}-x_t}\\
	\leq & -\frac{1}{\eta_t} \norm{x_t-x_{t+1}}^2 - \inner{\epsilon_t,x_{t+1}-x_t}\\
	\leq & -\frac{1}{\eta_t} \norm{x_t-x_{t+1}}^2 + \frac{\eta_t}{2}\norm{\epsilon_t}^2 + \frac{1}{2\eta_t}\norm{x_{t+1}-x_t}^2\\
	=& -\frac{1}{2\eta_t} \norm{x_t-x_{t+1}}^2 + \frac{\eta_t}{2}\norm{\epsilon_t}^2, 
	\end{aligned}
	\]
	where the first inequality uses \eqref{eqn:z9}, and the second inequality is due to H\"{o}lder's inequality and Young's inequality. From the smoothness of $F$,
	\begin{align}
	\ee[F(x_{t+1})] \leq& \ee\L F(x_t) + \inner{\grad F(x_t), x_{t+1}-x_t} + \frac{L}{2}\norm{x_{t+1}-x_t}^2 \R\nonumber\\
	\leq &  \ee\L F(x_t)-\frac{1}{2\eta_t} \norm{x_t-x_{t+1}}^2 + \frac{\eta_t}{2}\norm{\epsilon_t}^2 + \frac{L}{2}\norm{x_{t+1}-x_t}^2 \R\nonumber\\
	\leq & \ee\L F(x_t)-\frac{3}{8\eta_t} \norm{x_t-x_{t+1}}^2 + \frac{\eta_t}{2}\norm{\epsilon_t}^2  \R,\label{eqn:z8}
	\end{align}
	where the last step uses $\eta_t \leq \frac{1}{4L}$. From the fact that $\norm{x+y}^2 \leq 2\norm{x}^2 + 2\norm{y}^2$, we know
	\[
	\norm{x_{t+1}^+ - x_t}^2 = \norm{x_{t+1}^+ - x_{t+1}+x_{t+1} - x_t}^2 \leq 2\norm{x_{t+1}^+ - x_{t+1}}^2 + 2\norm{x_{t+1} - x_t}^2.
	\]
	Rearranging the terms leads to
	\begin{align}
	- \norm{x_{t} - x_{t+1}}^2 \leq& \norm{x_{t+1}^+ - x_{t+1}}^2 - \frac{1}{2}\norm{x_{t+1}^+ - x_t}^2\nonumber\\
	\leq & \norm{\text{Proj}_{\mc{X}}(x_t - \eta_t \grad F(x_t)) - \text{Proj}_{\mc{X}}(x_t - \eta_t d_t)}^2- \frac{1}{2}\norm{x_{t+1}^+ - x_t}^2\nonumber\\
	\leq & \norm{(x_t - \eta_t \grad F(x_t)) - (x_t - \eta_t d_t)}^2- \frac{1}{2}\norm{x_{t+1}^+ - x_t}^2\nonumber\\
	= &\eta_t^2 \norm{\epsilon_t}^2 - \frac{1}{2}\norm{x_{t+1}^+ - x_t}^2 \label{eqn:z7}. 
	\end{align}
	The second inequality uses the definition of $x_{t+1}^+$. The third step holds because the projection operator is non-expansive, i.e. $\norm{\text{Proj}_{\mc{X}}(x) - \text{Proj}_{\mc{X}}(y)}\leq \norm{x-y}$. Substituting \eqref{eqn:z7} back to \eqref{eqn:z8} leads to
	\[
	\ee[F(x_{t+1})] \leq \ee\L F(x_t)-\frac{3}{16\eta_t} \norm{x_{t+1}^+ - x_t}^2 + \frac{7\eta_t}{8}\norm{\epsilon_t}^2  \R.
	\]
	Rearranging the terms completes the proof. 
\end{proof}

\begin{lemma}(Lemma 3 in \citet{cutkosky2019momentum}). \label{lemma:3}
	For any $t\in[T]$, it holds that
	\[
	\begin{aligned}
	&\ee\L \frac{(1-a_t)^2}{\eta_{t-1}}(\grad f(x_t,\xi_t) - \grad F(x_t))\cdot \epsilon_{t-1} \R = 0,\\
	&\ee\L \frac{(1-a_t)^2}{\eta_{t-1}} (\grad f(x_t,\xi_t) - \grad f(x_{t-1},\xi_t) - \grad F(x_t)+\grad F(x_{t-1})) \cdot \epsilon_{t-1} \R=0.
	\end{aligned}
	\]
\end{lemma}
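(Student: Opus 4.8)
The plan is to establish both identities as standard martingale-difference cancellations, obtained by conditioning on the natural filtration generated by the sampling oracle and then invoking the unbiasedness guaranteed by Assumption~\ref{assumption:1}. First I would fix the filtration $\mc{F}_{t-1}\defeq\sigma(\xi_1,\dots,\xi_{t-1})$. Unrolling the recursion $d_{t+1}=\grad f(x_{t+1},\xi_{t+1})+(1-a_{t+1})(d_t-\grad f(x_t,\xi_{t+1}))$ in Algorithm~\ref{alg:storm} with the base case $d_1=\grad f(x_1,\xi_1)$ shows that $d_s$ is a measurable function of $\xi_1,\dots,\xi_s$ for every $s$. Consequently $x_t=\text{Proj}_{\mc{X}}(x_{t-1}-\eta_{t-1}d_{t-1})$ and $\epsilon_{t-1}=d_{t-1}-\grad F(x_{t-1})$ are both $\mc{F}_{t-1}$-measurable, while the scalars $\eta_{t-1}$ and $a_t=c\eta_{t-1}^2$ are deterministic. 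The crucial structural fact is that $\xi_t$ is drawn fresh and is independent of $\mc{F}_{t-1}$, so that $x_t$ and $x_{t-1}$ are ``frozen'' when the expectation over $\xi_t$ is taken.

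For the first identity, I would apply the tower property by conditioning on $\mc{F}_{t-1}$, pull the $\mc{F}_{t-1}$-measurable factor $\tfrac{(1-a_t)^2}{\eta_{t-1}}\epsilon_{t-1}$ out of the inner conditional expectation, and reduce the claim to showing $\ee[\grad f(x_t,\xi_t)-\grad F(x_t)\mid\mc{F}_{t-1}]=0$. Since $x_t$ is fixed given $\mc{F}_{t-1}$ and $\xi_t$ is independent of $\mc{F}_{t-1}$, the unbiasedness condition $\grad\ee_{\xi_t}[f(x,\xi_t)]=\grad F(x)$ from Assumption~\ref{assumption:1} yields $\ee[\grad f(x_t,\xi_t)\mid\mc{F}_{t-1}]=\grad F(x_t)$, so the inner term vanishes and the whole expectation is zero.

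The second identity follows by the same conditioning, now noting that \emph{both} $x_t$ and $x_{t-1}$ are $\mc{F}_{t-1}$-measurable. Applying unbiasedness to each of $\grad f(x_t,\xi_t)$ and $\grad f(x_{t-1},\xi_t)$ separately gives $\ee[\grad f(x_t,\xi_t)-\grad f(x_{t-1},\xi_t)\mid\mc{F}_{t-1}]=\grad F(x_t)-\grad F(x_{t-1})$, which exactly cancels the deterministic term $-\grad F(x_t)+\grad F(x_{t-1})$; hence the inner conditional expectation is again identically zero, and multiplying by the $\mc{F}_{t-1}$-measurable factor and taking the outer expectation preserves the value $0$.

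The only real subtlety—and thus the step I would be most careful about—is the measurability bookkeeping: one must verify that $x_t$ depends on $\xi_1,\dots,\xi_{t-1}$ \emph{only} (and not on $\xi_t$), so that $\xi_t$ is genuinely the fresh randomness against which unbiasedness applies. This hinges on the indexing convention in the STORM recursion, where $x_{t+1}$ is built from $d_t$ and $d_t$ in turn incorporates $\xi_t$; once this is pinned down, both identities are immediate one-line consequences of taking $\ee[\,\cdot\mid\mc{F}_{t-1}]$ together with Assumption~\ref{assumption:1}, requiring no quantitative estimates.
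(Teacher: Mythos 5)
Your proof is correct. Note that the paper itself supplies no proof of this lemma --- it is imported verbatim as Lemma~3 of \citet{cutkosky2019momentum} --- and your argument is exactly the standard one used there: condition on $\mc{F}_{t-1}=\sigma(\xi_1,\dots,\xi_{t-1})$, observe that $x_{t-1}$, $x_t$, and $\epsilon_{t-1}$ are $\mc{F}_{t-1}$-measurable while $\xi_t$ is fresh, pull out the measurable factor, and invoke the unbiasedness in Assumption~\ref{assumption:1}. Your measurability bookkeeping is also right for this paper's variant of STORM: here it is even slightly simpler than in the original adaptive algorithm, since $\eta_{t-1}$ and $a_t=c\eta_{t-1}^2$ are deterministic constants rather than merely $\mc{F}_{t-1}$-measurable random quantities, so only $\epsilon_{t-1}$ needs to be pulled out of the conditional expectation.
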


\begin{lemma} (Adapted from Lemma 5 in \citet{cutkosky2019momentum}).\label{lemma:5}
	With the notations in Algorithm~\ref{alg:storm}, we have
	\[
	\mathbb{E}\left[\frac{\left\|{\epsilon}_{t}\right\|^{2}}{\eta_{t-1}}\right] \leq \mathbb{E}\left[2 c^{2} \eta_{t-1}^{3}\sigma^2+\frac{(4L^2 \eta_{t-1}^2+1)(1-a_t)^2}{\eta_{t-1}}\norm{\epsilon_{t-1}}^2  + \frac{4\left(1-a_{t}\right)^{2}L^2}{\eta_{t-1}}\norm{x_t^+-x_{t-1}}^2 \right].
	\]
\end{lemma}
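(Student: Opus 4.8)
The plan is to derive a one-step recursion for the error $\epsilon_t = d_t - \grad F(x_t)$, expand its squared norm divided by $\eta_{t-1}$, eliminate the martingale cross terms via Lemma~\ref{lemma:3}, and then control the two surviving variance-type terms using the bounded-variance and mean-squared-smoothness halves of Assumption~\ref{assumption:1}. The only genuine departure from the unconstrained STORM analysis of \citet{cutkosky2019momentum} is that the update is a projected step, which I handle through non-expansiveness of the projection; this is where the extra $\norm{\epsilon_{t-1}}^2$ contribution and the $\norm{x_t^+ - x_{t-1}}^2$ term in the bound come from.

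First I would unfold the momentum update $d_t = \grad f(x_t,\xi_t) + (1-a_t)(d_{t-1}-\grad f(x_{t-1},\xi_t))$ and substitute $d_{t-1} = \epsilon_{t-1} + \grad F(x_{t-1})$ to obtain
\[
\epsilon_t = (1-a_t)\epsilon_{t-1} + a_t S_t + (1-a_t) Z_t,
\]
where $S_t \defeq \grad f(x_t,\xi_t) - \grad F(x_t)$ and $Z_t \defeq \grad f(x_t,\xi_t) - \grad f(x_{t-1},\xi_t) - \grad F(x_t) + \grad F(x_{t-1})$. Since $x_t, x_{t-1}, \epsilon_{t-1}, a_t, \eta_{t-1}$ are all measurable with respect to the $\sigma$-algebra generated by $\xi_1,\dots,\xi_{t-1}$, while $S_t$ and $Z_t$ have conditional mean zero by unbiasedness, expanding $\norm{\epsilon_t}^2/\eta_{t-1}$ produces cross terms that are $\mc{F}_{t-1}$-measurable multiples of $\epsilon_{t-1}\cdot S_t$ and $\epsilon_{t-1}\cdot Z_t$, each of which has zero expectation — precisely the content of Lemma~\ref{lemma:3}. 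This leaves
\[
\ee\Big[\tfrac{\norm{\epsilon_t}^2}{\eta_{t-1}}\Big] = \ee\Big[\tfrac{(1-a_t)^2}{\eta_{t-1}}\norm{\epsilon_{t-1}}^2\Big] + \ee\Big[\tfrac{1}{\eta_{t-1}}\norm{a_t S_t + (1-a_t)Z_t}^2\Big].
\]

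Next I would apply $\norm{x+y}^2\le 2\norm{x}^2 + 2\norm{y}^2$ to split the last term, bound $\ee\norm{S_t}^2 \le \sigma^2$ by the bounded-variance condition, and bound $\ee\norm{Z_t}^2$ by observing that $Z_t$ is the centered version of $\grad f(x_t,\xi_t) - \grad f(x_{t-1},\xi_t)$, so $\ee\norm{Z_t}^2 \le \ee\norm{\grad f(x_t,\xi_t) - \grad f(x_{t-1},\xi_t)}^2 \le L^2 \ee\norm{x_t - x_{t-1}}^2$ by mean-squared smoothness. Substituting $a_t = c\eta_{t-1}^2$ converts the $S_t$ contribution into the $2c^2\eta_{t-1}^3\sigma^2$ term.

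The main obstacle is that $x_t - x_{t-1}$ is now a \emph{projected} step rather than $-\eta_{t-1}d_{t-1}$, so $\norm{x_t - x_{t-1}}^2$ cannot be read off as $\eta_{t-1}^2\norm{d_{t-1}}^2$. To introduce $x_t^+ = \text{Proj}_{\mc{X}}(x_{t-1}-\eta_{t-1}\grad F(x_{t-1}))$ I would write $\norm{x_t - x_{t-1}}^2 \le 2\norm{x_t - x_t^+}^2 + 2\norm{x_t^+ - x_{t-1}}^2$ and invoke non-expansiveness of the Euclidean projection to get $\norm{x_t - x_t^+} = \norm{\text{Proj}_{\mc{X}}(x_{t-1}-\eta_{t-1}d_{t-1}) - \text{Proj}_{\mc{X}}(x_{t-1}-\eta_{t-1}\grad F(x_{t-1}))} \le \eta_{t-1}\norm{\epsilon_{t-1}}$. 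Hence $\ee\norm{Z_t}^2 \le 2L^2\eta_{t-1}^2\ee\norm{\epsilon_{t-1}}^2 + 2L^2\ee\norm{x_t^+ - x_{t-1}}^2$, which feeds the additional $4L^2\eta_{t-1}^2$ factor into the $\norm{\epsilon_{t-1}}^2$ coefficient and produces the $\norm{x_t^+-x_{t-1}}^2$ term. Collecting the three pieces and merging the two $\norm{\epsilon_{t-1}}^2$ coefficients into $(4L^2\eta_{t-1}^2 + 1)(1-a_t)^2/\eta_{t-1}$ yields the claimed inequality.
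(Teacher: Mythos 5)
Your proof is correct and takes essentially the same route as the paper's: the same decomposition $\epsilon_t = (1-a_t)\epsilon_{t-1} + a_t(\grad f(x_t,\xi_t)-\grad F(x_t)) + (1-a_t)(\grad f(x_t,\xi_t)-\grad f(x_{t-1},\xi_t)-\grad F(x_t)+\grad F(x_{t-1}))$, cross terms killed by Lemma~\ref{lemma:3}, the centered-second-moment bound on the last term combined with mean-squared smoothness, and the same projection non-expansiveness step $\norm{x_t - x_t^+}\leq \eta_{t-1}\norm{\epsilon_{t-1}}$ that yields the $(4L^2\eta_{t-1}^2+1)$ coefficient and the $\norm{x_t^+-x_{t-1}}^2$ term. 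No gaps; the argument matches the paper's proof of Lemma~\ref{lemma:5} step for step.
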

\begin{proof}
	First, observe that
	\begin{align}
	& \ee\L \norm{\grad f(x_t,\xi_t) - \grad f(x_{t-1},\xi_t) - \grad F(x_t) + \grad F(x_{t-1})}^2 \R\nonumber\\
	\leq & \ee\L \norm{\grad f(x_t,\xi_t) - \grad f(x_{t-1},\xi_t)}^2 + \norm{\grad F(x_t) - \grad F(x_{t-1})}^2\R\nonumber \\
	& - 2\ee\L \inner{\grad f(x_t,\xi_t) - \grad f(x_{t-1},\xi_t), \grad F(x_t) - \grad F(x_{t-1})} \R\nonumber\\
	\leq & \ee\L \norm{\grad f(x_t,\xi_t) - \grad f(x_{t-1},\xi_t)}^2 + \norm{\grad F(x_t) - \grad F(x_{t-1})}^2\R\nonumber \\
	& - 2\ee\L \ee\L \inner{\grad f(x_t,\xi_t) - \grad f(x_{t-1},\xi_t), \grad F(x_t) - \grad F(x_{t-1})}\mid \xi_1,\dots,\xi_{t-1} \R\R\nonumber\\
	=&\ee\L \norm{\grad f(x_t,\xi_t) - \grad f(x_{t-1},\xi_t)}^2 - \norm{\grad F(x_t) - \grad F(x_{t-1})}^2\R\nonumber\\
	\leq & \ee\L \norm{\grad f(x_t,\xi_t) - \grad f(x_{t-1},\xi_t)}^2\R\label{eqn:z6}.
	\end{align}
	By the definition of $\epsilon_t$, we have $\epsilon_t = d_t - \grad F(x_t) = \grad f(x_t,\xi_t) + (1-a_t)(d_{t-1}-\grad f(x_{t-1},\xi_t)) - \grad F(x_t)$. Therefore,
	\[
	\begin{aligned}
	\ee\L \frac{\left\|{\epsilon}_{t}\right\|^{2}}{\eta_{t-1}}\R =& \ee\L \frac{1}{\eta_{t-1}}\norm{\grad f(x_t,\xi_t) + (1-a_t)(d_{t-1}-\grad f(x_{t-1},\xi_t)) - \grad F(x_t)}^2 \R\\
	= & \ee \L \frac{1}{\eta_{t-1}} \| a_t (\grad f(x_t,\xi_t) - \grad F(x_t)) + (1-a_t)(d_{t-1}-\grad F(x_{t-1}))\right. \\
	&\quad  + (1-a_t) (\grad f(x_t,\xi_t) - \grad f(x_{t-1},\xi_t) - \grad F(x_t)+ \grad F(x_{t-1})) \|^2 \bigg]\\
	\leq & \mathbb{E}\left[2 c^{2} \eta_{t-1}^{3}\left\|\nabla f\left({x}_{t}, \xi_{t}\right)-\nabla F\left({x}_{t}\right)\right\|^{2}+\frac{1}{\eta_{t-1}}(1-a_t)^2\norm{\epsilon_{t-1}}^2 \right.\\
	& \quad + \frac{2}{\eta_{t-1}}\left(1-a_{t}\right)^{2}\left\|\nabla f\left({x}_{t}, \xi_{t}\right)-\nabla f\left({x}_{t-1}, \xi_{t}\right)-\nabla F\left({x}_{t}\right)+\nabla F\left({x}_{t-1}\right)\right\|^{2}\bigg],
	\end{aligned}
	\]
	where in the last step we used Lemma~\ref{lemma:3} and the simple fact that $\norm{x+y}^2 \leq 2\norm{x}^2+2\norm{y}^2$. Further applying \eqref{eqn:z6} and the assumption that $\ee\L \norm{\grad f(x_t,\xi_t) - \grad F(x_t)}^2 \R\leq \sigma^2$ leads to
	\[
	\begin{aligned}
	\ee\L \frac{\left\|{\epsilon}_{t}\right\|^{2}}{\eta_{t-1}}\R =&\mathbb{E}\left[2 c^{2} \eta_{t-1}^{3}\sigma^2+\frac{(1-a_t)^2}{\eta_{t-1}}\norm{\epsilon_{t-1}}^2  + \frac{2\left(1-a_{t}\right)^{2}}{\eta_{t-1}}\norm{\grad f(x_t,\xi_t) - \grad f(x_{t-1},\xi_t)}^2\right]\\
	\leq & \mathbb{E}\left[2 c^{2} \eta_{t-1}^{3}\sigma^2+\frac{(1-a_t)^2}{\eta_{t-1}}\norm{\epsilon_{t-1}}^2  + \frac{2\left(1-a_{t}\right)^{2}L^2}{\eta_{t-1}}\norm{x_t-x_{t-1}}^2\right]\\
	\leq & \mathbb{E}\left[2 c^{2} \eta_{t-1}^{3}\sigma^2+\frac{(1-a_t)^2}{\eta_{t-1}}\norm{\epsilon_{t-1}}^2  + \frac{4\left(1-a_{t}\right)^{2}L^2}{\eta_{t-1}}\l \norm{x_t-x_t^+}^2+ \norm{x_t^+-x_{t-1}}^2\r \right]\\
	\leq & \mathbb{E}\left[2 c^{2} \eta_{t-1}^{3}\sigma^2+\frac{(4L^2 \eta_{t-1}^2+1)(1-a_t)^2}{\eta_{t-1}}\norm{\epsilon_{t-1}}^2  + \frac{4\left(1-a_{t}\right)^{2}L^2}{\eta_{t-1}}\norm{x_t^+-x_{t-1}}^2 \right].
	\end{aligned}
	\]
	The first inequality is due to the $L$-smoothness of the function $f$. The second inequality again uses the fact that $\norm{x+y}^2 \leq 2\norm{x}^2+2\norm{y}^2$. The last step holds because of the non-expansiveness of the projection operator, that is,
	\[
	\norm{x_t - x_t^+} = \norm{\text{Proj}_{\mc{X}}(x_{t-1} - \eta_{t-1}d_{t-1}) - \text{Proj}_{\mc{X}}(x_{t-1}-\eta_{t-1}\grad F(x_{t-1}))}\leq \eta_{t-1}^2 \norm{\epsilon_{t-1}}^2.
	\]
	This completes the proof of the lemma. 
\end{proof}

Now, we are ready to present the convergence guarantee of Algorithm~\ref{alg:storm}. 

\vspace{.8em}
\noindent\textbf{Proposition~\ref{thm:storm_app}.} (Adapted from Theorem 2 in \citet{cutkosky2019momentum}).
	Suppose the conditions in Assumption~\ref{assumption:1} are satisfied. For any $b>0$, let $k = \frac{b\sigma^{\frac{2}{3}}}{L}, c = 32L^2 + \sigma^{2} /\left(7 L k^{3}\right)=L^{2}\left(32+1 /\left(7 b^{3}\right)\right), w=\max \left((4 L k)^{3}, 2 \sigma^{2},\left(\frac{c k}{4 L}\right)^{3}\right)=\sigma^{2} \max \left((4 b)^{3}, 2,\left(32 b+\frac{1}{7 b^{2}}\right)^{3} / 64\right)$, and $M = 16(F(x_1)-F^\star) + \frac{w^{1/3}\sigma^2}{2L^2 k} + \frac{k^3 c^2}{L^2}\ln(T+2)$. Then, the following convergence guarantee holds for Algorithm~\ref{alg:storm}:
	\[
	\ee\L\frac{1}{T}\sum_{t=1}^T  \norm{\frac{1}{\eta_t}(x_{t+1}^+-x_{t})}^2 \R \leq \frac{Mw^{1/3}}{Tk} + \frac{M\sigma^{2/3}}{T^{2/3}k}.
	\]
\begin{proof}
	First, define the Lyapunov function $\Phi_t = F(x_t) + \frac{1}{32L^2\eta_{t-1}}\norm{\epsilon_t}^2$. From Lemma~\ref{lemma:5}, we can derive that
	\[
	\begin{aligned}
	&\ee\L \frac{\norm{\epsilon_{t+1}}^2}{\eta_t} -  \frac{\norm{\epsilon_{t}}^2}{\eta_{t-1}}\R\\
	\leq& \mathbb{E}\left[2 c^{2} \eta_{t}^{3}\sigma^2+\frac{(4L^2 \eta_{t}^2+1)(1-a_{t+1})^2}{\eta_{t}}\norm{\epsilon_{t}}^2  + \frac{4\left(1-a_{t+1}\right)^{2}L^2}{\eta_{t}}\norm{x_{t+1}^+-x_{t}}^2 - \frac{\norm{\epsilon_{t}}^2}{\eta_{t-1}} \right]\\
	\leq & \mathbb{E}\left[\underbrace{2 c^{2} \eta_{t}^{3}\sigma^2}_{A_t}+ \underbrace{\l \frac{(4L^2 \eta_{t}^2+1)(1-a_{t+1})^2}{\eta_{t}} - \frac{1}{\eta_{t-1}}\r \norm{\epsilon_{t}}^2}_{B_t}  + \underbrace{\frac{4\left(1-a_{t+1}\right)^{2}L^2}{\eta_{t}}\norm{x_{t+1}^+-x_{t}}^2}_{C_t} \right]. 
	\end{aligned}
	\]
	The first two terms $A_t$ and $B_t$ are exactly the same as in the proof of Theorem 2 in \citet{cutkosky2019momentum}, and we refer to their results as follows:
	\[
	\sum_{t=1}^T A_t \leq 2k^3 c^2 \ln(T+2), \text{ and } \sum_{t=1}^T B_t \leq -28L^2 \sum_{t=1}^T \eta_t \norm{\epsilon_t}^2. 
	\]
	From $w\geq (4Lk)^3$, we know that $\eta_t \leq \frac{1}{4L}$. Further, since $a_{t+1}=c\eta_t^2$, we have that $a_{t+1}\leq \frac{ck}{4L w^{1/3}}\leq 1$ for all $t$, and hence $C_t\leq \frac{4L^2}{\eta_t}\norm{x_{t+1}^+ - x_t}^2$. Putting it all together, we obtain
	\begin{equation}
	\frac{1}{32L^2} \sum_{t=1}^T \l \frac{\norm{\epsilon_{t+1}}^2}{\eta_t} -  \frac{\norm{\epsilon_{t}}^2}{\eta_{t-1}}\r \leq \frac{k^3 c^2}{16L^2} \ln(T+2) + \sum_{t=1}^T \l \frac{1}{8\eta_t}\norm{x_{t+1}^+-x_{t}}^2 - \frac{7\eta_t}{8}\norm{\epsilon_t}^2 \r. \label{eqn:z5}
	\end{equation}
	From Lemma~\ref{lemma:1}, we know that
	\[
	\mathbb{E}\left[\Phi_{t+1}-\Phi_{t}\right] \leq \mathbb{E}\left[-\frac{3}{16\eta_{t}} \norm{x_{t+1}^+-x_{t}}^2  +\frac{7 \eta_{t}}{8}\left\|{\epsilon}_{t}\right\|^{2}+\frac{1}{32 L^{2} \eta_{t}}\left\|{\epsilon}_{t+1}\right\|^{2}-\frac{1}{32 L^{2} \eta_{t-1}}\left\|{\epsilon}_{t}\right\|^{2}\right].
	\]
	Summing over $t$ from $1$ to $T$ and then applying \eqref{eqn:z5}, we obtain
	\[
	\begin{aligned}
	\mathbb{E}\left[\Phi_{T+1}-\Phi_{1}\right] \leq& \sum_{t=1}^T \mathbb{E}\left[-\frac{3}{16\eta_{t}} \norm{x_{t+1}^+-x_{t}}^2  +\frac{7 \eta_{t}}{8}\left\|{\epsilon}_{t}\right\|^{2}+\frac{1}{32 L^{2} \eta_{t}}\left\|{\epsilon}_{t+1}\right\|^{2}-\frac{1}{32 L^{2} \eta_{t-1}}\left\|{\epsilon}_{t}\right\|^{2}\right]\\
	\leq & \ee\L \frac{k^3 c^2}{16L^2} \ln(T+2) - \sum_{t=1}^T  \frac{1}{16\eta_t}\norm{x_{t+1}^+-x_{t}}^2 \R. 
	\end{aligned}
	\]
	Rearranging the terms leads to
	\[
	\begin{aligned}
	\ee\L\sum_{t=1}^T  \frac{1}{\eta_t}\norm{x_{t+1}^+-x_{t}}^2 \R \leq&\ee\L 16(\Phi_{1} - \Phi_{T+1}) + \frac{k^3 c^2}{L^2}\ln(T+2) \R\\
	\leq & 16(F(x_1)-F^\star) + \frac{1}{2L^2 \eta_0}\ee[\norm{\epsilon_1}^2] + \frac{k^3 c^2}{L^2}\ln(T+2)\\
	\leq & 16(F(x_1)-F^\star) + \frac{w^{1/3}\sigma^2}{2L^2 k} + \frac{k^3 c^2}{L^2}\ln(T+2),
	\end{aligned}
	\]
	where the last step holds due to the definition that $\eta_0 = \frac{k}{w^{1/3}}$. Since $\eta_t$ is decreasing in $t$, 
	\[
	\ee\L\sum_{t=1}^T  \frac{1}{\eta_t}\norm{x_{t+1}^+-x_{t}}^2 \R = \ee\L\sum_{t=1}^T  \eta_t\norm{\frac{1}{\eta_t}(x_{t+1}^+-x_{t})}^2 \R \geq  \eta_T\ee\L\sum_{t=1}^T  \norm{\frac{1}{\eta_t}(x_{t+1}^+-x_{t})}^2 \R. 
	\]
	Dividing both sides by $T\eta_T$ and recalling the definition $M = 16(F(x_1)-F^\star) + \frac{w^{1/3}\sigma^2}{2L^2 k} + \frac{k^3 c^2}{L^2}\ln(T+2)$, we obtain
	\[
	\ee\L\frac{1}{T}\sum_{t=1}^T  \norm{\frac{1}{\eta_t}(x_{t+1}^+-x_{t})}^2 \R \leq \frac{M}{T\eta_T} = \frac{M(w+\sigma^2 T)^3}{Tk} \leq \frac{Mw^{1/3}}{Tk} + \frac{M\sigma^{2/3}}{T^{2/3}k},
	\]
	where in the last step we used the fact that $(a+b)^{1/3}\leq a^{1/3} + b^{1/3}$. 
\end{proof}

\section{Simulations}\label{app:simulations}

In this section, we demonstrate the empirical performances of our algorithms, and compare their performances with various benchmarks. We evaluate Algorithm~\ref{alg:storm} (SGA) on a classic matrix team task~\citep{claus1998dynamics}, and both Algorithms~\ref{alg:sbv} and~\ref{alg:storm} on two Markov games, namely GoodState and BoxPushing~\citep{seuken2007improved}.

\subsection{Markov Teams}
We use a classic matrix team example from the literature~\citep{claus1998dynamics,lauer2000algorithm}, where a team problem is a special case of potential games. Its reward table is reproduced in Table~\ref{tbl:1}, where agent 1 is the row player, and agent 2 is the column player, both being maximizers. The action spaces of the agents are $\mc{A}_1 = \{a_0,a_1,a_2\}$ and $\mc{B}_2 =\{b_0,b_1,b_2\}$. There are three deterministic Nash equilibria in this team, among which two of them, $(a_0,b_0)$ and $(a_2,b_2)$, are team-optimal. It would be preferred that the agents not only learn a NE, but also settle on the same NE out of the two team-optimal ones.

\begin{table}[!htb]
	\def\arraystretch{1.3}
	\begin{minipage}[t]{.5\linewidth}
		\centering
		\begin{tabular}[b]{c|ccc}
			& $b_0$ & $b_1$ & $b_2$\\
			\hline
			$a_0$ & 10 & 0 & -10\\
			$a_1$ & 0  & 2 & 0\\
			$a_2$ & -10 & 0 & 10
		\end{tabular}
		\caption{Reward table for the matrix team.}\label{tbl:1}
	\end{minipage}\hfill 
	\begin{minipage}[t]{.5\linewidth}
		\centering
		\begin{tabular}[b]{c|cc}
			$s_0$ & $b_0$ & $b_1$ \\
			\hline
			$a_0$ & -2 & 5 \\
			$a_1$ & 2 & -2
		\end{tabular}\qquad
		\begin{tabular}[b]{c|cc}
			$s_1$ & $b_0$ & $b_1$ \\
			\hline
			$a_0$ & 0 & 0 \\
			$a_1$ & 0 & 0
		\end{tabular}
		\caption{Reward tables for GoodState.}\label{tbl:2}
	\end{minipage} 
\end{table}

We run Algorithm~\ref{alg:storm} on this task for $T = 5000$ rounds, and we set the step size $\eta_t = 10^{-4}$ and the momentum parameter $a_{t} = 0.5$. We evaluate our algorithm in terms of both the rewards it obtained and its $L^2$ equilibrium gap. Specifically, we define the $L^2$ equilibrium gap as the $L^2$ distance to a equilibrium point. For a pair of strategies $(\mu,\nu)\in\Delta(\mc{A}_1)\times \Delta(\mc{A}_2)$, its $L^2$ equilibrium gap is defined as:
\begin{equation}\label{eqn:gap}
\text{Gap}(\mu,\nu) \defeq \norm{\mu- \mu^\dagger(\nu)}_2^2 + \norm{\nu-\nu^\dagger(\mu) }_2^2,
\end{equation}
where $\nu^\dagger(\mu)$ (resp. $\mu^\dagger(\nu)$) is the best response with respect to $\mu$ (resp. $\nu$), and $\norm{\cdot}_2$ is the $L^2$ norm. The simulation results are presented in Figure~\ref{fig:1}. All results are averaged over $20$ runs. Notice that we evaluate two sets of strategy trajectories: The ``Last Iterate'' strategy $(\mu_t,\nu_t)$ is the strategy pair used by Algorithm~\ref{alg:storm} at round $t$, while the ``Average'' strategy  is to uniformly draw a random time index $\tau$ from $\{1,\dots,t\}$ and run the strategy pair $(\mu_\tau,\nu_\tau)$. Notice that in Theorem~\ref{thm:finite}, our theoretical guarantees only hold in expectation, which correspond to the ``Average'' strategies.

\begin{figure*}[!htbp]
	\centering
	\subfigure[Equilibrium gap]{\label{subfig:1}\includegraphics[width=0.35\textwidth]{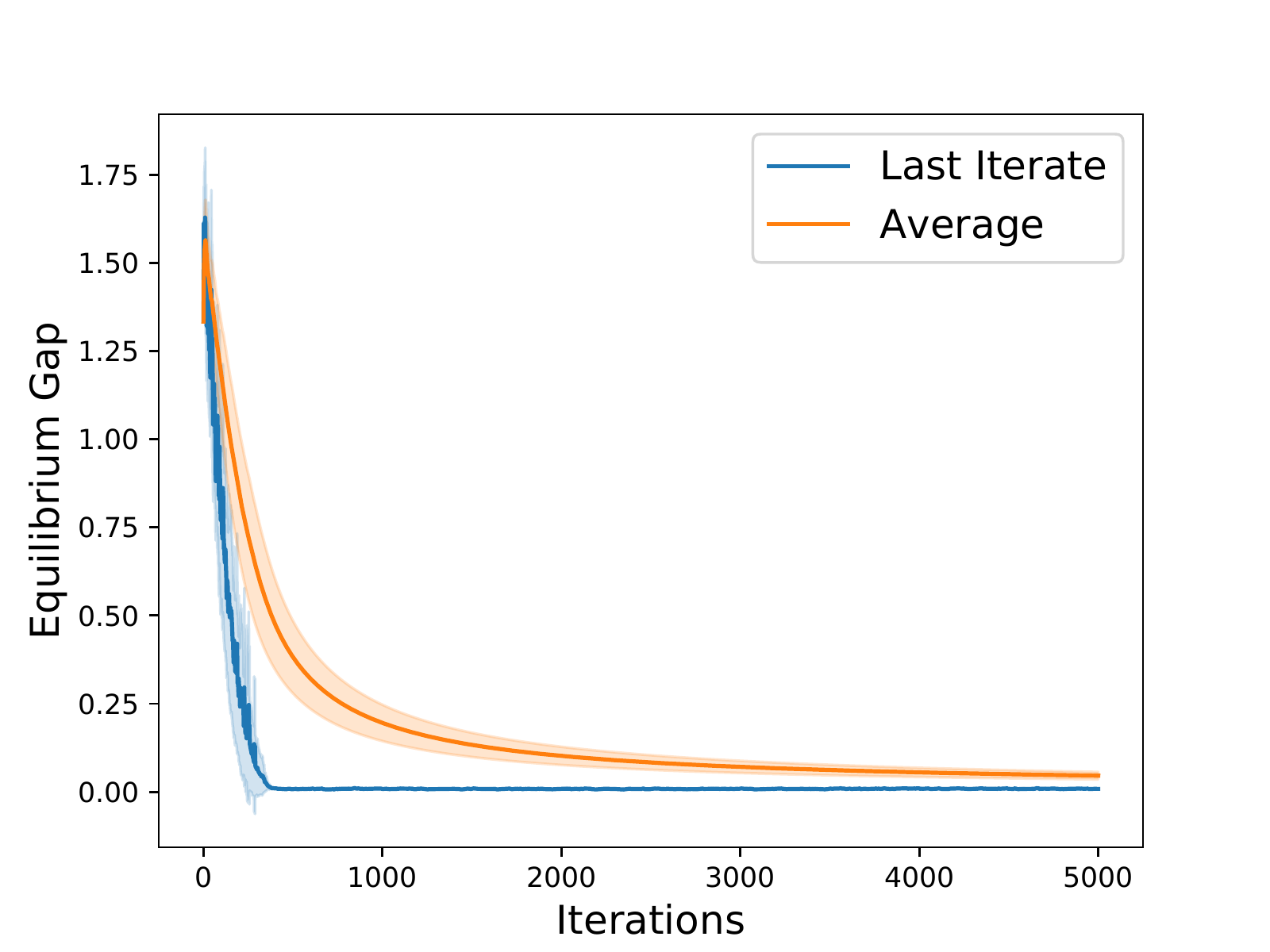}}
	\subfigure[Reward]{\label{subfig:2}\includegraphics[width=0.35\textwidth]{matrix1_reward}}
	\caption{(a) $L^2$ equilibrium gaps and (b) rewards of Algorithm~\ref{alg:storm} on the matrix team task given in Table~\ref{tbl:1}. ``Last Iterate'' denotes the actual strategy trajectory, while ``Average'' represents the uniformly sampled strategy pair. Shaded areas denote the standard deviations of the equilibrium gap or reward. All results are averaged over 20 runs. }
	\label{fig:1}
\end{figure*}

From Figure~\ref{subfig:1}, we can see that the equilibrium gap of both ``Last Iterate'' and ``Average'' converge to zero, indicating that they indeed find an equilibrium as the number of iterations increase. The convergence of ``Average'' slightly lags behind ``Last Iterate'' because ``Average'' essentially takes the time-averaged value of the actual trajectories, which requires some time to reflect the convergence behavior. A more promising result is that from Figure~\ref{subfig:2}, we can see that the rewards collected by ``Last Iterate'' and ``Average'' converge to values close to $9$. This suggests that Algorithm~\ref{alg:storm} not only finds a NE in this specific task, but actually converges to a team-optimal equilibrium most of the time. It does not exactly reach the team-optimal value of $10$ because it still converges to non-team-optimal NE at a rather low frequency. 

\subsection{Markov Games}

We further evaluate both Algorithm~\ref{alg:sbv} and Algorithm~\ref{alg:storm} on two Markov games, namely GoodState and BoxPushing~\citep{seuken2007improved}. The GoodState task is a simple Markov team problem inspired by~\citet{yongacoglu2019learning}. It has two states $\mc{S} = \{s_0,s_1\}$, where $s_0$ is the ``good state'' and $s_1$ is the ``bad state''. Each agent has two candidate actions $\mc{A}_1 = \{a_0,a_1\}$ and $\mc{A}_2 = \{b_0,b_1\}$. The reward function at each state is presented in Table~\ref{tbl:2}. Specifically, at state $s_1$, both agents get a reward of $0$ no matter what actions they select, while at state $s_0$, they will obtain a strictly positive reward if they either take the joint action $(a_0,b_1)$ or the one $(a_1,b_0)$. The state transition function is defined as follows:
\[
P_h(s_0 \mid s_0 \text{ or } s_1, a_0,b_1) = 1-\epsilon,\ P_h(s_1 \mid s_0 \text{ or } s_1, \text{ not } (a_0,b_1)) = 1-\epsilon,\ \forall h \in [H],
\]
and all the other transitions happen with probability $\epsilon$. Intuitively, no matter which state the agents are in, they will transition to the good state $s_0$ with a high probability $1-\epsilon$ at the next step as long as they select the action pair $(a_0,b_1)$. All the other joint actions will lead to the bad state $s_1$ with a high probability $1-\epsilon$. The task hence rewards the agents who learn to consistently play the action pair $(a_0,b_1)$. 

We run our two algorithms on this example for $K = 50000$ episodes, each episode containing $H=10$ steps. We set the transition probability $\epsilon=0.1$. For Algorithm~\ref{alg:sbv}, the step size is set to be $\eta_i = \frac{1}{5\sqrt{A_i \check{T}_h(s_h)}}$, and the implicit exploration parameter is $\gamma_i = \eta_i / 2$. For Algorithm~\ref{alg:storm}, the step size is set to be $\eta_t = 10^{-4}$ and the momentum parameter is $a_{t} = 0.5$. 

The BoxPushing task~\citep{seuken2007improved} is a classic DecPOMDP problem with with $\sim$100 states. It has two 2 agents, where each agent has 4 candidate actions. In the original BoxPushing problem, each agent only has a partial observation of the state. We make proper modifications to the task so that the agents can fully observe the state information and fit in our problem formulation. For Algorithm~\ref{alg:sbv} on this task, the step size is set to be $\eta_i = \frac{1}{20\sqrt{A_i \check{T}_h(s_h)}}$, and the implicit exploration parameter is $\gamma_i = \eta_i / 2$. For Algorithm~\ref{alg:storm}, the step size is set to be $\eta_t = 5\times  10^{-4}$ and the momentum parameter is $a_{t} = 0.1$.

\begin{figure*}[!t]
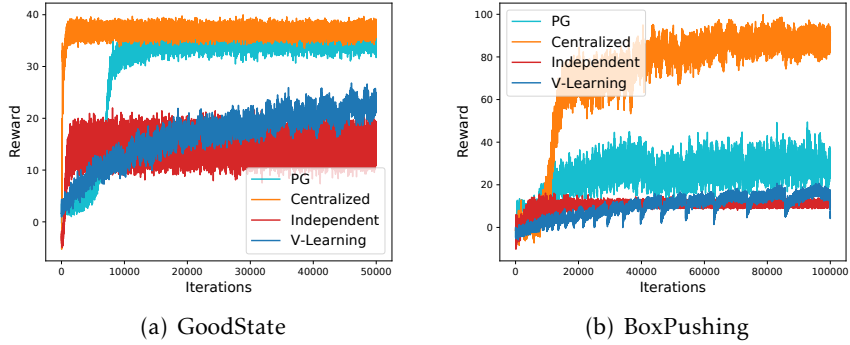

	\centering
	\subfigure[GoodState]{\includegraphics[width=0.35\textwidth]{goodstate_reward}}
	\subfigure[BoxPushing]{\includegraphics[width=0.35\textwidth]{boxpushing_reward}}
	\caption{Rewards of Algorithms~\ref{alg:sbv} and~\ref{alg:storm} on the (a) GoodState and (b) BoxPusing tasks. ``V-Learning''  and ``PG'' denote the policies at the current iterate $t$ of Algorithms~\ref{alg:sbv} and~\ref{alg:storm}, respectively.  ``Centralized'' is an oracle that can control the actions of the agents in a centralized way. In ``Independent'', each agent runs a na\"{i}ve single-agent Q-learning algorithm independently, by taking greedy actions w.r.t its local Q-function estimates. All results are averaged over 20 runs. }	
	\label{fig:2}
\end{figure*}

We compare our algorithms with two meaningful benchmarks. The first benchmark is a ``Centralized'' oracle. This oracle acts as a centralized coordinator that can control the actions of both agents. Such an oracle essentially converts the multi-agent task into a single-agent RL problem. The (randomized) action space of the centralized agent is $\Delta(\mc{A}_1\times\mc{A}_2)$, which is larger than the $\Delta(\mc{A}_1)\times \Delta(\mc{A}_2)$ space that we allow for Algorithm~\ref{alg:storm} in our decentralized approach.  ``Centralized'' clearly upper bounds the performances that our decentralized learning algorithms can possibly achieve in this task. In our simulations, we implement ``Centralized'' by using a Hoeffding-based variant of a state-of-the-art single-agent RL algorithm UCB-ADVANTAGE~\citep{zhang2020model}. This algorithm has achieved a tight sample complexity bound for single-agent RL in theory, and has also demonstrated remarkable empirical performances in practice~\citep{mao2020near}. Such an algorithm could provide a strong performance upper bound in our task. The second benchmark we consider is the na\"{i}ve ``Independent'' Q-learning. Specifically, we let each agent run a single-agent Q-learning algorithm independently, without being aware of the existence of the other agent or the structure of the game. Each agent maintains an local optimistic Q-function, and takes greedy actions with respect to such optimistic estimates, without taking into account the other agents' actions. Since the agents update their policies simultaneously, the stationarity assumption of the environment in single-agent RL quickly collapses, and the theoretical guarantees for single-agent Q-learning no longer hold. This is also reminiscent of the ``independent learner'' approach proposed in an early work~\citep{claus1998dynamics} for learning in Markov teams.  We believe that such a benchmark could provide meaningful intuitions about the consequences of not taking care of the multi-agent structure in decentralized methods. In our simulations, we implement such a benchmark by letting each agent running a variant of the single-agent UCB-ADVANTAGE~\citep{zhang2020model} algorithm independently, where the (randomized) action spaces of the agents are $\Delta(\mc{A}_1)$ and $\Delta(\mc{A}_2)$.

Figure~\ref{fig:2} illustrates the performances of our algorithms and the two benchmark methods in terms of the collected rewards, where ``V-Learning''  and ``PG'' denote the policies at the current iterate $t$ of Algorithms~\ref{alg:sbv} and~\ref{alg:storm}, respectively. Notice that the \emph{actual} policy trajectories of both algorithms numerically converge and achieve high rewards. This is more encouraging than our theoretical guarantees, because for Algorithm~\ref{alg:sbv}, our Theorem~\ref{thm:finite} only holds for a ``certified'' output policy but not the last-iterate policy. Further, both of our algorithms outperform the ``Independent'' learning benchmark on the two tasks. In the GoodState problem, Algorithm~\ref{alg:storm} even approaches the performance of the ``Centralized'' oracle. On the other hand, the ``Independent'' benchmark converges, albeit faster, to a clearly suboptimal value. This reiterates that the na\"ive idea of independent learning does not work well for MARL in general, and a careful treatment of the game structure (like our adversarial bandit subroutine) is necessary. Finally, the implemented algorithms take much fewer samples to converge than our theoretical results suggested. This indicates that the theoretical bounds might be overly conservative, and our algorithms could converge much faster in practice.

\end{document}